\documentclass[letterpaper, 10 pt, conference]{ieeeconf}

\IEEEoverridecommandlockouts                             
\overrideIEEEmargins

\usepackage{comment}
\usepackage{siunitx}
\usepackage{relsize}
\usepackage{ifthen}
\usepackage[colorinlistoftodos]{todonotes}

\usepackage[caption=false]{subfig}

\usepackage[vlined,ruled,linesnumbered]{algorithm2e}
\usepackage{graphics} %
\usepackage{rotating}
\usepackage{color}
\usepackage{enumerate}
\usepackage[T1]{fontenc}
\usepackage{psfrag}
\usepackage{epsfig} %
\usepackage{booktabs}
\usepackage{graphicx,url}
\usepackage{multirow}
\usepackage{array}
\usepackage{latexsym}
\usepackage{amsfonts}
\usepackage{amsmath}
\usepackage{amssymb}
\usepackage{xstring}
\usepackage[noend]{algorithmic}
\usepackage{multirow}
\usepackage{xcolor}
\usepackage{prettyref}
\usepackage{flexisym}
\usepackage{bigdelim}
\usepackage{breqn} %
\usepackage{listings}

\usepackage{enumitem}
\usepackage{xspace}
\usepackage{bm}
\graphicspath{{./figures/}}
\usepackage{tikz}
\usetikzlibrary{matrix,calc}

\usepackage{mdwlist}

\makecompactlist{itemize}{stditemize}

\newrefformat{prob}{Problem\,\ref{#1}}
\newrefformat{def}{Definition\,\ref{#1}}
\newrefformat{sec}{Section\,\ref{#1}}
\newrefformat{sub}{Section\,\ref{#1}}
\newrefformat{prop}{Proposition\,\ref{#1}}
\newrefformat{app}{Appendix\,\ref{#1}}
\newrefformat{alg}{Algorithm\,\ref{#1}}
\newrefformat{cor}{Corollary\,\ref{#1}}
\newrefformat{thm}{Theorem\,\ref{#1}}
\newrefformat{lem}{Lemma\,\ref{#1}}
\newrefformat{fig}{Fig.\,\ref{#1}}
\newrefformat{tab}{Table\,\ref{#1}}

\newtheorem{theorem}{Theorem}

\newtheorem{definition}[theorem]{Definition}
\newtheorem{proposition}[theorem]{Proposition}
\newtheorem{remark}[theorem]{Remark}

\newcommand{\bdmath}{\begin{dmath}}
\newcommand{\edmath}{\end{dmath}}
\newcommand{\beq}{\begin{equation}}
\newcommand{\eeq}{\end{equation}}
\newcommand{\bdm}{\begin{displaymath}}
\newcommand{\edm}{\end{displaymath}}
\newcommand{\bea}{\begin{eqnarray}}
\newcommand{\eea}{\end{eqnarray}}
\newcommand{\beal}{\beq \begin{array}{ll}}
\newcommand{\eeal}{\end{array} \eeq}
\newcommand{\beas}{\begin{eqnarray*}}
\newcommand{\eeas}{\end{eqnarray*}}
\newcommand{\ba}{\begin{array}}
\newcommand{\ea}{\end{array}}
\newcommand{\bit}{\begin{itemize}}
\newcommand{\eit}{\end{itemize}}
\newcommand{\ben}{\begin{enumerate}}
\newcommand{\een}{\end{enumerate}}

\newcommand{\calA}{{\cal A}}
\newcommand{\calB}{{\cal B}}
\newcommand{\calC}{{\cal C}}

\newcommand{\calE}{{\cal E}}

\newcommand{\calG}{{\cal G}}

\newcommand{\calN}{{\cal N}}

\newcommand{\calS}{{\cal S}}

\newcommand{\calV}{{\cal V}}

\newcommand{\calY}{{\cal Y}}

\newcommand{\etal}{\emph{et~al.}\xspace}

\newcommand{\eg}{\emph{e.g.,}\xspace}
\newcommand{\ie}{\emph{i.e.,}\xspace}
\newcommand{\myParagraph}[1]{{\bf #1.}\xspace}

\newcommand{\M}[1]{{\bm #1}} %
\renewcommand{\boldsymbol}[1]{{\bm #1}}

\newcommand{\hide}[1]{}
\newcommand{\wrt}{w.r.t.\xspace}

\newcommand{\hiddenText}{{\color{gray} hidden text.}}
\newcommand{\hideWithText}[1]{\hiddenText}

\newcommand{\dist}{\mathbf{dist}}

\DeclareMathOperator*{\argmin}{arg\,min}

\newcommand{\norm}[1]{\left\| #1 \right\|}

\newcommand{\tran}{^{\mathsf{T}}}

\newcommand{\diag}[1]{\mathrm{diag}\left(#1\right)}

\newcommand{\zero}{{\mathbf 0}}
\newcommand{\eye}{{\mathbf I}}

\newcommand{\matTwo}[1]{\left[\begin{array}{cc}  #1  \end{array}\right]}

\newcommand{\Real}[1]{ { {\mathbb R}^{#1} } }

\newcommand{\SOthree}{\ensuremath{\mathrm{SO}(3)}\xspace}

\newcommand{\SO}[1]{\ensuremath{\mathrm{SO}(#1)}\xspace}

\newcommand{\MM}{\M{M}}

\newcommand{\MU}{\M{U}}
\newcommand{\MR}{\M{R}}
\newcommand{\MS}{\M{S}}
\newcommand{\MI}{\M{I}}
\newcommand{\MV}{\M{V}}

\newcommand{\va}{\boldsymbol{a}} 
\newcommand{\vh}{\boldsymbol{h}} 
\newcommand{\vb}{\boldsymbol{b}}

\newcommand{\vf}{\boldsymbol{f}}

\newcommand{\vn}{\boldsymbol{n}}

\newcommand{\vp}{\boldsymbol{p}}

\newcommand{\vu}{\boldsymbol{u}}

\newcommand{\vt}{\boldsymbol{t}}
\newcommand{\vxx}{\boldsymbol{x}} 
\newcommand{\vy}{\boldsymbol{y}}

\newcommand{\vdelta}{\boldsymbol{\delta}}

\newcommand{\vnu}{\boldsymbol{\nu}}

\newcommand{\vepsilon}{\boldsymbol{\epsilon}}

\newcommand{\scenario}[1]{{\smaller \sf#1}\xspace}

\newcommand{\blue}[1]{{\color{blue}#1}}

\newcommand{\red}[1]{{\color{red}#1}}

\newcommand{\linkToPdf}[1]{\href{#1}{\blue{(pdf)}}}
\newcommand{\linkToPpt}[1]{\href{#1}{\blue{(ppt)}}}
\newcommand{\linkToCode}[1]{\href{#1}{\blue{(code)}}}
\newcommand{\linkToWeb}[1]{\href{#1}{\blue{(web)}}}
\newcommand{\linkToVideo}[1]{\href{#1}{\blue{(video)}}}
\newcommand{\linkToMedia}[1]{\href{#1}{\blue{(media)}}}
\newcommand{\award}[1]{\xspace} %

\usepackage{capt-of}

\newcommand{\HYedit}[1]{#1\xspace}
\newcommand{\myGrayMath}[1]{\text{{\color{gray}(#1)}}}

\newcommand{\barc}{\bar{c}}
\newcommand{\barcsq}{\barc^2}
\newcommand{\vm}{\boldsymbol{m}}

\newcommand{\hatmap}[1]{\left[ #1 \right]_{\times}}
\newcommand{\usphere}[1]{\mathbb{S}^{#1}}

\newcommand{\nchoosek}[2]{\left( \substack{#1 \\ #2}\right)}
\newcommand{\TLS}{\scenario{TLS}}
\newcommand{\GNC}{\scenario{GNC}}
\newcommand{\BnB}{\scenario{BnB}}

\newcommand{\ransac}{\scenario{RANSAC}}

\newcommand{\Rgt}{\MR^\circ}

\newcommand{\measured}{\vy}
\newcommand{\nrMeasurements}{N}

\newcommand{\bracket}[1]{\left[ #1 \right]}
\newcommand{\parentheses}[1]{\left( #1 \right)}
\newcommand{\cbrace}[1]{\left\{ #1 \right\}}

\renewcommand{\norm}[1]{\left\| #1 \right\|}

\newcommand{\omitted}[1]{}

\newcommand{\bmat}{\left[ \begin{array}}
\newcommand{\emat}{\end{array}\right]}

\newcommand{\teaserpp}{\scenario{TEASER++}}
\newcommand{\robinteaserpp}{\scenario{TEASER(ROBIN)++}}
\newcommand{\robin}{\scenario{ROBIN}}
\newcommand{\robinstar}{$\text{\scenario{ROBIN}}^\star$\xspace}
\newcommand{\robinstargnc}{$\text{\scenario{ROBIN}}^\star  \text{\scenario{+GNC}}$\xspace}
\newcommand{\robingnc}{\robin\scenario{+}\GNC}
\newcommand{\leegeodesic}{\scenario{Lee-Geodesic}}
\newcommand{\leechordal}{\scenario{Lee-Chordal}}
\newcommand{\FGR}{\scenario{FGR}}
\newcommand{\GoICP}{\scenario{GoICP}}
\newcommand{\ransaconemin}{\scenario{RANSAC1min}}
\newcommand{\ransactenk}{\scenario{RANSAC10K}}
\newcommand{\robinstarfgr}{$\text{\scenario{ROBIN}}^\star  \text{\scenario{+FGR}}$\xspace}
\newcommand{\robinstarhorn}{$\text{\scenario{ROBIN}}^\star  \text{\scenario{+HORN}}$\xspace}
\newcommand{\robinfgr}{$\text{\scenario{ROBIN}}  \text{\scenario{+FGR}}$\xspace}
\newcommand{\robinhorn}{$\text{\scenario{ROBIN}}  \text{\scenario{+HORN}}$\xspace}

\newcommand{\core}[1]{${#1}$-core}

\newrefformat{eq}{Eq.\,\ref{#1}}
\newcommand{\subMeas}[1]{\calS} %

\newcommand{\barvxx}{\bar{\vxx}}
\newcommand{\domainX}{\mathbb{X}}

\newcommand{\compatible}{compatible\xspace}

\newcommand{\compatibility}{compatibility\xspace}
\newcommand{\Compatibility}{Compatibility\xspace}

\newcommand{\qed}{{\hfill $\square$}}

\renewcommand{\dist}[2]{\text{dist}_{\SOthree}\left(#1,#2\right)}
\newcommand{\distsphere}[2]{\text{dist}_{\usphere{2}}\left(#1,#2\right)}
\newcommand{\abs}[1]{\left|#1\right|}
\newcommand{\newnoise}{\vdelta}
\newcommand{\crossratio}{\tau}

\newcommand{\bunny}{\scenario{Bunny}}
\newcommand{\teddyBear}{\scenario{TeddyBear}}
\newcommand{\homebrewedDB}{\scenario{HomebrewedDB}}

\newcommand{\vertices}{vertices\xspace}
\newcommand{\vertex}{vertex\xspace}

\newcommand{\robinLong}{Reject Outliers Based on INvariants} %
\newcommand{\invariant}{invariant\xspace}
\newcommand{\invariants}{invariants\xspace}
\newcommand{\Invariant}{Invariant\xspace}

\newcommand{\Exp}{\text{Exp}}
\usepackage{flushend}
\usepackage{tikz}

\usepackage[bookmarks=true]{hyperref}
\usepackage{adjustbox}
\usepackage[normalem]{ulem}

 \newcommand{\isExtended}[2]{#1}
 \newcommand{\suppMaterial}{Appendix\xspace}
 \newcommand{\suppMaterialLong}{\suppMaterial\xspace}

\newcommand{\removeRed}[1]{}

\title{\LARGE \bf
 \robin: a Graph-Theoretic Approach to  Reject Outliers \\  in Robust Estimation using Invariants 
}

\author{Jingnan Shi, Heng Yang, Luca Carlone
\thanks{J.\,Shi, H.\,Yang, and L.\,Carlone are with the Laboratory for
Information \& Decision Systems (LIDS), Massachusetts Institute of Technology, Cambridge, MA 02139, USA, Email: 
{\sf \{jnshi,hankyang,lcarlone\}@mit.edu}}
\thanks{This work was partially funded by ARL DCIST CRA W911NF-17-2-0181, 
ONR RAIDER N00014-18-1-2828, Lincoln Laboratory's Resilient Perception in Degraded Environments program, and NSF CAREER award ``Certifiable Perception for Autonomous Cyber-Physical Systems''.}
}

\begin{document}

\maketitle

\begin{tikzpicture}[overlay, remember picture]
\path (current page.north east) ++(-2.2,-0.4) node[below left] {
This paper has been accepted for publication in the 2021 IEEE International Conference on Robotics and Automation.
};
\end{tikzpicture}
\begin{tikzpicture}[overlay, remember picture]
\path (current page.north east) ++(-6.5,-0.8) node[below left] {
Please cite the paper as: J. Shi, H. Yang, and L. Carlone,
};
\end{tikzpicture}
\begin{tikzpicture}[overlay, remember picture]
\path (current page.north east) ++(-3.1,-1.2) node[below left] {
``ROBIN: a Graph-Theoretic Approach to Reject Outliers in Robust Estimation using Invariants,''
};
\end{tikzpicture}
\begin{tikzpicture}[overlay, remember picture]
\path (current page.north east) ++(-4.5,-1.6) node[below left] {
    in \emph{IEEE International Conference on Robotics and Automation (ICRA)}, 2021.
};
\end{tikzpicture}

\begin{abstract}
Many estimation problems in robotics, computer vision, and learning 
require estimating unknown quantities in the face of outliers. 
Outliers are typically the result of incorrect data association or feature matching, and it is not uncommon to have problems 
where more than 90\% of the measurements used for estimation are outliers.
While current approaches for robust estimation (\eg~\ransac or graduated non-convexity) %
 are able to deal with moderate amounts of outliers, they fail to produce accurate estimates in the 
presence of many outliers. This paper develops an approach to prune outliers.
First, we develop a theory of \emph{invariance} that allows us to quickly check if a subset of 
measurements are mutually \compatible without explicitly solving the corresponding estimation problem.
Second, we develop a graph-theoretic framework, where measurements are modeled as \vertices and mutual compatibility is 
captured by edges in a graph. We generalize existing results showing that the 
inliers form a clique in this \emph{\compatibility graph} and 
 typically belong to the \emph{maximum clique}. %
We  also show that in practice the \emph{maximum k-core} of the \compatibility graph provides an approximation of the maximum clique, while 
being much faster to compute in large problems.
The combination of these two contributions leads to \robin, our approach to \robinLong, which 
 allows us to quickly prune outliers in generic estimation problems. %
 We demonstrate \robin in four geometric perception problems and show  
 it boosts robustness of existing solvers %
(making them robust to more than 95\% outliers), while running in milliseconds in large problems.
\end{abstract}

\section{Introduction}
\label{sec:intro}

Robust estimation is concerned with estimating an unknown variable $\vxx$ given noisy and potentially corrupted measurements.
For instance, in 3D perception problems, the $\vxx$ may be the pose of an unknown object (\eg a car) while the measurements are features of the object (\eg wheels, headlights) detected in the sensor data (\eg images, lidar scans)~\cite{Yang20cvpr-shapeStar,Pavlakos17icra-semanticKeypoints}. In Simultaneous Localization and Mapping (SLAM),  $\vxx$ might denote the trajectory of the robot and the location of landmarks. In machine learning, 
the $\vxx$ can be a description of a model regressing the data.
In all these problems, one has to estimate the state $\vxx \in \domainX$ (where $\domainX$ is the domain of $\vxx$, for instance the set of 3D poses) given measurements $\measured_i$ and a measurement model $\vh_i$ that describes how the 
measurements are related to the unknown state $\vxx$:
\beq
\label{eq:measurements}
\measured_i = \vh_i(\vxx,\vepsilon_i), \qquad i=1,\ldots,\nrMeasurements
\eeq
where $\vepsilon_i$ denotes the measurement noise. 
In practical applications, $\measured_i$'s are the result of the preprocessing of raw sensor data: for instance, 
in 3D perception problems, they are the output of image-based or lidar-based feature detection and matching. 
This preprocessing, often referred to as {the} \emph{perception front-end}, is prone to producing many corrupted measurements (\ie \emph{outliers}) 
for which the error $\vepsilon_i$ is so large that $\measured_i$ becomes uninformative towards the estimation of $\vxx$.
For instance, traditional and deep learning techniques for feature matching in object pose estimation problems~\cite{Bustos18pami-GORE,Yang20tro-teaser} are not uncommon to produce more than 90\% outliers.

\begin{figure}
\vspace{-4mm}
\centering
\includegraphics[width=0.85\columnwidth]{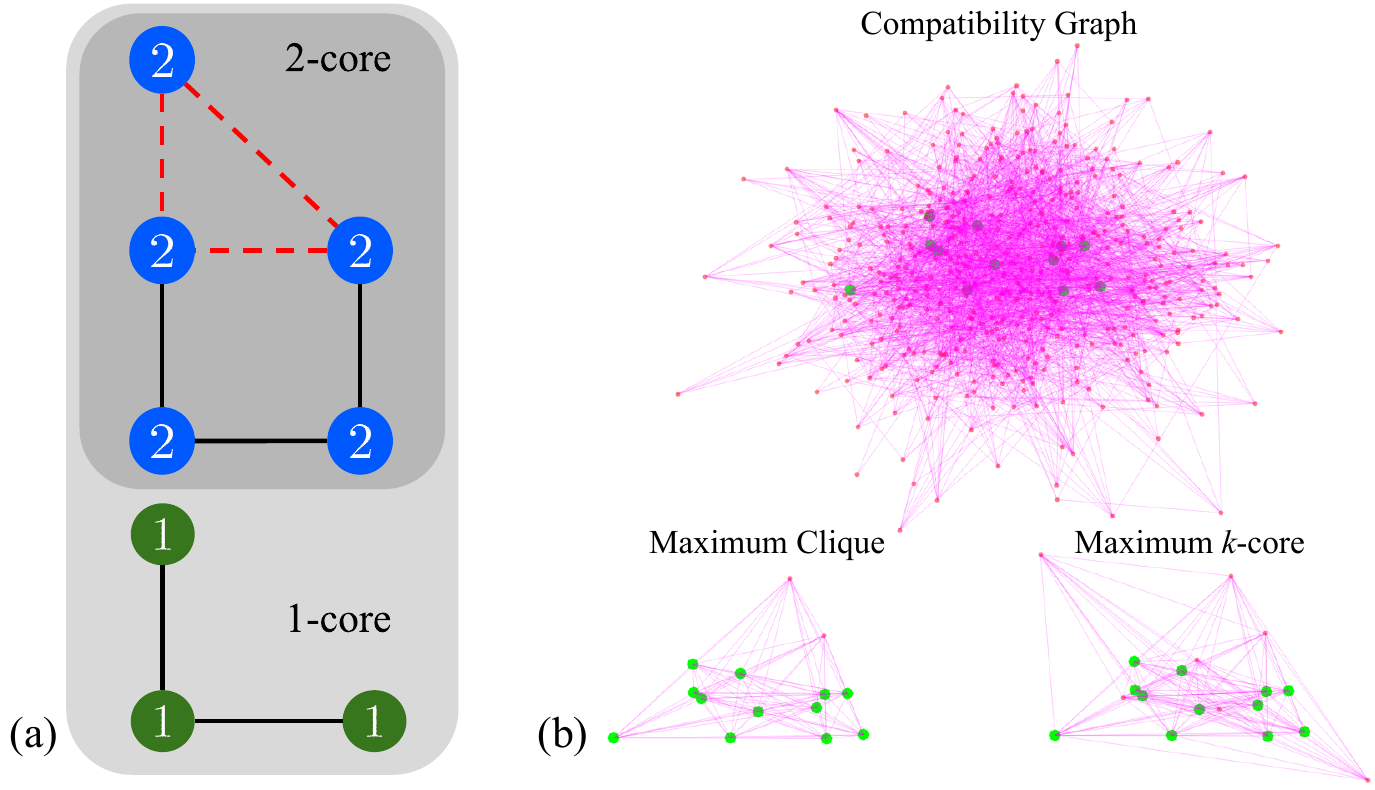}
\vspace{-3mm}
\caption{We propose \robin, a graph-theoretic approach to reject outliers in robust estimation problems. 
We use \emph{invariants} to check compatibility among subsets of measurements and describe the results of these checks using 
a \emph{\compatibility graph} $\calG$. 
\robin rejects outliers by computing a maximum clique or a maximum \core{k} of $\calG$.
(a) Maximum clique (dotted red edges) and \core{k} ($k=1,2$) in a toy graph (numbers denote the \emph{core number} of each \vertex). 
(b) \Compatibility graph %
for a single rotation averaging problem with 500 measurements and only 12 inliers (green dots).
\label{fig:graph}
\vspace{-7mm}
}
\end{figure}

Research in robust estimation across robotics, computer vision, and learning has made substantial progress towards 
solving estimation problems with outliers~\cite{Huber81,Bosse17fnt,Chin18eccv-robustFitting,MacTavish15crv-robustEstimation,Barron19cvpr-adaptRobustLoss}. Despite this progress, 
the literature is still divided between 
\emph{fast heuristics} and 
\emph{globally-optimally solvers}. 
Fast heuristics, such as \ransac~\cite{Fischler81} and variants~\cite{Tanaka06icra-incrementalRANSAC,Choi09cvpr-starsac,Chum05cvpr,Barath18ivpr-graphCutRANSAC,Torr00cviu,Raguram12pami-usac}, graduated non-convexity (\GNC)~\cite{Yang20ral-GNC,Black96ijcv-unification,Blake1987book-visualReconstruction},
or iterative local solvers for M-estimation~\cite{Agarwal13icra,Schonberger16cvpr-SfMRevisited} are fast but prone to fail in the presence of many outliers~\cite{Bustos18pami-GORE,Yang20tro-teaser,Raguram08-RANSACcomparative}. 
On the other hand, globally-optimally solvers, such as {Branch and Bound}~\cite{Izatt17isrr-MIPregistration,Yang16pami-goicp,Bustos18pami-GORE}, 
and combinatorial approaches for \emph{maximum consensus}~\cite{Chin17slcv-maximumConsensusAdvances}
can tolerate extreme outlier rates, but run in worst-case exponential time and are slow in practice. 
This dichotomy has been resolved only in specific problems,~\eg point cloud registration, where 
recent approaches like \teaserpp~\cite{Yang20tro-teaser} are robust to more than 99\% outliers and 
run in milliseconds. 
The goal of this paper is to provide a general framework (which indeed generalizes our approach in~\cite{Yang20tro-teaser}) 
to boost the performance of existing techniques, including \ransac and \GNC, to be robust to more than 95\% outliers while preserving their real-time operation.

\myParagraph{Contribution} We address the following key question:
\emph{Can we quickly detect and discard many outliers in the set of measurements~\eqref{eq:measurements}, while preserving all the inliers?} Intuitively, we want to bring down the number of outliers into a range where existing solvers (\ransac, \GNC) work well {(\eg~below 60\%)}.
Towards this goal, we provide two contributions.
The first contribution, presented in 
Section~\ref{sec:invariantMeasurements}, is a theory 
of \emph{invariance} that allows us to check if a subset of
$n \ll N$ measurements are mutually compatible; contrary to \ransac, our check does not require computing an estimate for $\vxx$ so it is extremely fast.
The second contribution, in Section~\ref{sec:inv-graph}, 
is\removeRed{the development of} a
graph-theoretic framework, where
\vertices represent measurements and edges represent mutual compatibility.
We generalize existing results~\cite{Mangelson18icra,Yang20tro-teaser,Enqvist09iccv}
 showing that, even for $n>2$ (related work focuses on pairwise \compatibility, where $n=2$), the inliers form a clique in this \emph{\compatibility graph} and 
 can be typically retrieved by computing the \emph{maximum clique}. 
  Despite the availability of fast algorithms,
finding a maximum clique is NP-hard and its runtime is often incompatible with real-time robotics applications.   
Therefore, we propose a faster alternative \removeRed{to finding inliers} by computing the maximum \emph{k-core} of the graph.
The combination of these contributions leads to \robin, our approach to \robinLong.\removeRed{which
  allows us to quickly prune outliers.}
We summarize \robin in Section~\ref{sec:robin}, while in Section~\ref{sec:experiments} 
 we demonstrate it in {four} perception problems 
 (single rotation averaging, point cloud registration, point-with-normal registration, and 2D-3D  camera pose estimation) and show that (i)~\robin boosts robustness of existing robust solvers, making them robust to more than 95\% outliers, and (ii)~it runs in milliseconds in large-scale simulated and real datasets. %

 Extra results, visualizations, and proofs are given in~\suppMaterialLong and the video attachment.

\section{Related Work}
\label{sec:relatedWork}

{\bf Consensus Maximization} is a framework for robust estimation that aims to find {the largest} set of measurements with errors {below} a user-defined threshold.
Consensus maximization is 
NP-hard~\cite{Antonante20arxiv-outlierRobustEstimation,Chin18eccv-robustFitting},
hence the community has resorted to randomized approaches, such as \ransac~\cite{Fischler81}. 
\ransac repeatedly draws a minimal subset of measurements to compute an estimate,
and stops after finding an estimate that agrees with a large set of measurements.
While \ransac works well for problems where the minimal set is small and there are not many outliers, 
the number of iterations %
increases exponentially with the percentage of outliers~\cite{Raguram08-RANSACcomparative}, making it impractical for problems with many outliers.
On the other hand, global solvers, such as branch-and-bound (\BnB)~\cite{Li09cvpr-robustFitting} and tree search~\cite{Chin15cvpr-CMTreeAstar}, exist but scale poorly with the problem size, with \BnB being exponential in the size of $\vxx$, and tree search being exponential in the number of outliers~\cite{Cai19ICCV-CMtreeSearch}.

{\bf M-estimation} 
 performs estimation by optimizing a robust cost function that reduces the influence of the outliers. 
The resulting problems are typically optimized using iterative local solvers.
MacTavish and Barfoot~\cite{MacTavish15crv-robustEstimation} compare several robust cost functions for visual-inertial localization using  iterative re-weighted least squares solvers. The downside of local solvers is that they need a good initial guess, 
which is often unavailable in practice.
A popular approach to circumvent the need for an initial guess is \emph{Graduated Non-Convexity} (\GNC)~\cite{Blake1987book-visualReconstruction,Black96ijcv-unification}. 
Zhou~\etal~\cite{Zhou16eccv-fastGlobalRegistration} use \GNC for point cloud registration.
Yang~\etal~\cite{Yang20ral-GNC} and Antonante~\etal~\cite{Antonante20arxiv-outlierRobustEstimation} combine \GNC with global non-minimal solvers and show their general applicability to problems with up to $80\%$ outliers.

For certain low-dimensional geometric problems, fast global solvers exist.
Enqvist \etal~\cite{Enqvist12eccv-robustFitting} use a \emph{truncated least squares} (\TLS) cost to solve triangulation problems in polynomial time in the number of measurements, but exponential time in the {dimension of $\vxx$}.
Ask \etal~\cite{Ask13cvpr-optimalTruncatedL2} use a \TLS cost for image registration. %
Recently, \emph{certifiably-robust} globally-optimal solvers based on \emph{convex relaxations} have been used 
for \emph{M-estimation}~\cite{Lajoie19ral-DCGM,Yang20tro-teaser,Yang19iccv-QUASAR}.
Unfortunately, due to the poor scalability of current SDP solvers, such methods are mostly viable 
to \emph{check} optimality~\cite{Yang20tro-teaser,Yang20neurips-certifiablePerception}.

{\bf Graph algorithms} have been extensively used in robot perception and vision, 
from SLAM~\cite{Cadena16tro-SLAMsurvey,Dellaert17fnt-factorGraph,Rosinol20rss-dynamicSceneGraphs} to human-pose estimation~\cite{Bray06eccv}.
Neira and Tard\'os propose compatibility tests and \BnB for data association in SLAM~\cite{Neira01tra}. 
Bailey~\etal~\cite{Bailey00icra-dataAssociation} propose a Maximum Common Subgraph algorithm for feature matching in lidar scans.
Graph algorithms, such as graph-cut, have been used in \ransac to improve performance against noise and outliers~\cite{Barath18-gcRANSAC}. In~\cite{Perera12-maxCliqueSegmentation}, graph cliques are used to segment objects under rigid body motion.
Leordeanu and Hebert~\cite{Leordeanu05-spectral} {establish image matches by finding strongly-connected clusters in the correspondence graph with an approximate spectral method.}
Enqvist~\etal~\cite{Enqvist09iccv} develop an outlier rejection algorithm for 3D-3D and 2D-3D registration based on approximate vertex cover. %
Recently, progress in graph algorithms (see~\cite{Rossi15parallel, Parra19arXiv-practicalMaxClique}) have allowed for the emergence of fast graph-based algorithms, such as \teaserpp~\cite{Yang20tro-teaser}, that are robust to high-noise and outliers.

\section{Measurement Invariants}
\label{sec:invariantMeasurements}

This section develops a theory of invariance that allows checking if a subset of measurements 
contains outliers
without explicitly computing an estimate for $\vxx$.
We will then use these checks to select large sets of mutually \compatible 
measurements and reject outliers in Section~\ref{sec:inv-graph}.

\subsection{From Measurements to Invariants}

Let us consider the measurements in eq.~\eqref{eq:measurements} and 
denote the indices of the measurements as $\calY \doteq \{1,\ldots,N\}$.
For a given integer $n \leq N$, let $\subMeas{n} \subset \calY$ be a subset of $n$ indices in $\calY$, %
and denote with $\subMeas{n}_j$ the $j$-th element of this subset (with $j=1,\ldots,n)$.
Then, we use the following notation:
\beq
\measured_{\subMeas{n}} = \matTwo{ \measured_{\subMeas{n}_1} \\ \measured_{\subMeas{n}_2} \\ \vdots \\\measured_{\subMeas{n}_n}   },
\;\;
\vh_{\subMeas{n}} = \matTwo{ \vh_{\subMeas{n}_1} \\ \vh_{\subMeas{n}_2} \\ \vdots \\\vh_{\subMeas{n}_n}   },
\;\;
\vepsilon_{\subMeas{n}} = \matTwo{ \vepsilon_{\subMeas{n}_1} \\ \vepsilon_{\subMeas{n}_2} \\ \vdots \\ \vepsilon_{\subMeas{n}_n}   }
\eeq
which is simply stacking together measurements $\vy_i$, functions $\vh_i$, and noise $\vepsilon_i$ for the subset of measurements $i \in \subMeas{n}$.

We are now ready to define an \emph{$n$-\invariant}.

\begin{definition}[n-\Invariant]\label{def:n-measurement-invariant}
Given measurements~\eqref{eq:measurements}, a function $\vf$ is called an 
\emph{$n$-\invariant} if \mbox{for all $\subMeas{n} \!\subset\!\calY$ of size $n$:} 
$\vf( \measured_{\subMeas{n}}  ) \!=\! \vf( \vh_{\subMeas{n}}(\barvxx, \vdelta_{\subMeas{n}}) )$, 
 $\forall \barvxx \!\in\!\domainX$, {where $\vdelta$ is a noise term whose statistics do not depend on $\barvxx$, 
 and $\vdelta\!=\!\zero$ whenever $\vepsilon\!=\!\zero$ in~\eqref{eq:measurements} (\ie when measurements are noiseless).}
\end{definition}

Intuitively, the invariant function $\vf$ takes a subset of measurements and computes 
a quantity that no longer depends on $\vxx$ (hence the equation is satisfied for all $\vxx \in \domainX$).
When $n=2$ in Definition~\ref{def:n-measurement-invariant}, we obtain a \emph{pairwise \invariant}.

\subsection{Example of \invariants}
\label{sec:examples}

Here we provide examples of \invariants for {four} estimation problems in robotics and vision: single rotation averaging, point cloud registration, point-with-normal registration, and 2D-3D camera pose estimation.

{\bf Single rotation averaging}~\cite{Hartley13ijcv} seeks to   
find an~unknown rotation $\MR\!\in\!\SOthree$ given noisy measurements $\MR_i\!\in\!\SOthree$: 
\begin{equation} \label{eq:rot-avg-gen-model}
  \MR_{i} = %
  \MR \cdot \Exp(\vepsilon_{i})
\end{equation}
where $\vepsilon_{i} \in \Real{3}$ denotes measurement noise and $\Exp$ maps a 3D vector to a rotation matrix~\cite[\S 7.1.3]{Barfoot17book}. 
Eq.~\eqref{eq:rot-avg-gen-model} has the same form of~\eqref{eq:measurements}, with $\vy_i \!\doteq\! \MR_i$, $\vxx \!\doteq\! \MR$, and 
$\vh_i(\MR,\vepsilon_{i}) \doteq \MR \cdot \Exp(\vepsilon_{i})$. 
The problem admits a simple pairwise \invariant.

\begin{proposition}[Pairwise Invariant for Rotation Averaging]\label{prop:rotAve-invariants}
The function $\vf(\MR_i, \MR_j) \doteq \MR_i\tran \MR_j$, where $\MR_i,\MR_j$ are a pair of measurements, is an invariant for~\eqref{eq:rot-avg-gen-model} and satisfies:
\beq
\label{eq:rotAve-invariants}
\vf(\MR_i, \MR_j) \doteq \MR_i\tran \MR_j = \Exp(\vdelta_{i})\tran \Exp(\vdelta_{j})
\eeq 
with $\vdelta_i \!=\! \vepsilon_i$ and $\vdelta_j \!=\! \vepsilon_j$, for any rotation $\MR$ in~\eqref{eq:rot-avg-gen-model}. 
\qed
\end{proposition}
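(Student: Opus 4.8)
The plan is to substitute the generative model directly into the candidate invariant and let the orthogonality of $\MR$ do all the work. First I would instantiate the model~\eqref{eq:rot-avg-gen-model} for the two indices in the pair, writing $\MR_i = \MR \cdot \Exp(\vepsilon_i)$ and $\MR_j = \MR \cdot \Exp(\vepsilon_j)$, where crucially both measurements share the \emph{same} underlying rotation $\MR$. This shared factor is precisely what the transpose is designed to annihilate.

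Next I would compute $\vf(\MR_i,\MR_j)$ by direct substitution:
\begin{equation}
\MR_i\tran \MR_j = \left(\MR \cdot \Exp(\vepsilon_i)\right)\tran \left(\MR \cdot \Exp(\vepsilon_j)\right) = \Exp(\vepsilon_i)\tran \MR\tran \MR\, \Exp(\vepsilon_j) = \Exp(\vepsilon_i)\tran \Exp(\vepsilon_j),
\end{equation}
where the middle step uses the transpose-of-product identity $(\MA\MB)\tran = \MB\tran\MA\tran$, and the final step uses $\MR\tran\MR = \eye$, which holds because $\MR \in \SOthree$. The key observation is that the right-hand side no longer contains $\MR$, so the expression depends only on the noise.

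Finally I would set $\vdelta_i = \vepsilon_i$ and $\vdelta_j = \vepsilon_j$ and verify the three requirements of Definition~\ref{def:n-measurement-invariant}: (i) the identity holds for \emph{every} $\barvxx = \MR \in \domainX = \SOthree$, as just computed; (ii) the statistics of $\vdelta$ are independent of $\MR$, which is immediate since $\vdelta$ coincides with the noise $\vepsilon$ already specified in~\eqref{eq:rot-avg-gen-model}; and (iii) $\vdelta = \zero$ exactly when $\vepsilon = \zero$, again because they are identical. This establishes both that $\vf$ is a (pairwise) invariant and the explicit form claimed in~\eqref{eq:rotAve-invariants}.

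The main obstacle is essentially nonexistent: this reduces to a one-line cancellation enabled by $\MR\tran\MR=\eye$. The only point deserving care is resisting the temptation to stop once $\MR$ cancels—one must still confirm conditions (ii) and (iii) on the noise term to match Definition~\ref{def:n-measurement-invariant}—but since $\vdelta$ here is literally $\vepsilon$, those conditions hold trivially and the proof is complete.
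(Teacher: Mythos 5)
Your proposal is correct and follows essentially the same route as the paper's proof: substitute the generative model $\MR_i = \MR\,\Exp(\vepsilon_i)$ into $\vf$ and cancel the shared rotation via $\MR\tran\MR = \eye$, then identify $\vdelta_i = \vepsilon_i$, $\vdelta_j = \vepsilon_j$. Your extra step of explicitly checking the noise conditions of Definition~\ref{def:n-measurement-invariant} is a slightly more careful write-up of what the paper leaves implicit, but the argument is the same.
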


The proof is trivial and is given in \suppMaterial for completeness.

{\bf Point cloud registration}~\cite{Arun87pami,Horn87josa} consists in finding a rigid body transformation $(\MR,\vt)$ (where $\MR \in \SOthree$ and $\vt \in \Real{3}$) that 
aligns two sets of 3D points $\va_i \in \Real{3}$ and $\vb_i \in \Real{3}$, with $i=1,\ldots,N$.
The corresponding measurement model can again be phrased in the form of~\eqref{eq:measurements}: 
\begin{equation}
  \label{eq:p-reg-gen-model}
  \vb_{i} = \MR \va_i + \vt + \vepsilon_i
\end{equation}
where $\vepsilon_i \in \Real{3}$ is the measurement noise.

\begin{proposition}[Pairwise Invariant for Point Registration]\label{prop:p-reg-invariants}
Assuming the noise $\vepsilon_i$ is isotropic,\footnote{Isotropic noise has a rotation-invariant distribution, for instance, a zero-mean Gaussian with covariance matrix equal to $\sigma^2 \eye_3$.} the function $\vf(\vb_i, \vb_j) \!\doteq\! \| \vb_j - \vb_i \|$, where $\vb_i, \vb_j$ are a pair of measurements, is an invariant for~\eqref{eq:p-reg-gen-model} and satisfies:
\beq
\label{eq:p-reg-invariants}
\vf(\vb_i, \vb_j) \doteq \| \vb_j\!-\!\vb_i \| =  \| \va_j  - \va_i + \vdelta_j - \vdelta_i\|
\eeq 
\mbox{for any transformation\,$(\MR,\vt)$ between the points in~\eqref{eq:p-reg-gen-model},\,where} 
$\vdelta_i$ (resp.\,$\vdelta_j$) has the same distribution of $\vepsilon_i$ (resp.\,$\vepsilon_j$).\qed
\end{proposition}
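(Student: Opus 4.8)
The plan is to establish~\eqref{eq:p-reg-invariants} by a short direct computation that exploits the rotation-invariance of the Euclidean norm, and then to verify the two distributional requirements of Definition~\ref{def:n-measurement-invariant}. First I would write model~\eqref{eq:p-reg-gen-model} for the two indices $i$ and $j$ and subtract them, so that the translation $\vt$ cancels identically:
\[
\vb_j - \vb_i = \MR(\va_j - \va_i) + (\vepsilon_j - \vepsilon_i).
\]
Taking the norm and using that $\MR \in \SOthree$ is orthogonal (hence $\norm{\MR \vv} = \norm{\vv}$ for any $\vv$), I would pull $\MR$ inside and cancel it:
\[
\norm{\vb_j - \vb_i} = \norm{\MR\left[(\va_j - \va_i) + \MR\tran(\vepsilon_j - \vepsilon_i)\right]} = \norm{(\va_j - \va_i) + \MR\tran(\vepsilon_j - \vepsilon_i)}.
\]
This removes the explicit dependence on $\MR$ (and already on $\vt$), which is the structural heart of the invariant.

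The second step is to set $\vdelta_i \doteq \MR\tran \vepsilon_i$ and $\vdelta_j \doteq \MR\tran \vepsilon_j$, so that the last expression reads exactly $\norm{\va_j - \va_i + \vdelta_j - \vdelta_i}$, which is~\eqref{eq:p-reg-invariants}. It then remains to check that $\vdelta_i,\vdelta_j$ meet the two conditions of Definition~\ref{def:n-measurement-invariant}. The noiseless condition is immediate, since $\vepsilon_i = \zero$ gives $\vdelta_i = \MR\tran \zero = \zero$.

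The main (and essentially only nontrivial) point is showing that the statistics of $\vdelta_i = \MR\tran\vepsilon_i$ do not depend on the unknown $\barvxx = (\MR,\vt)$, and this is precisely where the isotropy hypothesis must be invoked: by definition an isotropic noise has a rotation-invariant distribution, so $\MR\tran\vepsilon_i$ is distributed identically to $\vepsilon_i$ for every $\MR \in \SOthree$ (for the Gaussian case in the footnote, $\MR\tran\vepsilon_i \sim \mathcal{N}(\zero, \sigma^2 \MR\tran\MR) = \mathcal{N}(\zero,\sigma^2\eye_3)$). Hence $\vdelta_i$ (resp.\ $\vdelta_j$) inherits the distribution of $\vepsilon_i$ (resp.\ $\vepsilon_j$) independently of $\barvxx$. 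Finally, to obtain the full Definition~\ref{def:n-measurement-invariant} condition at an arbitrary evaluation rotation $\bar{\MR}$, I would repeat the same factoring argument and set $\vdelta_i = \bar{\MR}\MR\tran\vepsilon_i$; since the product of two rotations is again a rotation, isotropy applies verbatim and the distributional claim is preserved. I expect no real obstacle beyond correctly attributing the distribution-invariance to the isotropy assumption, which is the crux of the statement.
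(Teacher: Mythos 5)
Your proof is correct and follows essentially the same route as the paper's: subtract the two measurement equations so $\vt$ cancels, use rotation-invariance of the Euclidean norm to absorb $\MR$, set $\vdelta_i = \MR\tran\vepsilon_i$, and invoke isotropy so the statistics of $\vdelta_i$ do not depend on the unknown transformation. Your additional checks (the noiseless condition $\vepsilon_i = \zero \Rightarrow \vdelta_i = \zero$, and the construction $\vdelta_i = \bar{\MR}\MR\tran\vepsilon_i$ for an arbitrary evaluation rotation) are careful verifications of Definition~\ref{def:n-measurement-invariant} that the paper's proof leaves implicit, but they do not change the underlying argument.
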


The proposition formalizes the intuition that the distance between points in a point cloud is invariant 
to rigid transformations and has been used to detect outliers in~\cite{Enqvist09iccv,Yang19rss-teaser,Bustos2015iccv-gore3D}.
Note that in this problem (and in the ones below), each measurement encodes an association 
between two points in the two point clouds and for this reason the measurements are often referred to as 
\emph{correspondences} or \emph{putative matches}. %

{\bf Point-with-normal  registration}
 is similar to point-cloud registration in that it looks for a rigid body transformation $(\MR,\vt)$ 
that aligns 3D points, except the 3D points $\va_i$ (resp. $\vb_i$) have normals 
$\vn^a_i$ (resp. $\vn^b_i$), with $\vn^a_i, \vn^b_i \in \Real{3}$ and $\|\vn^a_i\|=\|\vn^b_i\|=1$.
This results in the measurement model: %
\begin{align}
  \label{eq:pwithn-reg-gen-model}
\vb_{i} = \MR \va_i + \vt + \vepsilon_i, \qquad
\vn^b_i &= \MR \cdot \Exp(\vnu_i) \cdot \vn^a_i
\end{align}
where $\vnu_i \in \Real{3}$ is noise perturbing the measured normals.
\begin{proposition}[Pairwise Invariant for P\&N Registration]\label{prop:pwithn-reg-invariants}
Assuming the noise $\vepsilon_i$ is isotropic, the function 
$\vf(\vy_i, \vy_j)
\doteq [\| \vb_j - \vb_i \|, (\vn^b_i)\tran \vn^b_j ]\tran$, where 
 $\vy_i \doteq [(\vb_{i})\tran, (\vn^b_i)\tran]\tran$ 
 and $\vy_j \doteq [(\vb_{j})\tran, (\vn^b_j)\tran]\tran$
 are a pair of measurements in~\eqref{eq:pwithn-reg-gen-model},
  is an invariant for~\eqref{eq:pwithn-reg-gen-model} and satisfies: 
  \beq
\label{eq:pwithn-reg-invariants}
\hspace{-3mm}\vf(\vy_i, \vy_j)
\!\doteq\! \matTwo{ \!\! \| \vb_j - \vb_i \| \!\!\!\! \\ \!\! (\vn^b_i)\tran \vn^b_j \!\!\!\! }
\!\!=\!\! \matTwo{  \!\!\| \va_j  - \va_i + \vdelta^p_j - \vdelta^p_i\| \!\!\!\!
 \\ 
\!\!(\vn^a_i)\tran  \Exp(\vdelta^n_i)\tran \Exp(\vdelta^n_j) \vn^a_j \!\! \!\!}
\eeq 
for any rigid transformation $(\MR,\vt)$ in~\eqref{eq:pwithn-reg-gen-model}, 
where $\vdelta^p_i$ has the same distribution of $\vepsilon_i$, and 
$\vdelta^n_i = \vnu_i$, $i=1,\ldots,N$. \qed
\end{proposition}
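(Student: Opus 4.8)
The plan is to verify the two stacked entries of $\vf$ separately, since the position-based quantity $\|\vb_j-\vb_i\|$ and the normal-based quantity $(\vn^b_i)\tran\vn^b_j$ decouple completely and can be treated independently.

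First I would dispatch the position entry exactly as in Proposition~\ref{prop:p-reg-invariants}, since the first equation of the measurement model~\eqref{eq:pwithn-reg-gen-model} coincides with the point-cloud registration model. Subtracting the two measurements cancels $\vt$ and gives $\vb_j-\vb_i=\MR(\va_j-\va_i)+(\vepsilon_j-\vepsilon_i)$. Isotropy lets me set $\vdelta^p_i\doteq\MR\tran\vepsilon_i$, so that $\vepsilon_i=\MR\vdelta^p_i$ with $\vdelta^p_i$ sharing the distribution of $\vepsilon_i$ (this is exactly where rotation invariance of the noise distribution enters), whence $\vb_j-\vb_i=\MR(\va_j-\va_i+\vdelta^p_j-\vdelta^p_i)$; orthogonality of $\MR$ then removes it from the norm and yields the first row of~\eqref{eq:pwithn-reg-invariants}.

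Next I would handle the normal entry by substituting the second equation of~\eqref{eq:pwithn-reg-gen-model} into the inner product and expanding:
\beq
(\vn^b_i)\tran\vn^b_j=(\vn^a_i)\tran\Exp(\vnu_i)\tran\MR\tran\MR\,\Exp(\vnu_j)\vn^a_j .
\eeq
Because $\MR\tran\MR=\eye_3$, the ambient rotation cancels, leaving $(\vn^a_i)\tran\Exp(\vnu_i)\tran\Exp(\vnu_j)\vn^a_j$; setting $\vdelta^n_i\doteq\vnu_i$ reproduces the second row. Note that this cancellation is exact for every $(\MR,\vt)$, so the normal inner product is invariant without requiring any assumption on the normal noise $\vnu_i$ (the $\Exp(\vnu_i)$ factors are carried along unchanged).

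Finally I would observe that the statistics of $\vdelta^p_i$ and $\vdelta^n_i$ do not depend on $(\MR,\vt)$, and that both rows collapse to noiseless equalities when $\vepsilon_i=\vnu_i=\zero$, so the pair $\vf$ meets Definition~\ref{def:n-measurement-invariant}. The computations are elementary, and I anticipate no real obstacle: the only substantive step is the distributional identity $\vepsilon_i=\MR\vdelta^p_i$ in the position entry, which is precisely the isotropy argument inherited from Proposition~\ref{prop:p-reg-invariants}, while the normal entry relies solely on orthogonality of $\MR$.
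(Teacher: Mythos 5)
Your proposal is correct and follows essentially the same route as the paper: the first entry is handled exactly as in Proposition~\ref{prop:p-reg-invariants} (cancel $\vt$ by subtraction, absorb $\MR$ into the noise via isotropy and rotation-invariance of the norm, setting $\vdelta^p_i = \MR\tran\vepsilon_i$), and the second entry follows by expanding $(\vn^b_i)\tran\vn^b_j$ and cancelling $\MR\tran\MR = \eye_3$, with $\vdelta^n_i = \vnu_i$ carried along unchanged. Your closing check against Definition~\ref{def:n-measurement-invariant} and the remark that the normal entry needs no distributional assumption on $\vnu_i$ are correct refinements that the paper leaves implicit.
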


Proposition~\ref{prop:pwithn-reg-invariants} formalizes the intuition that the angle between normals, measured by the dot product $(\vn^b_i)\tran \vn^b_j$, is invariant to rigid transformations.

{\bf 2D-3D camera pose estimation} consists in finding the 3D pose of a camera
 $(\MR,\vt)$ with respect to a 3D object of known shape from 
 pixel projections of points of the object. 
 Calling the pixel projections $\vy_i \in \Real{2}$ and the corresponding 3D points $\vp_i \in \Real{3}$,
 the measurement model becomes: 
\begin{equation}
  \label{eq:proj-gen-model}
  \vy_{i} = \pi(\MR,\vt,\vp_i)+\vepsilon_i
\end{equation}
where $\pi(\cdot)$ is the standard perspective projection~\cite[Section 6.2]{Hartley04}, 
and $\vepsilon_i$ is the measurement noise.
While it is known that there is no invariant for 3D points in generic configuration under perspective projection~\cite{Mundy92book}, there exist invariants for specific configurations of points (\eg collinear or coplanar points).
For instance, we can design a 3-\invariant capturing that projections of collinear 3D points are also collinear. 
Here we review another invariant for the case %
of collinear features,  which is known as \emph{cross ratio} in computer vision~\cite[p. 45]{Hartley04}  

\begin{proposition}[4-\invariant for Camera Pose]\label{prop:cross-invariants}
Given four collinear 3D points $\vp_i \in \Real{3}$ ($i=1,\ldots,4$) and the corresponding pixel projections ${\vy}_{i} \in \Real{2}$, the \emph{cross ratio}~\cite[p. 45]{Hartley04}
is a 4-\invariant and satisfies:
\newcommand{\myMinus}{\!-\!}  
\newcommand{\myPlus}{\!+\!}
\begin{align}
\label{eq:cross-invariants}
\vf({\vy}_{1},{\vy}_{2},{\vy}_{3},{\vy}_{4}) \doteq 
\frac{\| {\vy}_{1} \myMinus \newnoise_1 \myMinus {\vy}_{2} \myPlus \newnoise_2 \| \| {\vy}_{3} \myMinus \newnoise_3 \myMinus {\vy}_{4} \myPlus \newnoise_4 \|}{ 
\| {\vy}_{1} \myMinus \newnoise_1 \myMinus {\vy}_{3}  \myPlus \newnoise_3 \| \| {\vy}_{2} \myMinus \newnoise_2 \myMinus {\vy}_{4} \myPlus \newnoise_4\| }  \nonumber \\
=  
\frac{ \| (\vp_1)^\vee - (\vp_2)^\vee \| \| (\vp_3)^\vee - (\vp_4)^\vee \|  }{ 
\| (\vp_1)^\vee - (\vp_3)^\vee \| \| (\vp_2)^\vee - (\vp_4)^\vee \|  } \nonumber \\
\end{align}
where $\vdelta_i = \vepsilon_i$ ($i=1,\ldots,4$), and for a 3D point $\vp = [p_x \; p_y \; p_z]\tran$,  we denoted $(\vp)^\vee \doteq [\frac{p_x}{p_z} \; \frac{p_y}{p_z}]\tran$. \qed
\end{proposition}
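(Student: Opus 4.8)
The plan is to reduce the statement to the classical projective invariance of the cross ratio. First I would use the noise convention $\vdelta_i = \vepsilon_i$ together with the measurement model~\eqref{eq:proj-gen-model}, so that $\vy_i - \vdelta_i = \pi(\MR,\vt,\vp_i)$ is exactly the \emph{noiseless} projection of $\vp_i$. Substituting this into the left-hand side of~\eqref{eq:cross-invariants}, every factor $\| \vy_a - \newnoise_a - \vy_b + \newnoise_b \|$ collapses to the Euclidean distance $\| \pi(\MR,\vt,\vp_a) - \pi(\MR,\vt,\vp_b) \|$ between two noiseless image points. The claim thus reduces to showing that the distance-based cross ratio of $\{\pi(\MR,\vt,\vp_i)\}_{i=1}^4$ coincides with that of $\{(\vp_i)^\vee\}_{i=1}^4$ for every pose $(\MR,\vt)$.

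Second, I would observe that $(\vp)^\vee = [\,p_x/p_z,\; p_y/p_z\,]\tran$ is itself the perspective projection under the identity pose, i.e.\ $(\vp)^\vee = \pi(\eye_3,\zero,\vp)$, and that $\pi(\MR,\vt,\vp) = (g(\vp))^\vee$, where $g$ is the rigid transformation induced by $(\MR,\vt)$. Hence both sides of the desired equality are distance-based cross ratios of perspective projections of the \emph{same} four collinear points $\{\vp_i\}$, the two sides differing only by the rigid motion $g$ applied before projecting.

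Third, I would exploit collinearity. Since a pinhole projection maps a 3D line (not through the camera center) to an image line, each projected configuration is collinear and admits a parametrization $\vq_0 + s_i\,\ve$ with unit direction $\ve$; consequently $\| \cdot \| = |s_a - s_b|$, so the distance-based cross ratio equals the absolute value of the signed parametric cross ratio $\tfrac{(s_1-s_2)(s_3-s_4)}{(s_1-s_3)(s_2-s_4)}$. The key classical fact is that perspective projection, restricted to such a line, acts on the line parameter as a fractional-linear (M\"obius) map, and M\"obius maps preserve the signed parametric cross ratio~\cite[p.~45]{Hartley04}; rigid motions, being isometries and hence projectivities of the line, preserve it as well. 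Chaining these invariances along $\{\vp_i\} \to \{g(\vp_i)\} \to \{\pi(\MR,\vt,\vp_i)\}$ on one side and $\{\vp_i\} \to \{(\vp_i)^\vee\}$ on the other shows both image configurations inherit the same parametric cross ratio as the four points on the 3D line; taking absolute values yields the stated equality. Finally, the requirement of Definition~\ref{def:n-measurement-invariant} that the noise statistics be independent of $\barvxx$ and that $\vdelta=\zero$ whenever $\vepsilon=\zero$ is immediate from $\vdelta_i = \vepsilon_i$.

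The main obstacle I anticipate is the bookkeeping in the third step: relating the intrinsically nonnegative Euclidean-distance expression to the signed projective cross ratio (via the absolute value), and verifying that perspective projection restricted to the line is genuinely a fractional-linear action on the parameter, which requires the line to avoid the camera center and not to degenerate to a single image point. Once the distance-based ratio is identified with the magnitude of the parametric cross ratio, the invariance follows directly from the classical projective result, with no further computation required.
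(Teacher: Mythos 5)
Your proposal is correct, but note that the paper itself never writes a proof of Proposition~\ref{prop:cross-invariants}: the appendix on proofs of invariants covers only Propositions~\ref{prop:rotAve-invariants}--\ref{prop:pwithn-reg-invariants}, and for the cross ratio the paper simply defers to the classical projective-invariance result it cites~\cite[p.~45]{Hartley04} (the appendix only \emph{uses} eq.~\eqref{eq:cross-invariants} to derive the compatibility test). Your argument is a faithful and complete reconstruction of that classical route: the substitution $\vy_i - \vdelta_i = \pi(\MR,\vt,\vp_i)$ coming from $\vdelta_i=\vepsilon_i$ and model~\eqref{eq:proj-gen-model} correctly collapses the left-hand side to distances between noiseless projections; the observation that $(\vp)^\vee = \pi(\eye_3,\zero,\vp)$ and $\pi(\MR,\vt,\vp)=(g(\vp))^\vee$ correctly identifies both sides of~\eqref{eq:cross-invariants} as distance-based cross ratios of perspective images of the \emph{same} four collinear points; and the reduction of Euclidean distances to parameter differences $\lvert s_a - s_b\rvert$ under arc-length parametrization, followed by M\"obius invariance of the signed cross ratio under central projection restricted to a line, is exactly the standard proof. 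What your write-up adds beyond the paper is the explicit bookkeeping the citation hides: the distinction between the unsigned (distance-based) ratio in~\eqref{eq:cross-invariants} and the signed projective cross ratio, and the nondegeneracy conditions (the 3D line must not pass through the camera center, and the four image points must be distinct) without which the ratio is ill-defined; the paper leaves both implicit. This makes your version slightly stronger as a self-contained proof, at the cost of length.
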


\begin{remark}[Other Invariants]
Invariants have been studied (typically in a noiseless setup) in the context of pattern recognition in computer vision.
Early related work provides an abundance of resources to design invariants~\cite{Gros92-projectiveInvariantsTheory}, 
that we discuss more broadly in~\suppMaterialLong.
Similar ideas have been also used in specific robotics applications, \eg SLAM~\cite{Mangelson18icra,Carlone14iros-robustPGO2D}, 
without a systematic treatment. 
\end{remark}

\subsection{From Invariants to \Compatibility for Outlier Rejection}
\label{sec:compatibilityTests}

While the previous section developed invariants without distinguishing inliers from outliers, this section shows that 
the resulting invariants can be directly used to check if a subset of measurements contains an outlier.
Towards this goal, we formalize the notion of inlier and outlier. %

\begin{definition}[Inliers and Outliers]\label{def:inlier-outlier}
Given measurements \eqref{eq:measurements} and a threshold $\beta>0$, 
a measurement $i$ is called an \emph{inlier} if the corresponding noise satisfies $\|\vepsilon_i\| \leq \!\beta$ 
and is called an \emph{outlier} otherwise.
\end{definition}

Now we note that in the previous section the notion of invariants allowed us to obtain relations that 
depend on the measurements and the noise, but are independent on $\vxx$, see eqs.~\eqref{eq:rotAve-invariants},~\eqref{eq:p-reg-invariants},~\eqref{eq:pwithn-reg-invariants},~\eqref{eq:cross-invariants}.
Therefore, we can directly use these relations to check if a subset of $n$ measurements contains outliers: 
if eqs.~\eqref{eq:rotAve-invariants},~\eqref{eq:p-reg-invariants},~\eqref{eq:pwithn-reg-invariants},~\eqref{eq:cross-invariants}
are not satisfied by any choice of noise smaller than $\beta$, then the corresponding subset of measurements \emph{must} contain an outlier.
We call the corresponding check a \emph{\compatibility test}. 
In the following, we provide an example of \compatibility test for point cloud registration.
The reader can find the compatibility tests for the other applications in~\suppMaterialLong.

\myParagraph{\Compatibility test for point cloud registration}
 Eq.~\eqref{eq:p-reg-invariants} states that any pair of measurements satisfies:
 \beq
 \label{eq:test-p-ref1}
 \| \vb_j - \vb_i \| =  \| \va_j  - \va_i + \newnoise_j - \newnoise_i\| 
 \eeq
 where $\newnoise_i,\newnoise_j$ have the same distribution of the measurement noise $\vepsilon_i,\vepsilon_j$.
 Therefore, one can check if one of the measurements $\vb_i, \vb_j$ is an outlier by checking if~\eqref{eq:test-p-ref1} 
 can be satisfied by any choice of $\|\vdelta_i\|,\|\vdelta_j\|\leq\!\beta$.
If we apply the triangle inequality to the right-hand-side of~\eqref{eq:test-p-ref1} we obtain:
  \beq
 \label{eq:test-p-ref2}
 -\| \newnoise_j - \newnoise_i\|  \leq  \| \vb_j - \vb_i \| - \| \va_j  - \va_i  \| \leq  \| \newnoise_j - \newnoise_i\| 
 \eeq
Now for two inliers we have $\|\newnoise_i\|\leq\!\beta$ and $\|\newnoise_j\|\leq\!\beta$, therefore 
$\| \newnoise_j - \newnoise_i \| \leq 2\beta$. Substituting in~\eqref{eq:test-p-ref2} we obtain: 
 \beq
 \label{eq:test-p-ref3}
 -2\beta  \leq  \| \vb_j - \vb_i \| - \| \va_j  - \va_i  \| \leq  2\beta
 \eeq
Therefore, if $\vb_i, \vb_j$ are inliers they must satisfy~\eqref{eq:test-p-ref3}, {which is easy to check, and intuitively
states that inliers associate pairs of points having similar distance in the two point clouds.}

Generalizing this example, we obtain the following definition of \emph{\compatibility test}.

\begin{definition}[\Compatibility Test]
Given a subset of $n$ measurements and the corresponding $n$-\invariant, a \compatibility test 
 is a binary condition involving the invariant, such that if the condition fails the set of measurements 
 \emph{must} contain at least {one} outlier. 
\end{definition}

Note that we require the test to be \emph{sound} (\ie it does not detect outliers when testing a set of inliers), but 
may not be \emph{complete} (\ie the test might pass even in the presence of {outliers}). 
This property is important since our goal is to prune as many outliers as we can, while preserving the inliers.
Also note that the test detects if the set contains {outliers}, but does not provide information on \emph{which} {measurements are 
outliers}. We are going to fill in this gap below.

\section{\Compatibility Graph For Outlier Rejection}
\label{sec:inv-graph}

This section provides a graph-theoretic framework to use the \compatibility tests introduced in Section~\ref{sec:invariantMeasurements} 
and point out which measurements are outliers. 
We start by defining the notion of \emph{\compatibility graph} and then show that inliers form a clique in this graph. 
We then propose the use of maximum clique and maximum \core{k} to find inliers.

\subsection{From \Compatibility Tests to \Compatibility Graph}

For a problem with an $n$-\invariant, 
we describe the results of the compatibility tests for all subsets of $n$ measurements using a 
\emph{\compatibility graph}. Formally, we define the \compatibility graph
$\calG(\calV,\calE)$ as an undirected graph, where 
each \vertex in the \vertex set $\calV$ is associated to a measurement in~\eqref{eq:measurements} and 
an edge $(i,j)$ in the edge set $\calE$ is present in the graph if measurements $i$ and $j$ were involved in \emph{at 
least one} of the subsets of measurements that passed the compatibility tests. 
Building the \compatibility graph requires looping over all subsets of $n$ measurements and, whenever the subset 
passes the \compatibility test, adding edges among the corresponding $n$ nodes in the graph. 
Note that these checks are very fast and easy to parallelize since they only involve checking boolean conditions (\eg~\eqref{eq:test-p-ref3})  without computing an estimate (as opposed to \ransac).

\subsection{Inlier Structures in \Compatibility Graphs}

This section shows that we can prune many outliers 
in~\eqref{eq:measurements} by computing the maximum clique of 
the \compatibility graph. Let us start with some definitions.

\begin{definition}[Maximum Clique]
A clique of a graph $\calG$ is a set of vertices such that any pair of vertices is connected by an edge in $\calG$.
The \emph{maximum} clique of $\calG$ is the clique with the largest number of vertices.
The number of vertices $\omega(\calG)$ in the maximum clique 
is called the \emph{clique number} of $\calG$.
\end{definition}

Given a \compatibility graph $\calG$, the following result relates the set of inliers in~\eqref{eq:measurements} 
with cliques in $\calG$. 

\begin{theorem}[Inliers and Cliques]\label{thm:inliers-form-clique}
Assume~we are given measurements~\eqref{eq:measurements} and the corresponding $n$-\invariants;
call $\calG$ the corresponding \compatibility graph.
  Then, assuming there are at least $n$ inliers, the  inliers form a clique in $\calG$.\qed
\end{theorem}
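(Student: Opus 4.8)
The plan is to reduce the claim to a statement about pairs: I will show that any two inliers are adjacent in $\calG$, which by the definition of a clique immediately yields that the entire inlier set is a clique. So fix two arbitrary inliers $i$ and $j$. Because the hypothesis guarantees at least $n$ inliers, I can always enlarge the pair $\{i,j\}$ to a subset $\subMeas{n} \subset \calY$ of exactly $n$ indices consisting \emph{entirely} of inliers: there are at least $n-2$ inliers besides $i$ and $j$ to complete the set. (Here I implicitly use $n \geq 2$, so that an edge between two distinct measurements is meaningful; for $n=2$ the subset is just $\{i,j\}$.) The key claim is then that this all-inlier subset $\subMeas{n}$ passes the compatibility test associated to the $n$-invariant $\vf$.

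To establish the key claim I would invoke the defining property of a compatibility test: by definition, the test is a binary condition such that whenever it \emph{fails}, the tested subset must contain at least one outlier. Taking the contrapositive, a subset containing no outlier --- that is, a set of $n$ inliers --- necessarily \emph{passes} the test. This is exactly the soundness property stressed after the definition. Concretely, this soundness holds by construction: the $n$-invariant yields an exact identity relating the stacked measurements $\measured_{\subMeas{n}}$ to a noise realization $\vdelta_{\subMeas{n}}$ that is independent of $\barvxx$ (e.g.,~\eqref{eq:p-reg-invariants}), and each compatibility test is obtained by bounding the resulting expression under the inlier assumption $\|\vepsilon_i\| \leq \beta$ (as in the derivation of~\eqref{eq:test-p-ref3}); hence an all-inlier subset always satisfies it.

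Once $\subMeas{n}$ is known to pass the test, I appeal to the construction of the compatibility graph: whenever a subset of $n$ measurements passes, edges are added among all of its $n$ vertices. In particular the edge $(i,j)$ is added to $\calE$. Since $i$ and $j$ were arbitrary inliers, every pair of inliers is connected by an edge, so the inlier set is a clique in $\calG$, which completes the proof.

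I expect the only delicate point to be the soundness step --- arguing that an all-inlier subset is \emph{guaranteed} to pass the test, rather than merely that it might. In the abstract formulation this follows directly from the definition of a compatibility test via its contrapositive, but it is worth stating explicitly, because this is exactly where the hypothesis ``at least $n$ inliers'' does its work: with fewer than $n$ inliers one could not form a size-$n$ all-inlier subset to witness the edge, and the mechanism producing the clique would break down.
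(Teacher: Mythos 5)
Your proof is correct and follows essentially the same route as the paper's: the paper's (much terser) argument likewise rests on the soundness of the compatibility tests --- any all-inlier subset of size $n$ passes --- combined with the edge-adding rule of the graph construction. Your write-up merely makes explicit what the paper leaves implicit, namely the reduction to pairs and the use of the ``at least $n$ inliers'' hypothesis to extend each pair $\{i,j\}$ to an all-inlier subset of size $n$ witnessing the edge.
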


Theorem~\ref{thm:inliers-form-clique} implies that we can look for inliers by computing cliques in the \compatibility graph.
Since we expect to have more \compatible inliers than outliers, we propose to compute the maximum clique,\omitted{\footnote{An alternative
approach is to explore all \emph{maximal} cliques, see~\suppMaterial.}} 
which is shown to work extremely well in practice in Section~\ref{sec:experiments}.
\removeRed{While modern maximum-clique algorithms are becoming faster and more scalable,
finding a maximum clique is NP-hard and its runtime %
often clashes  
with real-time robotics applications.}
 On the other hand, when the maximum clique is too expensive to compute, we propose to use the \emph{maximum \core{k}} (defined below) instead, which empirically approximates the maximum clique,
 while being much faster (linear-time~\cite{Dasari14-KCorePARK}) to compute. 

\begin{definition}[\core{k}, {Fig.~\ref{fig:graph}}]
A subgraph $\calC_k$ of $\calG$ is called a \emph{\core{k}} if it is the largest subgraph such that all its vertices 
have \vertex degree (\ie number of incident edges) at least $k$ within $\calC_k$. 
The \emph{core number} $k(i)$ of a \vertex $i$ is defined as the largest $k$ such that $i$ is in $\calC_{k}$.
The \emph{degeneracy} $k^{\star}(\calG)$ is the largest core number of all vertices in $\calG$.
\end{definition}

It is well-known that the maximum clique is included in the $(\omega(\calG)-1)$-core~\cite{Rossi15parallel}.
Although few conditions are known to guarantee that the maximum clique is inside the \emph{maximum} $k$-core, previous work~\cite{Walteros20OR-mcandmk} has observed that this is often the case in practical problems.
Indeed, Section~\ref{sec:experiments} shows that --in compatibility graphs arising in geometric perception-- the maximum \core{k} provides a great approximation to the maximum clique, while being faster to compute.

  While maximum clique has been used for outlier rejection in specific applications~\cite{Yang20tro-teaser,Mangelson18icra}, 
  our framework generalizes to $n$($\geq\!\!2$)-\invariants (rather than pairwise) and also provides a linear-time algorithm (based on \core{k}) for outlier rejection.
   The interested reader can find a broader discussion in~\suppMaterial.

\section{\robin: Summary of Our Algorithm}
\label{sec:robin}

This section summarizes how we combine measurement invariants and graph theory to develop our outlier pruning approach, named 
 \robin (\emph{\robinLong}). 
 \robin's pseudocode is given in Algorithm~\ref{alg:robin}.
 The algorithm takes a set of measurements $\calY$ and outputs a subset $\calY^\star \subset \calY$ from which many outliers have been pruned. 
 If the problem admits an $n$-measurement invariant, \robin first performs \compatibility tests on all subsets of $n$ measurements  
 and builds the corresponding \compatibility graph (lines~\ref{line:startGraph}-\ref{line:endGraph}). 
 Then it uses maximum clique or \core{k} solvers to compute and return the subset of measurements containing the inliers 
 (lines~\ref{line:graphTheory}-\ref{line:return}). 
We have implemented \robin~in C++, using the parallel core decomposition algorithm from~\cite{Kabir17IPDPSW-KCorePKC} and parallel maximum clique solver from~\cite{Rossi15parallel}.
We remark that \robin is not guaranteed to reject all outliers.
 Indeed,  as mentioned in the introduction, \robin is designed to be a preprocessing 
step to enhance the robustness of existing robust estimators.

\setlength{\textfloatsep}{0pt}%
\begin{algorithm}[t]
{\footnotesize
\SetAlgoLined
\textbf{Input:} \ set of measurements $\calY$ and model~\eqref{eq:measurements}; $n$-\invariant function $\vf$ (for some $n$); boolean fastMode (use maximum \core{k} if true)\; 
\textbf{Output:} \  subset $\calY^\star \subset \calY$ containing inliers\;
\% Initialize \compatibility graph \label{line:startGraph} \\
$\calV  = \calY$; \!\!\quad  \% each vertex is a measurement \label{line:vertices}\\ 
$\calE  = \emptyset$; \quad \% start with empty edge set \label{line:emptyEdges}\\ 
\% Perform \compatibility tests \\
\For{{\bf all subsets } $\subMeas{n} \subset \calY$ {\bf of size $n$}}{
	\If{ {\rm testCompatibility}($\subMeas{n},\vf$) = {\rm pass} }{
		add edges between any $i,j \in \subMeas{n}$ to $\calE$;   \label{line:addEdges} \\
	}
} \label{line:endGraph}
\% Find \compatible measurements \label{line:graphTheory} \\
\eIf{ {\rm fastMode} = {\rm true}}{
	$\calY^\star = {\rm max\_kcore}(\calV,\calE)$;  \\
}{
	$\calY^\star = {\rm max\_clique}(\calV,\calE)$;  \\
}
 \textbf{return:} $\calY^\star$. \label{line:return}
 \caption{\robin \label{alg:robin}}
}
\end{algorithm}

\section{Experiments}
\label{sec:experiments}
We test the performance of \robin on four geometric perception problems: single rotation averaging (Section~\ref{sec:exp:singlerotationavg}), point cloud registration and point-with-normal registration 
(Section~\ref{sec:exp:pointcloudreg} and Section~\ref{sec:exp:regNoCorrespondences}),
and 2D-3D camera pose estimation (Section~\ref{sec:exp:cross-ratio}).
We show that (i) \robin boosts the robustness of \GNC to more than $95\%$ outliers and dominates the state of the art, and
 (ii) \robin runs in tens of milliseconds on challenging large-scale datasets.
In addition, we present an experiment in which the maximum \core{k} becomes overly conservative and fails to reject outliers (Section~\ref{sec:exp:regNoCorrespondences}).

We denote as \robinstar the version of \robin~that computes the maximum clique (fastMode=false in Algorithm~\ref{alg:robin}), 
while we denote as \robin the version that computes the maximum \core{k}. 
Moreover, we denote as \robinstargnc (resp. \robingnc) the estimation approach where we first 
remove outliers using \robinstar (resp. \robin), and then use \GNC~\cite{Yang20ral-GNC} to compute an estimate.
All experiments are run on a Linux computer with an Intel i9-9920X CPU at 3.5 GHz.

\newcommand{\mpwthree}{5.9cm}
\newcommand{\mpwthreetwo}{11.4cm}
\newcommand{\myhspace}{\hspace{-6mm}}

\begin{figure*}[h]
	\begin{center}
	\begin{minipage}{\textwidth}
	\hspace{-0.2cm}
	\begin{tabular}{ccc}%
	  \vspace{2mm}
		\begin{minipage}{\mpwthree}%
			\centering%
			\includegraphics[width=\columnwidth]{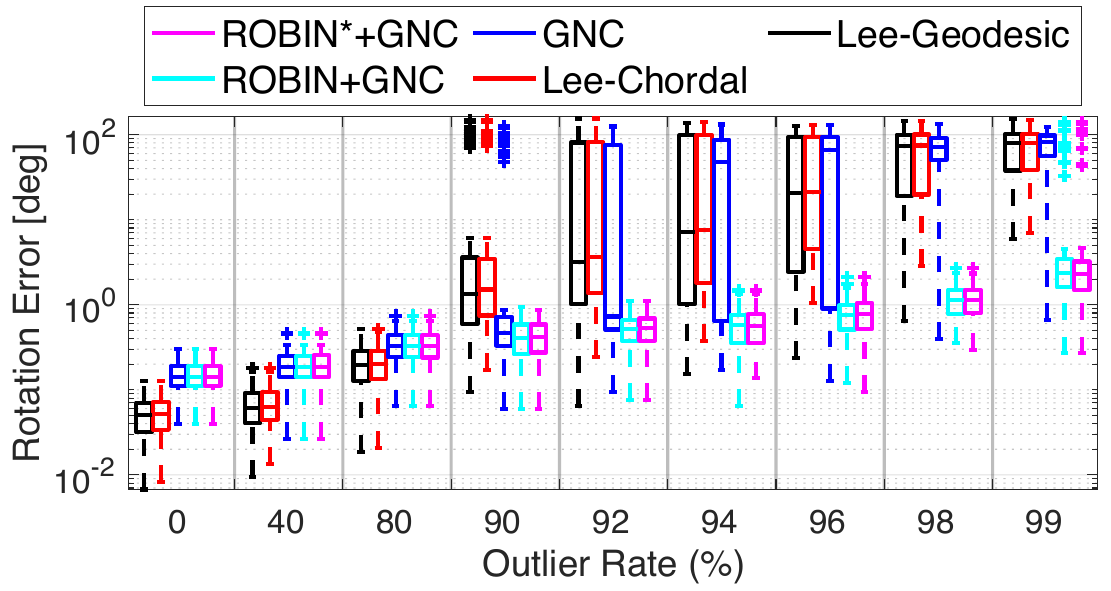} \\
			\vspace{-3mm}
			\caption{{Single rotation averaging.}}
			\label{fig:SRA-errors}
		  \end{minipage}
	  &
		\multicolumn{2}{c}{%
		\hspace{-5mm}
			\begin{minipage}{11.4cm}%

\renewcommand{\arraystretch}{1.4}

\adjustbox{max width=11.5cm}{%
\hspace{10mm}
\begin{tabular}{cccccccccc}
Scenes     & \shortstack{Kitchen \\(\%)} & \shortstack{Home 1 \\ (\%)}  & \shortstack{ Home 2\\ (\%)} & \shortstack{Hotel 1\\ (\%)} & \shortstack{Hotel 2 \\  (\%)} & \shortstack{ Hotel 3\\ (\%)} & \shortstack{ Study \\ Room (\%)} & \shortstack{ MIT Lab \\(\%)} & \shortstack{Average \\ Runtime [ms]} \\
\hline
\!\!\teaserpp \!\!& {\bf 93.9}         & {\bf 93.6}        & 71.6      & 96.0       & 87.5       & {\bf 92.6}       & {\bf 79.8}         & {\bf 76.6}   & 33.8                 \\
\hline
\!\!\robinteaserpp\!\! & 83.0         & 84.0        & 68.3      & 90.3       & 81.7       & 87.0       & 71.2          & 64.9   & 14.9                  \\
\hline
\!\!$\substack{\text{\normalsize\robinstargnc} \\ \text{\footnotesize(points with normals)}}$\!\! & 91.9         & 89.1        & {\bf 74.5}      & {\bf 96.5}       & {\bf 89.4}       & {\bf 92.6}       & 77.7          & 71.4   &  41.2                \\
\hline
\!\!$\substack{\text{\normalsize\robingnc} \\ \text{\footnotesize(points with normals)}}$\!\! & 91.9         & 89.1        & {\bf 74.5}      & {\bf 96.5}       & {\bf 89.4}       & {\bf 92.6}       & 77.7          & 71.4   & 39.5                   \\
\end{tabular}
}\\
\captionof{table}{\smaller Percentage of successful registrations and runtime on the 3DMatch dataset~\cite{Zeng17cvpr-3dmatch}.
\label{tab:scanMatching}}

			\end{minipage}
		}
		\\
		\begin{minipage}{\mpwthree}%
			\centering%
			\includegraphics[width=0.95\columnwidth]{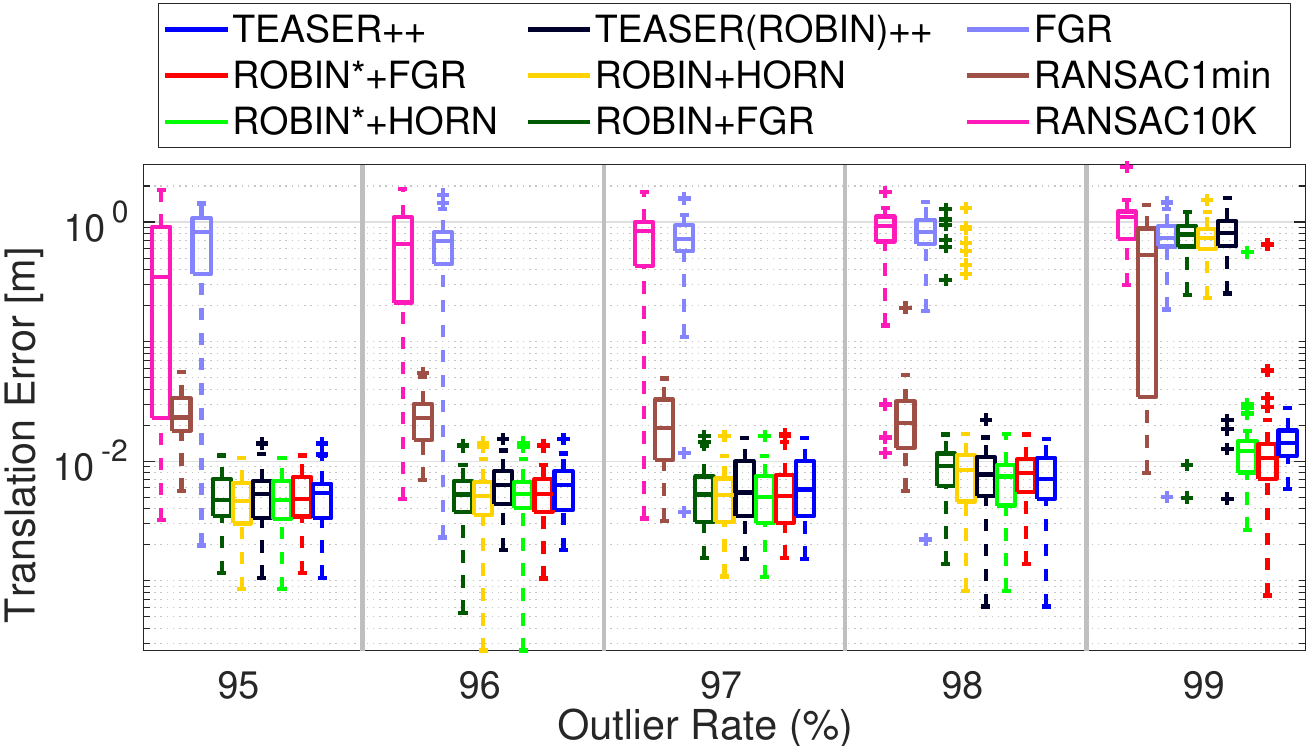} \\ \vspace{-7mm}\hspace{-5cm}
			\caption{{Registration with correspondences.}}\label{fig:3dreg-with-features}
			\end{minipage}
		& \hspace{-6mm}
			\begin{minipage}{\mpwthree}%
			\vspace{-3mm}
			\centering%
			\includegraphics[width=0.95\columnwidth]{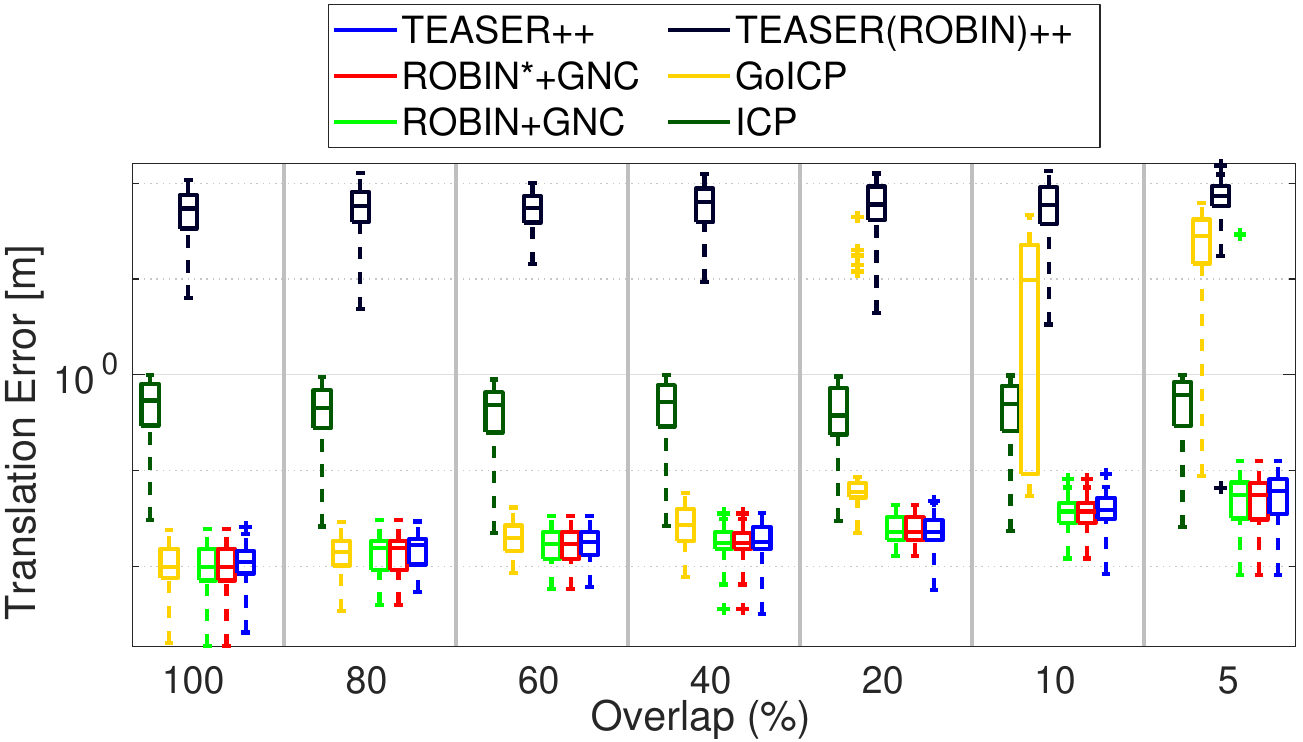}\\ \vspace{-7mm}\hspace{-5cm}
	        \caption{Registration without correspondences}\label{fig:3dreg-SPC}
			\end{minipage}
		& \hspace{-8mm}
			\begin{minipage}{\mpwthree}%
			\centering%
			\vspace{-3mm}
			\includegraphics[width=0.9\columnwidth]{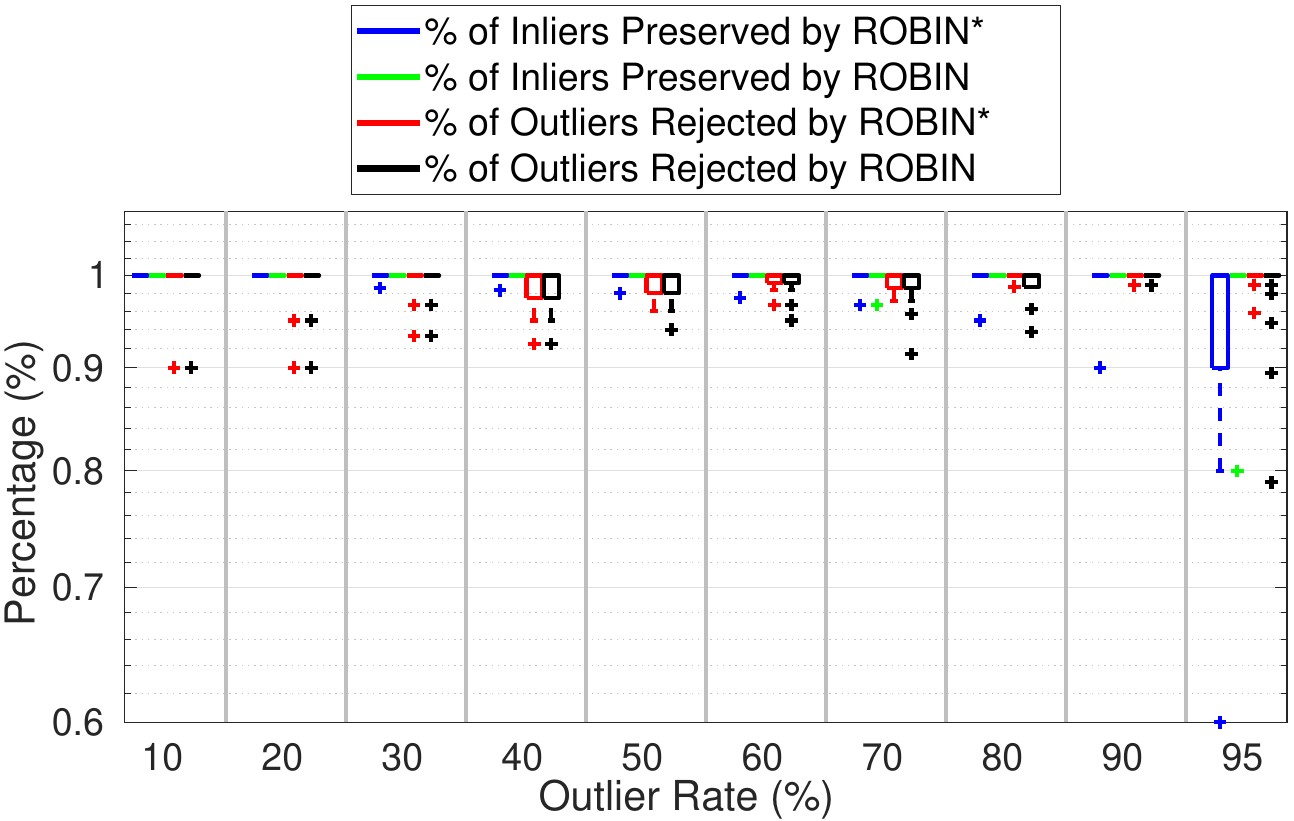}  \\ \vspace{-7mm}\hspace{-5cm}
	     \caption{2D-3D pose estimation}\label{fig:cross-ratio-stats}
			\end{minipage}\\
		\end{tabular}
	\end{minipage}
	\end{center}
\vspace{-7mm}
\end{figure*}

\subsection{Single Rotation Averaging}
\label{sec:exp:singlerotationavg}

\myParagraph{Setup} In each Monte Carlo run, we first generate a random ground-truth rotation $\Rgt \in \SOthree$. Then we create $N=1000$ measurements $\cbrace{\MR_i}_{i=1}^N$ with increasing outlier rates from $0\%$ to $99\%$. An inlier measurement $\MR_i$ is generated as $\MR_i = \Rgt \Exp(\theta_i \vu_i)$,
where $\vu_i$ is a random 3D unit vector and $\theta_i \sim \calN(0,\sigma^2)$ is a random rotation angle ($\sigma = 5^{\circ}$),
while an outlier measurement is a random 3D rotation. We compare five algorithms in 100 Monte Carlo runs: (i) \robinstargnc;
(ii) \robingnc;
(iii) \GNC (without \robin); 
(iv-v) \leechordal and \leegeodesic~\cite{Lee20arXiv-robustSRA}, 
two recent state-of-the-art
methods (we use the authors' implementations).

\myParagraph{Results} Fig.~\ref{fig:SRA-errors} boxplots the rotation errors of the five algorithms under selected outlier rates (full results can be found in \suppMaterial). Both \robinstargnc and \robingnc are robust against $98\%$ outliers, while \GNC, \leechordal, and \leegeodesic start failing at $90\%$ outliers. As expected, \robinstargnc slightly outperforms \robingnc at $99\%$ outliers, as shown by the fewer failures (``{\bf \red{+}}'') of \robinstargnc.

\subsection{Registration With Correspondences}\label{sec:exp:pointcloudreg}

\myParagraph{Simulated Benchmarks}
We use the \bunny~model from the Stanford 3D scanning repository.
We downsample the model to 1000 points and resize within a $[0,1]^{3}$ cube to obtain the source point cloud.
We then apply a random transformation $(\MR, \vt)$ with $\MR \in \SO3$ and $\|t\| \leq 1$ according to~\eqref{eq:p-reg-gen-model}.
We add bounded noise $\vepsilon_i \sim \calN(\bm{0},\sigma^2 \MI)$ 
and ensure $\|\vepsilon_i\| \leq\!\!\beta$.
We set $\sigma = 0.01$ and $\beta = 5.54 \sigma$ such that $\mathbb{P}\left(\|\vepsilon_i \|^2  > \beta^2 \right) \leq 10^{-6}$.
To generate outliers, we replace some $\vb_{i}$'s with vectors uniformly sampled in a sphere with radius $5$.
We benchmark 9 methods: 
 (i)~\FGR: Fast Global Registration~\cite{Zhou16eccv-fastGlobalRegistration}, which is a \GNC approach tailored to registration;
 (ii-iii)~\robinstarfgr, \robinfgr: \FGR with our outlier pruning;
 (iv-v)~\robinstarhorn, \robinhorn:  Horn's method~\cite{Horn87josa} with our outlier pruning;
 (vi)~\teaserpp~\cite{Yang20tro-teaser}, which is equivalent to using \robinstar~with a decoupled \GNC~solver;
 (vii)~\robinteaserpp: same as \teaserpp, except \robin (with \core{k}) is used to reject outliers; 
 (viii-ix)~\ransaconemin~and \ransactenk: two \ransac~variants (with 99\% confidence) that terminate after a maximum of one minute or 10,000 iterations, respectively.

\myParagraph{Results} Fig.~\ref{fig:3dreg-with-features} shows translation errors for increasing outlier rates 
(rotation errors are qualitatively similar and reported in~\suppMaterial).
None of the \robin-based methods fail at outlier rates below 95\%, so we omit those results for brevity.
Methods using \robinstar~and \robin~dominate \FGR and \ransac.
 \robinhorn, \robinfgr, \robinteaserpp~fail at 99\%.
Notably, \robinstar~is so robust that even Horn's method ({which is }not robust to outliers) succeeds at {98\% outliers}.
All six \robin-based methods have runtime of tens of milliseconds (average: 4ms), 
with \robinteaserpp~being the fastest, {due to its optimized C++ implementation of GNC.} %

\myParagraph{3DMatch Dataset}
We evaluate \robin on the 8 test scenes of the 3DMatch dataset~\cite{Zeng17cvpr-3dmatch}, which contains real-world indoor {RGB-D} scans.
Using FPFH descriptors, we generate feature correspondences and feed them to \teaserpp and \robinteaserpp.
We also compute the normals using Open3D~\cite{Zhou18arxiv-open3D} and
feed the point-with-normal correspondences to \robinstargnc~and \robingnc~(the description of the
corresponding \GNC solver can be found in~\suppMaterial).

\myParagraph{Results} Table~\ref{tab:scanMatching} shows the percentage of successful registrations for each scene.
A registration is considered successful if the 
 estimated transformation has (i) rotation error $\leq 15^{\circ}$, and (ii) translation error $\leq 30$cm (similar to \cite{Yang20tro-teaser}).
 In some scenarios, using normals might boost performance, depending on the quality of the normals.
  More interestingly, using normals makes \robinstargnc~and \robingnc 
 attain similar performance: intuitively, better invariants lead to sparser \compatibility graphs, making
 the maximum \core{k} a better approximation of the maximum clique.
In terms of runtime, \robin can be up to 2 times faster than \robinstar.

\subsection{Registration Without Correspondences}\label{sec:exp:regNoCorrespondences}

\myParagraph{Setup} We test the extreme case where we 
assume all-to-all correspondences between the two point clouds~\cite{Yang20tro-teaser}.
We use the \teddyBear~model from \homebrewedDB \cite{Kaskman19-homebrewedDB}.
Contrary to \bunny, \teddyBear~comes with precomputed normals so we also test the point-with-normal variant of \robin.
We first downsample the \teddyBear~to 100 points to obtain the source point cloud.
Then, we apply a random rotation and translation to obtain the transformed point cloud.
To simulate a partial overlap, we randomly discard a percentage of the transformed point cloud.
For example, to simulate a 80\% overlap, we randomly discard 20\% of the transformed points.
We compare the performance of 6 algorithms in 40 Monte Carlo runs: (i-ii) \robinstargnc and \robingnc, which 
use points with normals;
(iii-iv) \teaserpp, \robinteaserpp, which disregard the normals,
(v) \GoICP~\cite{Yang16pami-goicp}, and (vi) ICP.
We feed \robinstargnc, \robingnc, \teaserpp,~and \robinteaserpp all possible correspondences: for each point in a source point cloud, we add all transformed points as potential correspondences.
For ICP, we use the identity as initial guess.

{\bf Results}.
Fig.~\ref{fig:3dreg-SPC} shows translation errors for different overlap ratios 
(rotation errors are similar and reported in~\suppMaterial).
ICP fails to converge in most cases, due to the inaccurate initial guess.
\GoICP~is robust until 20\% overlap.
\robinstargnc, \robingnc, \teaserpp ensure best performance for all overlap ratios. \robinstargnc~and \robingnc~have slightly lower errors (since they use extra information from the normals). 
As expected the variants using the maximum clique are more robust and still recover highly-accurate estimates at 5\% overlap. 
Unlike \robingnc, \robinteaserpp fails at all overlap ratios because the maximum $k$-core is only able to reject a small percentage of outliers, inducing failures in the robust estimator.
In terms of runtime, the \core{k} variants are faster: the average runtime for \robin ranges from $0.1$ms to $46$ms, and is 2 to 3.4 times faster than \robinstar.
  In our current implementation the construction of the \compatibility graph is single-threaded and dominates the total runtime; 
  this provides further speed-up opportunities.

\subsection{2D-3D Pose Estimation}
\label{sec:exp:cross-ratio}

\myParagraph{Setup} We evaluate \robin and \robinstar in a simulated 2D-3D pose estimation problem.
We use the cross ratio as a $4$-measurement invariant. 
We assume a pinhole camera projection model, with a image size of $640 \times 480$.
We general 100 collinear 3D points and ensure they are in the field of view of the camera (more details in~\suppMaterial).
We then project the 3D points back to the image plane to have 2D-3D correspondences.
Bounded random noise %
$\vepsilon_i \sim \calN(\bm{0},\sigma^2 \MI)$, with $\sigma=0.1$, is added to the 2D projections, 
and we ensure $\|\vepsilon_i\| \leq\! \beta  = 0.25 $.
Outliers are introduced by replacing some of the 2D points with randomly sampled points in the image.
We conduct 40 Monte Carlo runs %
and record: (i) the percentage of rejected outliers, and (ii) the percentage of inliers in the set returned by \robin and \robinstar.

\myParagraph{Results} Fig.~\ref{fig:cross-ratio-stats} shows 
that both \robin and \robinstar reject nearly 100\% outliers at all outlier levels. The percentages of inliers preserved by \robin and \robinstar are close to 100\% until 90\%, after which it drops to around 90\%; this is the case where outlier correspondences are starting to form large cliques.
We remark that \robin and \robinstar are able to reject outliers even in a setup (such as the one we used in our simulations) where the pose cannot be uniquely computed due to the collinearity of the points.

\section{Conclusion}
\label{sec:conclusion}

We proposed \robin, a fast approach to prune outliers in generic estimation problems.
\robin uses the notion of $n$-\invariant to quickly check if subsets of measurements 
are mutually compatible. %
Then, it prunes outliers by computing the maximum clique or maximum \core{k} of the graph induced by the 
invariants. We evaluate \robin in several problems and 
show that \robin boosts robustness of existing estimators --making them robust to more than 95\% outliers-- and
 runs in milliseconds on large-scale problems.

\isExtended{}
{
	\clearpage
}

\bibliographystyle{ieee}
\bibliography{./myRefs,./refs}

\begin{thebibliography}{10}\itemsep=-1pt

\bibitem{Abramowitz74book}
M.~Abramowitz.
\newblock {\em Handbook of Mathematical Functions, With Formulas, Graphs, and
  Mathematical Tables}.
\newblock Dover Publications, Inc., 1974.

\bibitem{Agarwal13icra}
P.~Agarwal, G.~D. Tipaldi, L.~Spinello, C.~Stachniss, and W.~Burgard.
\newblock Robust map optimization using dynamic covariance scaling.
\newblock In {\em IEEE Intl. Conf. on Robotics and Automation (ICRA)}, 2013.

\bibitem{Antonante20arxiv-outlierRobustEstimation}
P.~Antonante, V.~Tzoumas, H.~Yang, and L.~Carlone.
\newblock Outlier-robust estimation: Hardness, minimally-tuned algorithms, and
  applications.
\newblock {\em arXiv preprint arXiv: 2007.15109}, 2020.
\newblock \linkToPdf{https://arxiv.org/pdf/2007.15109.pdf}.

\bibitem{Arun87pami}
K.~Arun, T.~Huang, and S.~Blostein.
\newblock Least-squares fitting of two 3-{D} point sets.
\newblock {\em {IEEE} Trans. Pattern Anal. Machine Intell.}, 9(5):698--700,
  sept. 1987.

\bibitem{Ask13cvpr-optimalTruncatedL2}
E.~Ask, O.~Enqvist, and F.~Kahl.
\newblock Optimal geometric fitting under the truncated l2-norm.
\newblock In {\em Proceedings of the IEEE Conference on Computer Vision and
  Pattern Recognition}, pages 1722--1729, 2013.

\bibitem{Bailey00icra-dataAssociation}
T.~{Bailey}, E.~M. {Nebot}, J.~K. {Rosenblatt}, and H.~F. {Durrant-Whyte}.
\newblock Data association for mobile robot navigation: a graph theoretic
  approach.
\newblock In {\em IEEE Intl. Conf. on Robotics and Automation (ICRA)},
  volume~3, pages 2512--2517, 2000.

\bibitem{Barath18ivpr-graphCutRANSAC}
D.~Barath and J.~Matas.
\newblock Graph-cut ransac.
\newblock In {\em Proceedings of the IEEE Conference on Computer Vision and
  Pattern Recognition}, pages 6733--6741, 2018.

\bibitem{Barath18-gcRANSAC}
D.~Barath and J.~Matas.
\newblock Graph-cut ransac.
\newblock In {\em Proceedings of the IEEE Conference on Computer Vision and
  Pattern Recognition}, pages 6733--6741, 2018.

\bibitem{Barfoot17book}
T.~Barfoot.
\newblock {\em State Estimation for Robotics}.
\newblock Cambridge University Press, 2017.

\bibitem{Barron19cvpr-adaptRobustLoss}
J.~T. Barron.
\newblock A general and adaptive robust loss function.
\newblock In {\em Proceedings of the IEEE Conference on Computer Vision and
  Pattern Recognition}, pages 4331--4339, 2019.

\bibitem{Black96ijcv-unification}
M.~J. Black and A.~Rangarajan.
\newblock On the unification of line processes, outlier rejection, and robust
  statistics with applications in early vision.
\newblock {\em Intl. J. of Computer Vision}, 19(1):57--91, 1996.

\bibitem{Blake1987book-visualReconstruction}
A.~Blake and A.~Zisserman.
\newblock {\em Visual reconstruction}.
\newblock MIT Press, 1987.

\bibitem{Bosse17fnt}
M.~Bosse, G.~Agamennoni, and I.~Gilitschenski.
\newblock Robust estimation and applications in robotics.
\newblock {\em Foundations and Trends in Robotics}, 4(4):225--269, 2016.

\bibitem{Bray06eccv}
M.~Bray, P.~Kohli, and P.~Torr.
\newblock {PoseCut}: Simultaneous segmentation and 3d pose estimation of humans
  using dynamic graph-cuts.
\newblock In {\em European Conf. on Computer Vision (ECCV)}, pages 642--655,
  2006.

\bibitem{Cadena16tro-SLAMsurvey}
C.~Cadena, L.~Carlone, H.~Carrillo, Y.~Latif, D.~Scaramuzza, J.~Neira, I.~Reid,
  and J.~Leonard.
\newblock Past, present, and future of simultaneous localization and mapping:
  Toward the robust-perception age.
\newblock {\em {IEEE} Trans. Robotics}, 32(6):1309--1332, 2016.
\newblock arxiv preprint: 1606.05830,
  \linkToPdf{https://arxiv.org/abs/1606.05830}.

\bibitem{Cai19ICCV-CMtreeSearch}
Z.~Cai, T.-J. Chin, and V.~Koltun.
\newblock Consensus maximization tree search revisited.
\newblock In {\em Intl. Conf. on Computer Vision (ICCV)}, pages 1637--1645,
  2019.

\bibitem{Carlone14iros-robustPGO2D}
L.~Carlone, A.~Censi, and F.~Dellaert.
\newblock Selecting good measurements via $\ell_1$ relaxation: a convex
  approach for robust estimation over graphs.
\newblock In {\em IEEE/RSJ Intl. Conf. on Intelligent Robots and Systems
  (IROS)}, 2014.
\newblock
  \linkToPdf{https://www.dropbox.com/s/7f304d5ag245ie4/2014c-IROS-outlierRejection.pdf?dl=0}.

\bibitem{Chin18eccv-robustFitting}
T.-J. Chin, Z.~Cai, and F.~Neumann.
\newblock Robust fitting in computer vision: Easy or hard?
\newblock In {\em European Conf. on Computer Vision (ECCV)}, 2018.

\bibitem{Chin15cvpr-CMTreeAstar}
T.-J. Chin, P.~Purkait, A.~Eriksson, and D.~Suter.
\newblock Efficient globally optimal consensus maximisation with tree search.
\newblock In {\em Proceedings of the IEEE Conference on Computer Vision and
  Pattern Recognition}, pages 2413--2421, 2015.

\bibitem{Chin17slcv-maximumConsensusAdvances}
T.~J. Chin and D.~Suter.
\newblock The maximum consensus problem: recent algorithmic advances.
\newblock {\em Synthesis Lectures on Computer Vision}, 7(2):1--194, 2017.

\bibitem{Choi09cvpr-starsac}
J.~Choi and G.~Medioni.
\newblock Starsac: Stable random sample consensus for parameter estimation.
\newblock In {\em 2009 IEEE Conference on Computer Vision and Pattern
  Recognition}, pages 675--682. IEEE, 2009.

\bibitem{Chum05cvpr}
O.~Chum and J.~Matas.
\newblock Matching with {PROSAC} - progressive sample consensus.
\newblock In {\em IEEE Conf. on Computer Vision and Pattern Recognition
  (CVPR)}, 2005.

\bibitem{Dasari14-KCorePARK}
N.~S. Dasari, R.~Desh, and M.~Zubair.
\newblock Park: An efficient algorithm for k-core decomposition on multicore
  processors.
\newblock In {\em 2014 IEEE International Conference on Big Data (Big Data)},
  pages 9--16.

\bibitem{Dellaert17fnt-factorGraph}
F.~Dellaert and M.~Kaess.
\newblock Factor graphs for robot perception.
\newblock {\em Foundations and Trends in Robotics}, 6(1-2):1--139, 2017.

\bibitem{Enqvist12eccv-robustFitting}
O.~Enqvist, E.~Ask, F.~Kahl, and K.~{\AA}str{\"o}m.
\newblock Robust fitting for multiple view geometry.
\newblock In {\em European Conf. on Computer Vision (ECCV)}, pages 738--751.
  Springer, 2012.

\bibitem{Enqvist09iccv}
O.~{Enqvist}, K.~{Josephson}, and F.~{Kahl}.
\newblock Optimal correspondences from pairwise constraints.
\newblock In {\em Intl. Conf. on Computer Vision (ICCV)}, pages 1295--1302,
  2009.

\bibitem{Fischler81}
M.~Fischler and R.~Bolles.
\newblock Random sample consensus: a paradigm for model fitting with
  application to image analysis and automated cartography.
\newblock {\em Commun. ACM}, 24:381--395, 1981.

\bibitem{Gros92-projectiveInvariantsTheory}
P.~Gros and L.~Quan.
\newblock Projective invariants for vision.
\newblock 1992.

\bibitem{Hartley13ijcv}
R.~Hartley, J.~Trumpf, Y.~Dai, and H.~Li.
\newblock Rotation averaging.
\newblock {\em IJCV}, 103(3):267--305, 2013.

\bibitem{Hartley04}
R.~I. Hartley and A.~Zisserman.
\newblock {\em Multiple View Geometry in Computer Vision}.
\newblock Cambridge University Press, second edition, 2004.

\bibitem{Horn87josa}
B.~K.~P. Horn.
\newblock Closed-form solution of absolute orientation using unit quaternions.
\newblock {\em J. Opt. Soc. Amer.}, 4(4):629--642, Apr 1987.

\bibitem{Huber81}
P.~Huber.
\newblock {\em Robust Statistics}.
\newblock John Wiley \& Sons, New York, NY, 1981.

\bibitem{Izatt17isrr-MIPregistration}
G.~Izatt, H.~Dai, and R.~Tedrake.
\newblock Globally optimal object pose estimation in point clouds with
  mixed-integer programming.
\newblock In {\em Proc. of the Intl. Symp. of Robotics Research (ISRR)}, 2017.

\bibitem{Kabir17IPDPSW-KCorePKC}
H.~Kabir and K.~Madduri.
\newblock Parallel k-core decomposition on multicore platforms.
\newblock In {\em 2017 IEEE International Parallel and Distributed Processing
  Symposium Workshops (IPDPSW)}, pages 1482--1491. IEEE, 2017.

\bibitem{Kaskman19-homebrewedDB}
R.~Kaskman, S.~Zakharov, I.~Shugurov, and S.~Ilic.
\newblock Homebreweddb: Rgb-d dataset for 6d pose estimation of 3d objects.
\newblock In {\em Proceedings of the IEEE International Conference on Computer
  Vision Workshops}, pages 0--0, 2019.

\bibitem{Lajoie19ral-DCGM}
P.~Lajoie, S.~Hu, G.~Beltrame, and L.~Carlone.
\newblock Modeling perceptual aliasing in {SLAM} via discrete-continuous
  graphical models.
\newblock {\em {IEEE} Robotics and Automation Letters ({RA-L})}, 2019.
\newblock extended ArXiv version:
  \linkToPdf{https://arxiv.org/pdf/1810.11692.pdf}, Supplemental Material:
  \linkToPdf{https://www.dropbox.com/s/vupak65wi75yzbl/2018j-RAL-DCGM-supplemental.pdf?dl=0}.

\bibitem{Lee20arXiv-robustSRA}
S.~H. Lee and J.~Civera.
\newblock Robust single rotation averaging.
\newblock {\em arXiv preprint arXiv:2004.00732}, 2020.

\bibitem{Leordeanu05-spectral}
M.~Leordeanu and M.~Hebert.
\newblock A spectral technique for correspondence problems using pairwise
  constraints.
\newblock In {\em Tenth IEEE International Conference on Computer Vision
  (ICCV'05) Volume 1}, volume~2, pages 1482--1489. IEEE, 2005.

\bibitem{Li09cvpr-robustFitting}
H.~Li.
\newblock Consensus set maximization with guaranteed global optimality for
  robust geometry estimation.
\newblock In {\em Intl. Conf. on Computer Vision (ICCV)}, pages 1074--1080,
  2009.

\bibitem{Mangelson18icra}
J.~G. Mangelson, D.~Dominic, R.~M. Eustice, and R.~Vasudevan.
\newblock Pairwise consistent measurement set maximization for robust
  multi-robot map merging.
\newblock In {\em IEEE Intl. Conf. on Robotics and Automation (ICRA)}, pages
  2916--2923, 2018.

\bibitem{markley1988jas-svdAttitudeDeter}
F.~L. Markley.
\newblock Attitude determination using vector observations and the singular
  value decomposition.
\newblock {\em The Journal of the Astronautical Sciences}, 36(3):245--258,
  1988.

\bibitem{Mundy92book}
J.~Mundy and A.~Zisserman.
\newblock {\em Geometric invariance in computer vision}.
\newblock MIT Press, Cambridge, MA, USA, 1992.

\bibitem{Neira01tra}
J.~Neira and J.~Tard{\'o}s.
\newblock Data association in stochastic mapping using the joint compatibility
  test.
\newblock {\em {IEEE} Trans. Robot. Automat.}, 17(6):890--897, December 2001.

\bibitem{Bustos2015iccv-gore3D}
{\'A}.~{Parra Bustos} and T.~J. Chin.
\newblock Guaranteed outlier removal for rotation search.
\newblock In {\em Proceedings of the IEEE International Conference on Computer
  Vision}, pages 2165--2173, 2015.

\bibitem{Bustos18pami-GORE}
{\'A}.~{Parra Bustos} and T.~J. Chin.
\newblock Guaranteed outlier removal for point cloud registration with
  correspondences.
\newblock {\em {IEEE} Trans. Pattern Anal. Machine Intell.}, 40(12):2868--2882,
  2018.

\bibitem{Parra19arXiv-practicalMaxClique}
A.~Parra~Bustos, T.-J. Chin, F.~Neumann, T.~Friedrich, and M.~Katzmann.
\newblock A practical maximum clique algorithm for matching with pairwise
  constraints.
\newblock {\em arXiv preprint arXiv:1902.01534}, 2019.

\bibitem{Pavlakos17icra-semanticKeypoints}
G.~Pavlakos, X.~Zhou, A.~Chan, K.~G. Derpanis, and K.~Daniilidis.
\newblock 6-dof object pose from semantic keypoints.
\newblock In {\em IEEE Intl. Conf. on Robotics and Automation (ICRA)}, 2017.

\bibitem{Perera12-maxCliqueSegmentation}
S.~Perera and N.~Barnes.
\newblock Maximal cliques based rigid body motion segmentation with a rgb-d
  camera.
\newblock In {\em Asian Conference on Computer Vision}, pages 120--133.
  Springer, 2012.

\bibitem{Raguram12pami-usac}
R.~Raguram, O.~Chum, M.~Pollefeys, J.~Matas, and J.-M. Frahm.
\newblock Usac: a universal framework for random sample consensus.
\newblock {\em IEEE transactions on pattern analysis and machine intelligence},
  35(8):2022--2038, 2012.

\bibitem{Raguram08-RANSACcomparative}
R.~Raguram, J.-M. Frahm, and M.~Pollefeys.
\newblock A comparative analysis of ransac techniques leading to adaptive
  real-time random sample consensus.
\newblock In {\em European Conference on Computer Vision}, pages 500--513.
  Springer, 2008.

\bibitem{Rosinol20rss-dynamicSceneGraphs}
A.~Rosinol, A.~Gupta, M.~Abate, J.~Shi, and L.~Carlone.
\newblock {3D} dynamic scene graphs: Actionable spatial perception with places,
  objects, and humans.
\newblock In {\em Robotics: Science and Systems (RSS)}, 2020.
\newblock \linkToPdf{https://arxiv.org/pdf/2002.06289.pdf},
  \linkToVideo{https://www.youtube.com/watch?v=SWbofjhyPzI&feature=youtu.be}.

\bibitem{Rossi15parallel}
R.~A. Rossi, D.~F. Gleich, and A.~H. Gebremedhin.
\newblock Parallel maximum clique algorithms with applications to network
  analysis.
\newblock {\em SIAM Journal on Scientific Computing}, 37(5):C589--C616, 2015.

\bibitem{Schonberger16cvpr-SfMRevisited}
J.~L. Schonberger and J.-M. Frahm.
\newblock Structure-from-motion revisited.
\newblock In {\em IEEE Conf. on Computer Vision and Pattern Recognition
  (CVPR)}, pages 4104--4113, 2016.

\bibitem{Tanaka06icra-incrementalRANSAC}
K.~Tanaka and E.~Kondo.
\newblock Incremental ransac for online relocation in large dynamic
  environments.
\newblock In {\em Proceedings 2006 IEEE International Conference on Robotics
  and Automation, 2006. ICRA 2006.}, pages 68--75. IEEE, 2006.

\bibitem{MacTavish15crv-robustEstimation}
K.~M. Tavish and T.~D. Barfoot.
\newblock At all costs: A comparison of robust cost functions for camera
  correspondence outliers.
\newblock In {\em Computer and Robot Vision (CRV), 2015 12th Conference on},
  pages 62--69. IEEE, 2015.

\bibitem{Torr00cviu}
P.~Torr and A.~Zisserman.
\newblock {MLESAC}: A new robust estimator with application to estimating image
  geometry.
\newblock {\em Comput. Vis. Image Underst.}, 78(1):138--156, 2000.

\bibitem{Walteros20OR-mcandmk}
J.~L. Walteros and A.~Buchanan.
\newblock Why is maximum clique often easy in practice?
\newblock {\em Operations Research}, 2020.

\bibitem{Yang20ral-GNC}
H.~Yang, P.~Antonante, V.~Tzoumas, and L.~Carlone.
\newblock Graduated non-convexity for robust spatial perception: From
  non-minimal solvers to global outlier rejection.
\newblock {\em {IEEE} Robotics and Automation Letters ({RA-L})},
  5(2):1127--1134, 2020.
\newblock arXiv preprint arXiv:1909.08605 (with supplemental material),
  \linkToPdf{https://arxiv.org/pdf/1909.08605.pdf} \award{,ICRA Best paper
  award in Robot Vision}.

\bibitem{Yang19rss-teaser}
H.~Yang and L.~Carlone.
\newblock A polynomial-time solution for robust registration with extreme
  outlier rates.
\newblock In {\em Robotics: Science and Systems (RSS)}, 2019.
\newblock
  \linkToPdf{http://rss2019.informatik.uni-freiburg.de/papers/0013_FI.pdf},
  \linkToVideo{http://rss2019.informatik.uni-freiburg.de/videos/0013_VI_fi.mp4},
  \linkToMedia{http://news.mit.edu/2019/spotting-objects-cars-robots-0620},
  \linkToMedia{https://www.sciencedaily.com/releases/2019/06/190620121444.htm},
  \linkToMedia{http://www.ansa.it/canale_scienza_tecnica/notizie/tecnologie/2019/06/21/i-robot-imparano-a-vedere-nella-nebbia-_9e59485c-ff17-4d62-8224-1d42f44111b9.html}.

\bibitem{Yang19iccv-QUASAR}
H.~Yang and L.~Carlone.
\newblock A quaternion-based certifiably optimal solution to the {Wahba}
  problem with outliers.
\newblock In {\em Intl. Conf. on Computer Vision (ICCV)}, 2019.
\newblock (Oral Presentation, accept rate: 4\%), Arxiv version: 1905.12536,
  \linkToPdf{https://arxiv.org/pdf/1905.12536.pdf}.

\bibitem{Yang20cvpr-shapeStar}
H.~Yang and L.~Carlone.
\newblock In perfect shape: Certifiably optimal {3D} shape reconstruction from
  {2D} landmarks.
\newblock In {\em IEEE Conf. on Computer Vision and Pattern Recognition
  (CVPR)}, 2020.
\newblock Arxiv version: 1911.11924,
  \linkToPdf{https://arxiv.org/pdf/1911.11924.pdf}.

\bibitem{Yang20neurips-certifiablePerception}
H.~Yang and L.~Carlone.
\newblock One ring to rule them all: Certifiably robust geometric perception
  with outliers.
\newblock In {\em Conference on Neural Information Processing Systems
  (NeurIPS)}, 2020.
\newblock \linkToPdf{https://arxiv.org/pdf/2006.06769.pdf}.

\bibitem{Yang20tro-teaser}
H.~Yang, J.~Shi, and L.~Carlone.
\newblock {TEASER: Fast and Certifiable Point Cloud Registration}.
\newblock {\em {IEEE} Trans. Robotics}, 2020.
\newblock extended arXiv version 2001.07715
  \linkToPdf{https://arxiv.org/pdf/2001.07715.pdf}.

\bibitem{Yang16pami-goicp}
J.~Yang, H.~Li, D.~Campbell, and Y.~Jia.
\newblock {Go-ICP}: A globally optimal solution to {3D ICP} point-set
  registration.
\newblock {\em {IEEE} Trans. Pattern Anal. Machine Intell.}, 38(11):2241--2254,
  Nov. 2016.

\bibitem{Zeng17cvpr-3dmatch}
A.~Zeng, S.~Song, M.~Nie{\ss}ner, M.~Fisher, J.~Xiao, and T.~Funkhouser.
\newblock 3dmatch: Learning the matching of local 3d geometry in range scans.
\newblock In {\em Proceedings of the IEEE Conference on Computer Vision and
  Pattern Recognition}, volume~1, page~4, 2017.

\bibitem{Zhou16eccv-fastGlobalRegistration}
Q.~Zhou, J.~Park, and V.~Koltun.
\newblock Fast global registration.
\newblock In {\em European Conf. on Computer Vision (ECCV)}, pages 766--782.
  Springer, 2016.

\bibitem{Zhou18arxiv-open3D}
Q.-Y. Zhou, J.~Park, and V.~Koltun.
\newblock {Open3D}: {A} modern library for {3D} data processing.
\newblock {\em arXiv:1801.09847}, 2018.

\end{thebibliography}

\isExtended{

\renewcommand{\thesection}{A\arabic{section}}
\renewcommand{\theequation}{A\arabic{equation}}
\renewcommand{\thetheorem}{A\arabic{theorem}}
\renewcommand{\thefigure}{A\arabic{figure}}
\renewcommand{\thetable}{A\arabic{table}}

\setcounter{equation}{0}
\setcounter{section}{0}
\setcounter{theorem}{0}
\setcounter{figure}{0}

\appendices 

\section{Proofs of Invariants}
\label{sec:proofInvariants}

\subsection{Pairwise Invariant for Rotation Averaging: Proof of Proposition~\ref{prop:rotAve-invariants}}
\begin{proof}
Applying $\vf$ to both sides of~\eqref{eq:rot-avg-gen-model} for two measurements $i$ and $j$, we get
\begin{multline} 
\vf(\MR_i, \MR_j) = \vf(\MR \Exp(\vepsilon_i), \MR \Exp(\vepsilon_j)) = \\
  \Exp(\vepsilon_i)\tran \MR\tran \MR \Exp(\vepsilon_j) = \Exp(\vepsilon_i)\tran
 \Exp(\vepsilon_j)
 \end{multline} 
 which proves eq.~\eqref{eq:rotAve-invariants} (since $\newnoise = \vepsilon$) and is independent on $\MR$, hence proving 
 pairwise invariance. 
\end{proof}

\subsection{Pairwise Invariant for Point Cloud Registration: Proof of Proposition~\ref{prop:p-reg-invariants}}
\begin{proof}
By inspection, applying $\vf$ to both sides of~\eqref{eq:p-reg-gen-model} for two measurements $i$ and $j$, we get 
$\vf(\vb_i, \vb_j) = \vf( \MR \va_i + \vt + \vepsilon_i, \MR \va_j + \vt + \vepsilon_j) = 
\| \MR \va_j + \vepsilon_j - \MR \va_i + \vepsilon_i\| $.
Since the 2-norm $\|\cdot\|$ is invariant to rotation and recalling the assumption of isotropic noise:
$\| \va_j  - \va_i + \MR \tran \vepsilon_j - \MR \tran \vepsilon_i\| = \| \va_j  - \va_i + \vdelta_j - \vdelta_i\|$, which proves~\eqref{eq:p-reg-invariants} and is independent on $\MR$ and $\vt$, hence proving 
 pairwise invariance. 
 \end{proof}

\subsection{Pairwise Invariant for Point-with-Normal Registration: Proof of Proposition~\ref{prop:pwithn-reg-invariants}}
\begin{proof}
In Proposition~\ref{prop:p-reg-invariants} we have already shown that  $\| \vb_j - \vb_i \|$ is an invariant 
for the point measurements.
Now we show that $(\vn^b_i)\tran \vn^b_j$ is also an invariant for the normal measurements.
Applying $(\vn^b_i)\tran \vn^b_j$ to both sides of the second equation in~\eqref{eq:pwithn-reg-gen-model} for two measurements $i$ and $j$, we get 
\begin{multline}
(\vn^b_i)\tran \vn^b_j = (\MR \Exp(\vnu_i) \vn^a_i)\tran (\MR \Exp(\vnu_j) \vn^a_j) = \\
(\vn^a_i)\tran  \Exp(\vnu_i)\tran \Exp(\vnu_j) \vn^a_j,
\end{multline}
which does not depend on $\MR$ and $\vt$.
\end{proof}

\section{Invariants in Computer Vision}
\label{sec:invariantTheory}

The interested reader can find a broader discussion on invariance in~\cite{Mundy92book}, which provideds a detailed survey of early applications of invariants for object recognition.
In addition, when the transformation under estimation is an element of a Lie Group,~\cite{Gros92-projectiveInvariantsTheory} provides theoretical results on the number of invariants available.

There are much discussion on what consititutes as ``good'' invariants. 
One of the metrics is noise resistancy: how resistant a particular invariant is to the introduction of noise. 
For example, \cite{Mundy92book} provides results on noise-resistant invariants of curves.
For our purpose, we think the number of measurements used for calculating invariants is also a good metric.
For \robin, given an $n$-invariant, our outlier rejection approach checks all $\nchoosek{N}{n}$ subsets of $n$ measurements. Hence pairwise invariants (and in general, invariants for small $n$) are more desirable than one with large $n$.

The complement of the compatibility graph used in \robin can also be used for outlier rejection.
In that case, instead of finding the maximum clique, the solver aims to find the minimal vertex cover for the graph.
Such approach is used in \cite{Enqvist09iccv} for 3D-3D and 2D-3D correspondence outlier rejection.

While this paper focuses on computing a single estimate of $\vxx$ from the measurements in the maximum clique 
(or the maximum \core{k}) of the compatibility graph, our graph-theoretic framework provides a natural way to design a \emph{multi-hypothesis estimator}, where one computes an estimate for each maximal clique of size larger than the expected number of inliers. Intuitively, each  maximal clique describes a set of mutually \compatible measurements and will lead to a different estimate of $\vxx$.

\section{\Compatibility Tests}
\label{sec:compabilityTests}

\subsection{\Compatibility test for single rotation averaging}
 Eq.~\eqref{eq:rotAve-invariants} states that any pair of measurements satisfies:
 \beq
 \label{eq:test-rotAve-ref1}
\vf(\MR_i, \MR_j) \doteq \MR_i\tran \MR_j = \Exp(\newnoise_{i})\tran
 \Exp(\newnoise_{j}) \doteq \Exp(\newnoise_{ij}).
 \eeq
 Therefore, if both measurements $\MR_i,\MR_j$ are inliers, then both $\newnoise_{i}$ and $\newnoise_{j}$ satisfy $\| \newnoise_{i} \| \leq \beta$ and $\| \newnoise_{j} \| \leq \beta$. Intuitively, this will create an upper bound for $\norm{\newnoise_{ij}}$, violating which indicates there are at least one outlier in the two measurements $\MR_i$ and $\MR_j$. 
 Since these are rotations, $\norm{\newnoise}$ is the geodesic distance~\cite{Hartley13ijcv} from the identity rotation, which is simply the rotation angle associated to a rotation. To make this choice explicit, for a rotation $\Exp(\newnoise)$, we define:
 \beq
 \| \newnoise \| \doteq \dist{\Exp(\newnoise)}{\eye_3}
 \eeq
where $\dist{\cdot}{\cdot}$ denotes the geodesic distance between two 3D rotations.
\HYedit{We claim that if two measurements $\MR_i,\MR_j$ satisfy~\eqref{eq:test-rotAve-ref1} for some  
$\newnoise_i,\newnoise_j$ such that $\| \newnoise_i \|\leq \beta$ and $\| \newnoise_j \|\leq \beta$,
then: 
 \beq
  \label{eq:test-rotAve}
  \|\newnoise_{ij}\| \leq 2\beta.
 \eeq

\begin{proof} Using eq.~\eqref{eq:test-rotAve-ref1} and the triangle inequality of geodesic distance in $\SOthree$, we have:
\bea
\norm{\newnoise_{ij}} = \dist{\Exp(\newnoise_i)}{\Exp(\newnoise_j)} \\
\leq \dist{\Exp(\newnoise_i)}{\eye_3} + \dist{\Exp(\newnoise_j)}{\eye_3} \\
= \norm{\newnoise_i} + \norm{\newnoise_j} \leq 2\beta,
\eea
proving eq.~\eqref{eq:test-rotAve}.
\end{proof}
}

\subsection{\Compatibility test for point-with-normal registration}

Eq.~\eqref{eq:pwithn-reg-invariants} states that any pair of measurements satisfies:
  \beq
\label{eq:pwithn-reg-invariants-ref1}
\hspace{-4mm} \vf(\vy_{i}, \vy_{j})
\doteq \matTwo{  \| \vb_j - \vb_i \| \\ (\vn^b_i)\tran \vn^b_j }
= \matTwo{  \| \va_j  - \va_i + \newnoise_j^p - \newnoise_i^p\| 
 \\ 
(\vn^a_i)\tran  \Exp(\newnoise_i^n)\tran \Exp(\newnoise_j^n) \vn^a_j }
\eeq 
 Therefore, any pair of inliers must satisfy~\eqref{eq:pwithn-reg-invariants-ref1} for some choice of
 $\newnoise_i^p, \newnoise_j^p,\newnoise_i^n,\newnoise_j^n$ such that $\norm{\newnoise_i^p} \leq \beta$, $\norm{\newnoise_j^p}\leq \beta$, $\norm{\newnoise_i^n} \leq \beta$, $\norm{\newnoise_j^n} \leq  \beta$.
 Again, we look for a simpler condition that serves as a necessary condition to be satisfied by a pair of inliers.
In particular, we claim that a pair of inliers is always such that:
\bea
-2\beta \leq \| \vb_j - \vb_i \| - \| \va_j  - \va_i \| \leq 2\beta, \label{eq:condPoints}
 \\
| \arccos\left( (\vn^b_i)\tran \vn^b_j \right) - \arccos\left( (\vn^a_i)\tran \vn^a_j \right)  | {\leq} 2\beta. \label{eq:condNormals}
\eea 

\begin{proof}
We have already shown that inlier points must satisfy~\eqref{eq:condPoints} in Section~\ref{sec:compatibilityTests} (see eq.~\eqref{eq:test-p-ref3}). 
We now show that a pair of inliers must also satisfy~\eqref{eq:condNormals}. 
\HYedit{
To do so, let the geodesic distance between two vectors $\vn_i,\vn_j$ on the unit $2$-sphere $\usphere{2}$ be:
\bea
\distsphere{\vn_i}{\vn_j} \doteq \arccos(\vn_i\tran \vn_j).
\eea
Then we have:
\bea
\arccos( (\vn^b_i)\tran \vn^b_j ) = \distsphere{\vn^b_i}{\vn^b_j} \\
\overset{\text{\small triangle inequality}}{\leq} \distsphere{\vn^b_i}{\vn^a_i} + \distsphere{\vn^a_i}{\vn^b_j} \\
= \distsphere{\Exp(\newnoise_i^n) \vn^a_i}{\vn^a_i} + \distsphere{\vn^a_i}{\vn^b_j} \\
\leq \beta + \distsphere{\vn^a_i}{\vn^b_j} \label{eq:invokerotdistance1}\\
 \hspace{-4mm} \overset{\text{\small triangle inequality}}{\leq} \beta + \distsphere{\vn^a_i}{\vn^a_j} + \distsphere{\vn^a_j}{\vn^b_j} \\
= \beta + \distsphere{\vn^a_i}{\vn^a_j} + \distsphere{\vn^a_j}{\Exp(\newnoise_j^n) \vn_j^a} \\
\leq \beta + \distsphere{\vn^a_i}{\vn^a_j} + \beta \label{eq:invokerotdistance2} \\
= \arccos((\vn^a_i)\tran \vn^a_j) + 2 \beta,
\eea
proving one direction of eq.~\eqref{eq:condNormals}. In eq.~\eqref{eq:invokerotdistance1} and~\eqref{eq:invokerotdistance2}, we have used the fact that, for any $\vn \in \usphere{2}$, if a 3D rotation $\Exp(\newnoise)$ satisfies $\norm{\newnoise} \leq \beta$, then $\distsphere{\vn}{\Exp(\newnoise)\vn} \leq \beta$. To see this, let us write $\Exp(\newnoise)$ using the Rodrigues' formula:
\bea
\MR = \cos(\theta)\eye_3 + \sin(\theta) \hatmap{\vu} + (1-\cos(\theta))\vu \vu\tran,
\eea 
where $\vu \in \usphere{2}$ is the rotation axis, and $\theta \in \Real{}$ is the rotation angle around $\vu$, such that $\newnoise = \theta \vu$. Note that since $\norm{\newnoise} \leq \beta$, we have $\abs{\theta} \leq \beta$. Then we lower bound the dot product between $\vn$ and $\MR\vn$:
\begin{multline}
\vn\tran \MR \vn = \cos(\theta) + (1-\cos(\theta))(\vn\tran \vu)^2 \geq \\
\cos(\theta) \geq \cos(\beta),
\end{multline}
which in turns yields:
\begin{multline}
\distsphere{\vn}{\MR\vn} = \arccos(\vn\tran \MR \vn) \leq \\
\arccos(\cos(\beta)) = \beta.
\end{multline}

The other direction can be done by reversing the order of the proof,~\ie~starting from $\distsphere{\vn^a_i}{\vn^a_j}$ and invoking the triangle inequality.
}
\end{proof}

In the following, we rewrite~\eqref{eq:condNormals} in a more convenient form that does not involve
trigonometric functions and is faster  to compute.
Recalling from basic trigonometry that for two variables $-1\leq c_a,c_b \leq +1$ it holds~\cite[Eq. 4.4.33]{Abramowitz74book}:
\beq
\arccos(c_b) - \arccos(c_a) = \arccos( c_a c_b + \sqrt{ (1\!-\!c_a^2) (1\!-\!c_b^2)}  ) \nonumber
\eeq
Using this relation in~\eqref{eq:condNormals} and denoting $c_a \doteq (\vn^a_i)\tran \vn^a_j$ and 
$c_b \doteq (\vn^b_i)\tran \vn^b_j$:
\bea
 -2\beta \leq \arccos\left( c_b \right) - \arccos\left( c_a \right)   \leq 2\beta \iff  \nonumber\\
  -2\beta \leq \arccos\left( c_a c_b + \sqrt{ (1\!-\!c_a^2) (1\!-\!c_b^2)}  \right) 
  \leq 2\beta \iff  \nonumber\\
  \myGrayMath{since $\arccos$ conventionally returns an angle in $[0,\pi]$}  \nonumber\\
\arccos\left( c_a c_b + \sqrt{ (1\!-\!c_a^2) (1\!-\!c_b^2)}  \right) 
  \leq 2\beta \iff \nonumber\\
   \myGrayMath{applying the decreasing function $\cos$ to both sides}  \nonumber\\
  c_a c_b + \sqrt{ (1\!-\!c_a^2) (1\!-\!c_b^2)} 
  \geq \cos(2\beta) \iff \nonumber\\
   \sqrt{ (1\!-\!c_a^2) (1\!-\!c_b^2)} 
  \geq \cos(2\beta) - c_a c_b \iff \nonumber\\
  \myGrayMath{squaring both sides}  \nonumber\\
  (1\!-\!c_a^2) (1\!-\!c_b^2)
  \geq \cos(2\beta)^2 + c_a^2 c_b^2 \!-\! 2 \cos(2\beta)c_ac_b\iff \nonumber\\
    1 + c_a^2c_b^2 \!-\! c_a^2 \!-\! c_b^2
  \geq \cos(2\beta)^2 + c_a^2 c_b^2 - 2 \cos(2\beta)c_ac_b\iff \nonumber\\
  2 \cos(2\beta)c_ac_b + 1 - c_a^2 - c_b^2 \nonumber
  \geq \cos(2\beta)^2 %
\eea
 which is much faster to evaluate since it avoids computing a trigonometric function for each evaluation ($\cos(2\beta)$ has to be computed just once). In our original notation:
 \bea
 2 \cos(2\beta) (\vn^a_i)\tran \vn^a_j (\vn^b_i)\tran \vn^b_j + \nonumber\\ 
 + 1 - ((\vn^a_i)\tran \vn^a_j)^2 - ((\vn^b_i)\tran \vn^b_j)^2   \geq \cos(2\beta)^2.
 \eea

\subsection{\Compatibility test for 2D-3D camera pose estimation}
First, using the 3D points $\vp_i,i=1,2,3,4$, we compute the cross ratio $\crossratio$ according to eq.~\eqref{eq:cross-invariants} as:
\bea
\crossratio = \frac{ \| (\vp_1)^\vee - (\vp_2)^\vee \| \| (\vp_3)^\vee - (\vp_4)^\vee \|  }{ 
\| (\vp_1)^\vee - (\vp_3)^\vee \| \| (\vp_2)^\vee - (\vp_4)^\vee \|  }.
\eea
Next, according to the invariance of the cross ratio, we have:
\bea
\crossratio = \frac{\| {\vy}_{1} - \newnoise_1 - {\vy}_{2} + \newnoise_2 \| \| {\vy}_{3} - \newnoise_3 - {\vy}_{4} + \newnoise_4 \|}{ 
\| {\vy}_{1} - \newnoise_1 - {\vy}_{3} + \newnoise_3 \| \| {\vy}_{2} - \newnoise_2 - {\vy}_{4} + \newnoise_4\| }, \label{eq:crossratio-fraction}
\eea
from which we try to lower and upper bound the right-hand side to get inequalities that serve to test the compatibility of 2D-3D correspondences. To do so, we use the triangle inequality to write:
\bea
\norm{\vy_{12}} - 2\beta \leq \norm{\vy_1 - \newnoise_1 - \vy_2 + \newnoise_2} \leq \norm{\vy_{12}} + 2\beta, \\
\norm{\vy_{34}} - 2\beta \leq \norm{\vy_3 - \newnoise_3 - \vy_4 + \newnoise_4} \leq \norm{\vy_{34}} + 2\beta,\\
\norm{\vy_{13}} - 2\beta \leq \norm{\vy_1 - \newnoise_1 - \vy_3 + \newnoise_3} \leq \norm{\vy_{13}} + 2\beta,\\
\norm{\vy_{24}} - 2\beta \leq \norm{\vy_2 - \newnoise_2 - \vy_4 + \newnoise_4} \leq \norm{\vy_{24}} + 2\beta,
\eea
where $\norm{\vy_{ij}} \doteq \norm{\vy_i - \vy_j}$ denotes the distance between $\vy_i$ and $\vy_j$. From the inequalities above, we obtain:
\begin{multline}
\frac{\parentheses{\norm{\vy_{12}} - 2\beta}\parentheses{\norm{\vy_{34}}-2\beta}}{ \parentheses{\norm{\vy_{13}} + 2\beta} \parentheses{\norm{\vy_{24}} + 2\beta} } < \tau \\ <
\frac{\parentheses{\norm{\vy_{12}} + 2\beta}\parentheses{\norm{\vy_{34}} + 2\beta}}{ \parentheses{\norm{\vy_{13}} - 2\beta} \parentheses{\norm{\vy_{24}} - 2\beta} }, \label{eq:crossratiotest}
\end{multline} 
which is the compatibility test for cross ratio. Note that strict inequalities hold in eq.~\eqref{eq:crossratiotest} because the maximum (resp. minimum) of the numerator and the minimum (resp. maximum) of the denominator of the right-hand side in eq.~\eqref{eq:crossratio-fraction} cannot be simultaneously attained.

\section{Proofs of Compatibility Graph Inlier Structures}
\label{sec:proofGraph}

\subsection{Inliers and Maximum Clique: Proof of Theorem~\ref{thm:inliers-form-clique}}
\begin{proof} 
Our \compatibility tests are designed to pass as long as the subset under test includes all inliers.
Therefore, the inliers will form a complete subgraph (clique) of $\calG$ and will certainly be part of a \emph{maximal clique} of size at least $n$. 
\end{proof}

\newcommand{\dimension}{d}
\newcommand{\bcenter}{\bar{\vb}}
\newcommand{\acenter}{\bar{\va}}
\newcommand{\aref}{\tilde{\va}}
\newcommand{\bref}{\tilde{\vb}}
\renewcommand{\det}[1]{\text{det}\parentheses{#1}}
\newcommand{\orthogonal}[1]{\text{O}\parentheses{#1}}
\newcommand{\tls}{\scenario{TLS}}
\newcommand{\gnc}{\scenario{GNC}}

\section{\GNC for Point-with-Normal Registration}
\label{sec:hornWithNormals}

According to~\cite{Yang20ral-GNC}, in order to use \GNC for robust point-with-normal registration, we need to design a \emph{non-minimal solver} that can solve the point-with-normal registration without outliers. Fortunately, as we will show below, the point-with-normal registration problem admits a closed-form solution if assuming all correspondences are inliers.

\subsection{Outlier-free Closed-form Solution}
Our result holds true for all $d$-dimensional ($d\geq 2$) point-with-normal registration, with $d=3$ being a special example.
\begin{theorem}[Point-with-Normal Registration]\label{thm:pointwithnormalclosedform}
Let $\calA = \cbrace{\parentheses{\va_i,\vm_i}}_{i=1}^N$ and $\calB = \cbrace{\parentheses{\vb_i,\vn_i}}_{i=1}^N$ be two point clouds with estimated unit normals, where $\va_i, \vb_i \in \Real{\dimension}$ are the points in $\dimension$-dimensional Euclidean space and $\vm_i,\vn_i \in \usphere{\dimension -1}$ are the unit normals. Consider finding the best rigid transformation to align $\calA$ and $\calB$ by solving the following optimization problem:
\bea 
\min_{\substack{\MR \in \SO{\dimension}, \\ \vt \in \Real{\dimension}}}  \sum_{i=1}^N \frac{1}{\alpha_i^2} \norm{\vb_i - \MR \va_i - \vt}^2 + \frac{\rho_i}{\beta_i^2} \norm{\vn_i - \MR \vm_i}^2, \label{eq:outlier-free-problem}
\eea
where $\alpha_i > 0$ and $\beta_i > 0$ are normalizing constants that are proportional to the magnitude of the noise, and $\rho_i$ determines the relative weight between point-to-point distance and normal-to-normal distance, 
then the globally optimal solution $(\MR^\star, \vt^\star)$ can be computed as:
\bea
\begin{cases}
\MR^\star = \MU \diag{\bracket{1,\dots,1,\det{\MU}\det{\MV}}} \MV\tran \\
\vt^\star = \bcenter - \MR^\star \acenter
\end{cases},
\eea
where $\acenter$ and $\bcenter$ are the weighted center of $\calA$ and $\calB$:
\bea
\acenter \doteq \frac{\sum_{i=1}^N \eta_i \va_i}{\sum_{i=1}^N \eta_i} , \quad \bcenter \doteq \frac{\sum_{i=1}^N \eta_i \vb_i}{\sum_{i=1}^N \eta_i}, \label{eq:aandbcenter}
\eea
and $\MU, \MV \in \orthogonal{\dimension}$ come from the following singular value decomposition:
\bea
\MM = \sum_{i=1}^N \eta_i \bref_i \aref_i\tran + \sum_{i=1}^N \kappa_i \vn_i \vm_i\tran = \MU \MS \MV\tran, \label{eq:svdM}
\eea
assuming that the singular values in $\MS$ are ordered in descending order. Note that in eq.~\eqref{eq:aandbcenter} and~\eqref{eq:svdM}, we use the following notations:
\bea
\eta_i \doteq \frac{1}{\alpha_i^2},\ \kappa_i \doteq \frac{\rho_i}{\beta_i^2},\ \aref_i \doteq \va_i - \acenter,\ \bref_i \doteq \vb_i - \bcenter,
\eea
for any $i=1,\dots,N$.
\end{theorem}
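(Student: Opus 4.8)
The plan is to follow the classical center-and-rotate strategy that underlies Horn's and Arun's solutions to point cloud registration, extended to absorb the additional normal term. The key structural observation is that the normal residual $\norm{\vn_i - \MR \vm_i}^2$ in~\eqref{eq:outlier-free-problem} does not involve $\vt$, so only the point term contributes to the optimal translation. First I would fix $\MR$ and minimize the objective over $\vt$ alone: its dependence on $\vt$ is the convex quadratic $\sum_{i=1}^N \eta_i \norm{\vb_i - \MR\va_i - \vt}^2$ with $\eta_i = 1/\alpha_i^2$, whose gradient vanishes precisely at the weighted-centroid condition, yielding $\vt^\star(\MR) = \bcenter - \MR\acenter$ with $\acenter,\bcenter$ as in~\eqref{eq:aandbcenter}. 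Because this partial minimization over $\vt$ is exact for every $\MR$, substituting it back produces an equivalent reduced problem in $\MR$ alone, and solving that reduced problem globally will certify joint global optimality of $(\MR^\star,\vt^\star)$.

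After substituting $\vt^\star(\MR)$, the point term collapses to $\sum_{i=1}^N \eta_i \norm{\bref_i - \MR\aref_i}^2$ with $\aref_i = \va_i - \acenter$ and $\bref_i = \vb_i - \bcenter$. Expanding the squared norms in both the point and normal terms and using that $\MR$ preserves Euclidean length (so $\norm{\MR\aref_i} = \norm{\aref_i}$ and $\norm{\MR\vm_i} = \norm{\vm_i} = 1$), every squared-norm contribution is constant in $\MR$. Hence minimizing the cost is equivalent to maximizing the linear functional $\sum_i \eta_i \bref_i\tran \MR \aref_i + \sum_i \kappa_i \vn_i\tran \MR \vm_i$ with $\kappa_i = \rho_i/\beta_i^2$. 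I would then rewrite each scalar as a trace and invoke cyclicity, $\bref_i\tran \MR \aref_i = \trace{\MR \aref_i \bref_i\tran}$, to gather everything into $\trace{\MR \MM\tran}$, where $\MM$ is exactly the matrix defined in~\eqref{eq:svdM}.

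The final step is the constrained orthogonal Procrustes (Wahba) problem $\argmax_{\MR \in \SO{\dimension}} \trace{\MR \MM\tran}$. Writing $\MM = \MU\MS\MV\tran$ and setting $\MZ = \MU\tran \MR \MV$, the objective becomes $\trace{\MZ\MS} = \sum_k s_k Z_{kk}$, which is bounded above by $\sum_k s_k$ since $\MZ$ is orthogonal; meanwhile the constraint $\det{\MR}=1$ forces $\det{\MZ} = \det{\MU}\det{\MV}$. When this product equals $+1$, the bound is attained at $\MZ = \MI$; when it equals $-1$, the best feasible choice (given descending singular values) flips only the last diagonal entry, $\MZ = \diag{[1,\dots,1,-1]}$, producing the stated $\MR^\star = \MU \diag{[1,\dots,1,\det{\MU}\det{\MV}]} \MV\tran$. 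I expect the main obstacle to be this last step: rigorously justifying that flipping the \emph{last} axis (rather than any other) is optimal in the $\det{\MU}\det{\MV}=-1$ case requires the ordering of the singular values together with a short rearrangement argument, and it is precisely this argument that generates the $\det{\MU}\det{\MV}$ correction keeping $\MR^\star$ inside $\SO{\dimension}$ rather than $\orthogonal{\dimension}$.
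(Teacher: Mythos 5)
Your proposal is correct and follows essentially the same route as the paper's proof: exact partial minimization over $\vt$ via the weighted centroids, substitution to obtain the rotation-only problem in the centered quantities $\bref_i - \MR\aref_i$ plus the normal terms, and recognition of the result as a Wahba/orthogonal-Procrustes problem solved by the SVD of $\MM$. The only difference is one of self-containedness: the paper dispatches the final rotation step by citing Markley's SVD attitude-determination solution, whereas you prove it inline via the change of variables $\MZ = \MU\tran\MR\MV$, the bound $\trace{\MZ\MS} \le \sum_k s_k$, and the determinant-constrained flip of the last (smallest) singular direction --- correctly flagging that the rearrangement argument justifying the flip of the \emph{last} axis is the one nontrivial point.
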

\begin{proof}
We start by taking the derivate of the objective function of~\eqref{eq:outlier-free-problem}, denoted as $f(\MR,\vt)$, \wrt $\vt$, and set it to zero:
\bea 
\frac{\partial f}{\partial \vt} = 2\sum_{i=1}^N \eta_i \parentheses{\vt + \MR\va_i - \vb_i} = \zero,
\eea 
which gives the closed-form solution of $\vt^\star$ in terms of $\MR^\star$:
\bea 
\vt^\star = \bcenter - \MR^\star \acenter. \label{eq:tinR}
\eea
Now we insert eq.~\eqref{eq:tinR} back to problem~\eqref{eq:outlier-free-problem} and obtain a problem that only seeks to find the best rotation:
\bea
\min_{\MR \in \SO{\dimension}} \sum_{i=1}^N \eta_i \norm{\bref_i - \MR \aref_i}^2 + \kappa_i \norm{\vn_i - \MR \vm_i}^2, \label{eq:rotation-only}
\eea
Problem~\eqref{eq:rotation-only} is in the form of a Wahba problem, and hence admits a closed form solution from the singular value decomposition~\eqref{eq:svdM} (see~\cite{markley1988jas-svdAttitudeDeter} for details).
\end{proof}

\subsection{Robust Registration}
In the presence of outliers, we will modify the cost function in~\eqref{eq:outlier-free-problem} to be the truncated least squares (\tls) cost function to gain robustness. To do so,
denote:
\bea
r_i^2\parentheses{\MR,\vt} = \frac{1}{\alpha_i^2} \norm{\vb_i - \MR \va_i - \vt}^2 + \frac{\rho_i}{\beta_i^2} \norm{\vn_i - \MR \vm_i}^2,
\eea 
then robust estimation seeks to solve the following \tls estimation problem in the presence of outliers:
\bea
\MR^\star, \vt^\star = \argmin_{\MR \in \SO{\dimension},\vt \in \Real{\dimension}} \sum_{i=1}^N \min \cbrace{\frac{r_i^2\parentheses{\MR,\vt}}{\zeta_i^2},\barcsq}, \label{eq:tls-robust-registration}
\eea
where $\zeta_i > 0$ is the maximum allowed residual for $r_i \parentheses{\MR,\vt}$ to be considered as an inlier, $\barc$ (default $\barc=1$) is a tuning constant (see~\cite{Yang20tro-teaser,Yang20ral-GNC,Yang20nips-certifiablePerception} for details).  

To solve problem~\eqref{eq:tls-robust-registration}, we use the \gnc meta-heuristics proposed in~\cite{Yang20ral-GNC}, which requires a non-minimal solver to solve the following weighted least squares problem:
\bea
\min_{\MR\in\SO{\dimension},\vt \in \Real{\dimension}} \sum_{i=1}^N \frac{w_i}{\zeta_i^2} r_i^2\parentheses{\MR,\vt}, \label{eq:gncweightedleastsquares}
\eea
which is equivalent to:
\bea
\min_{\substack{\MR\in\SO{\dimension}, \\ \vt \in \Real{\dimension}}} \sum_{i=1}^N \eta_i' \norm{\vb_i - \MR\va_i - \vt}^2 + \kappa_i' \norm{\vn_i - \MR\vm_i}^2,
\eea
by letting $\eta_i' = \frac{w_i}{\alpha_i^2 \zeta_i^2}$ and $\kappa_i' = \frac{w_i \rho_i}{\beta_i^2 \zeta_i^2}$. Hence, the closed-from solution in Theorem~\ref{thm:pointwithnormalclosedform} applies. The \gnc heuristics then alternates between solving the weighted least squares problem~\eqref{eq:gncweightedleastsquares} and updating the weights until convergence. We refer the interested reader to~\cite{Yang20ral-GNC} for implementation details.

\section{Extra Experimental Results}
\label{sec:app-exp-results}

\subsection{Single Rotation Averaging}
Fig.~\ref{fig:app-sra-full}(a) shows the performance of the five algorithms in Section~\ref{sec:exp:singlerotationavg} under the full set of outlier rates from $0\%$ to $99\%$ (under 100 Monte Carlo runs). We see that \robinstargnc and \robingnc consistently returns accurate estimates up to $98\%$ outlier rate, while the other three algorithms start failing at $90\%$ outliers. In addition, Fig.~\ref{fig:app-sra-full}(b)-(c) boxplot the percentage of inliers preserved, the percentage of outliers rejected, and the inlier rate within the maximum clique (\robinstar), and within the maximum $k$-core (\robin), respectively. We see that both maximum clique and maximum $k$-core preserves almost all of the inliers and rejects almost all of the outliers (typically over $90\%$), hence, finding a much refined set of measurements that has a high inlier rate (typically above $60\%$ when the outlier rate is up to $98\%$). As a result, \GNC is able to return an accurate estimate.

\begin{figure*}
\begin{minipage}{\textwidth}
\begin{tabular}{c}%
\begin{minipage}{\textwidth}%
			\centering%
			\includegraphics[width=\columnwidth]{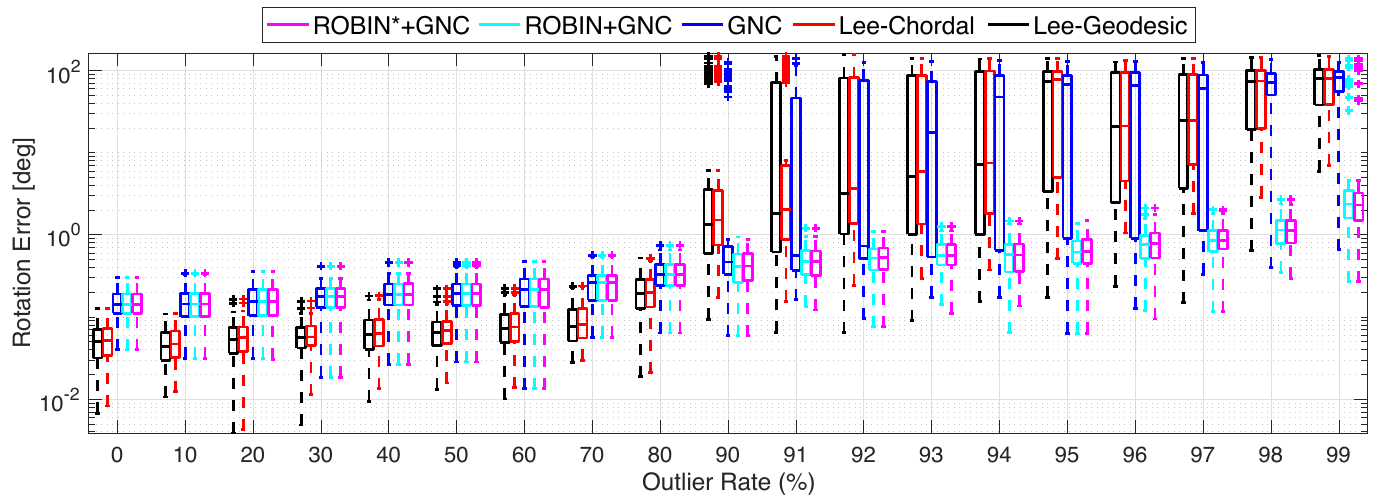} \\
 \vspace{-2mm} (a) Rotation estimation error (compared to the groundtruth) \wrt increasing outlier rates.
\end{minipage}
\vspace{1mm}
\\
\begin{minipage}{\textwidth}%
			\centering%
			\includegraphics[width=\columnwidth]{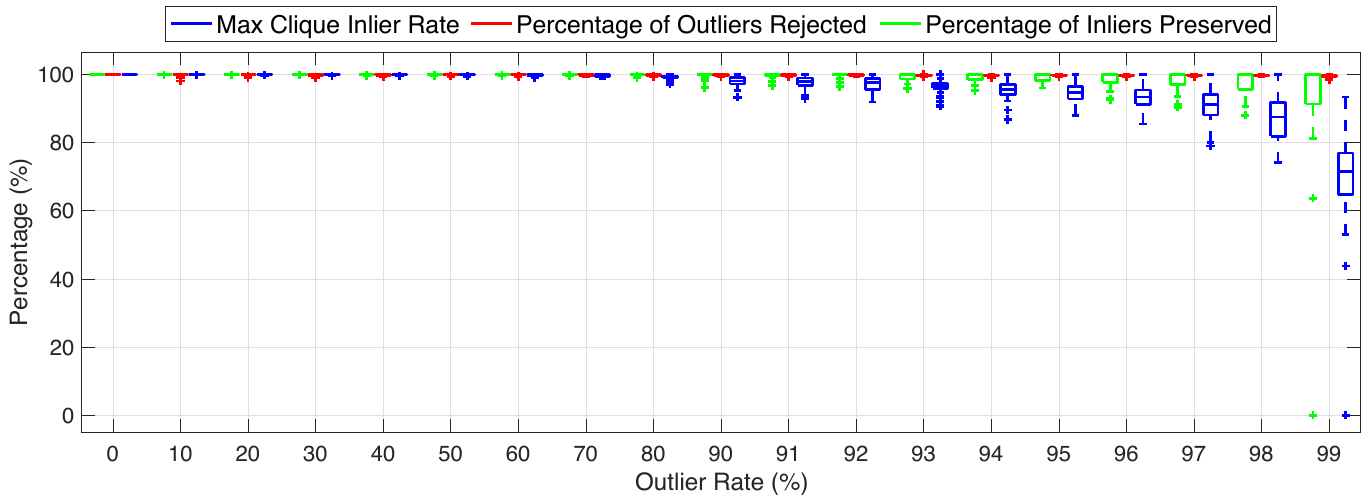} \\
 \vspace{-2mm} (b) Percentage of inliers preserved, percentage of outliers rejected, and inlier rate within the maximum clique.
\end{minipage}
\vspace{1mm}
\\
\begin{minipage}{\textwidth}%
			\centering%
			\includegraphics[width=\columnwidth]{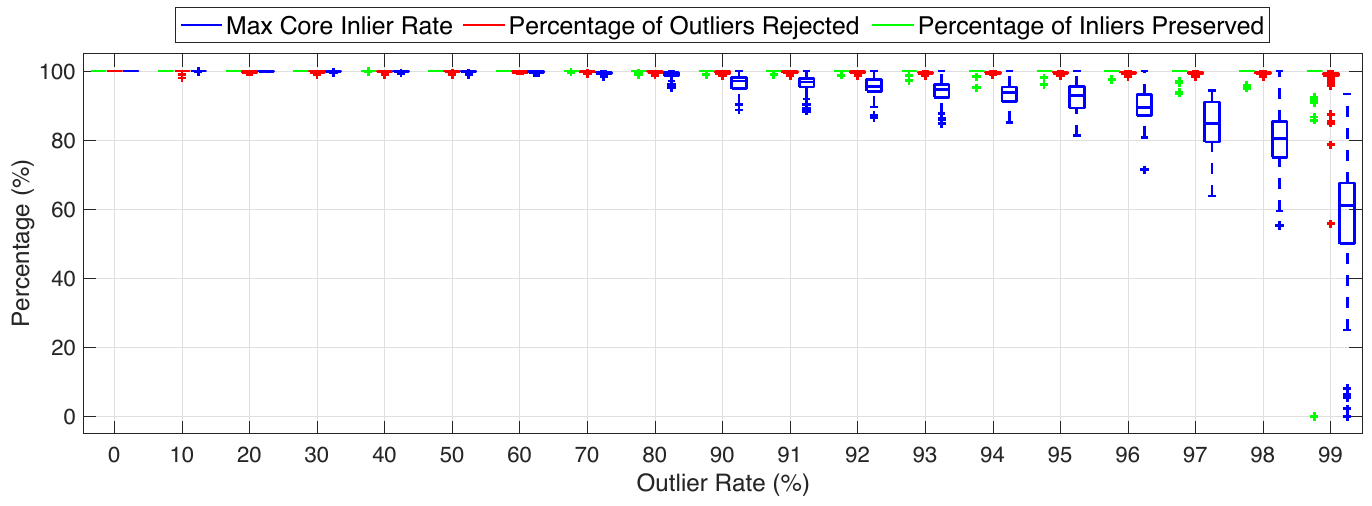} \\
 \vspace{-2mm} (c) Percentage of inliers preserved, percentage of outliers rejected, and inlier rate within the maximum $k$-core.
\end{minipage}
\vspace{1mm}
\end{tabular}
\end{minipage}
\caption{Performance of \robinstargnc and \robingnc compared to three other algorithms on single rotation averaging under increasing outlier rates.}
\label{fig:app-sra-full}
\vspace{-6mm}
\end{figure*}

\subsection{Registration With Correspondences}
{\bf Simulated Benchmarks}
Fig.~\ref{fig:app-PCREG-full}(a) shows an example registration problem used for our tests with the \bunny model. 
Fig.~\ref{fig:app-PCREG-full}(b) and (c) show the rotation and translation estimation errors of the methods tested under outlier rates form 0\% to 90\%.
Fig.~\ref{fig:app-PCREG-full}(e) and (f) show the rotation and translation estimation errors of the methods tested under outlier rates form 95\% to 99\%.
Note that for the tests between 0\% and 90\% outlier rates, 100 correspondences are used. For the tests between 95\% and 99\% outlier rates, 1000 correspondences are used.
Note that \teaserpp~is equivalent to \robinstar~with a decoupled rotation and translation solver.
\robin and \robinstar enabled methods dominate all other methods.
Noticeably, with \robinstar, Horn's method is robust up until 98\% outlier rates.

In addition, Fig.~\ref{fig:app-PCREG-full}(d) and (g) shows the timing data of different algorithms we tested. 
All \robin and \robinstar enabled methods complete execution at the order of tens of milliseconds, achieving real-time performance.
Noticeably, from 95\% to 99\% outlier rates, \robinteaserpp has an average runtime of about 4 milliseconds, which is about 3 times faster than \teaserpp.

\begin{figure*}
\centering%
\begin{minipage}{\textwidth}
\centering%
\begin{tabular}{cc}%
\multicolumn{2}{c}{
\begin{minipage}{0.3\columnwidth}%
			\centering%
			\includegraphics[width=\columnwidth]{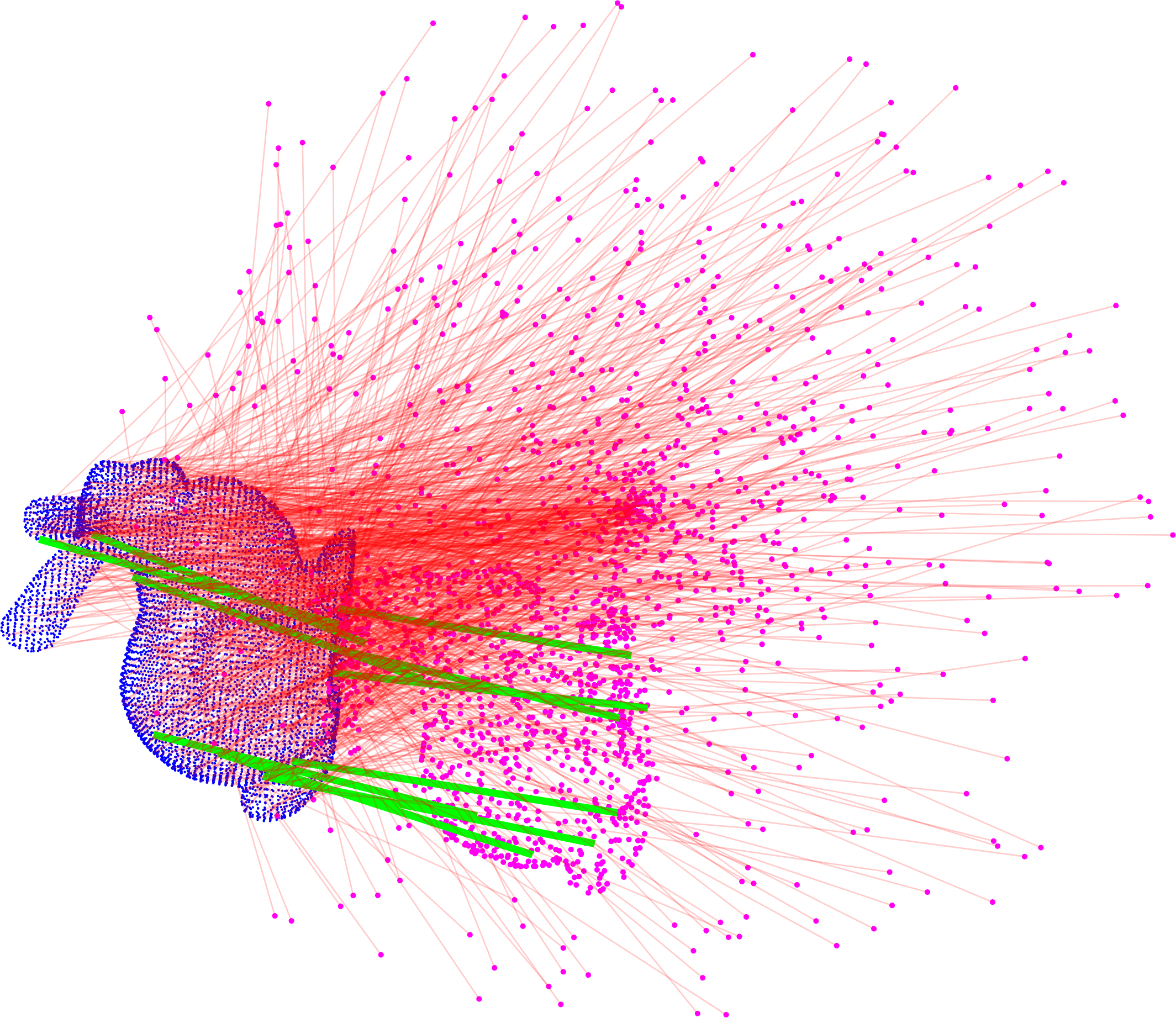} \\
(a) The \bunny model used in our tests. The green and red lines indicate inlier and outlier correspondences respectively.
\end{minipage}}
\vspace{1mm} 
\\
\begin{minipage}{0.4\columnwidth}%
			\centering%
			\includegraphics[width=\columnwidth]{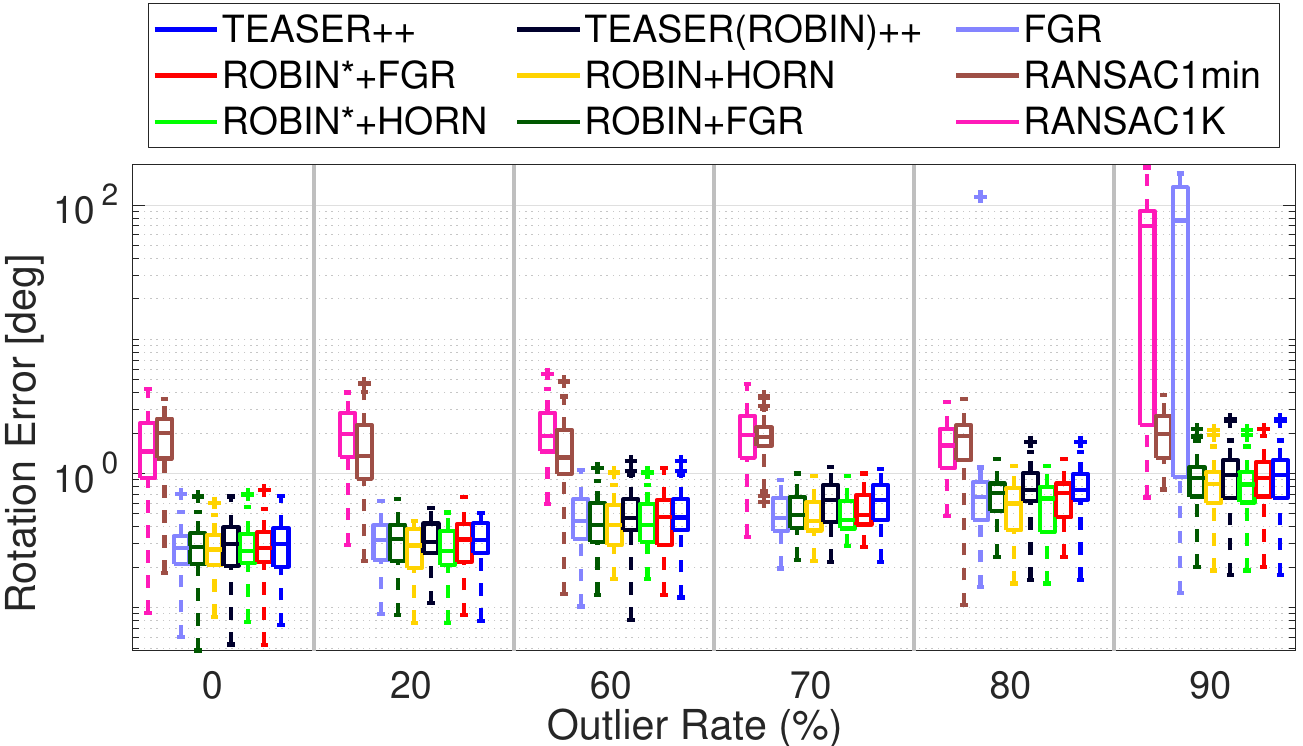} \\
 \vspace{1mm} (b) Rotation estimation error (compared to the groundtruth) \wrt~increasing outlier rates from 0\% to 90\%.
\end{minipage}
  &
\begin{minipage}{0.4\columnwidth}%
			\centering%
			\includegraphics[width=\columnwidth]{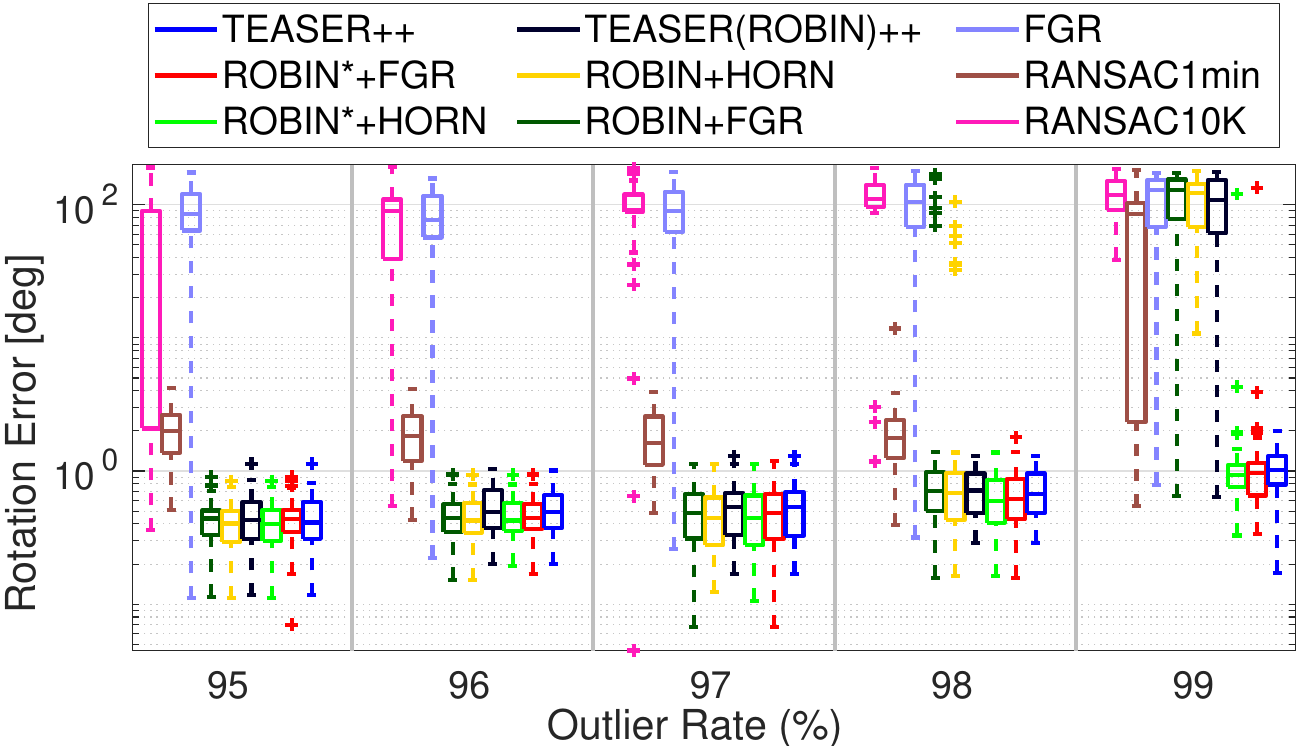} \\
 \vspace{1mm} (e) Rotation estimation error (compared to the groundtruth) \wrt~increasing outlier rates from 95\% to 99\%.
\end{minipage}
\vspace{1mm} 
\\
\begin{minipage}{0.4\columnwidth}%
			\centering%
			\includegraphics[width=\columnwidth]{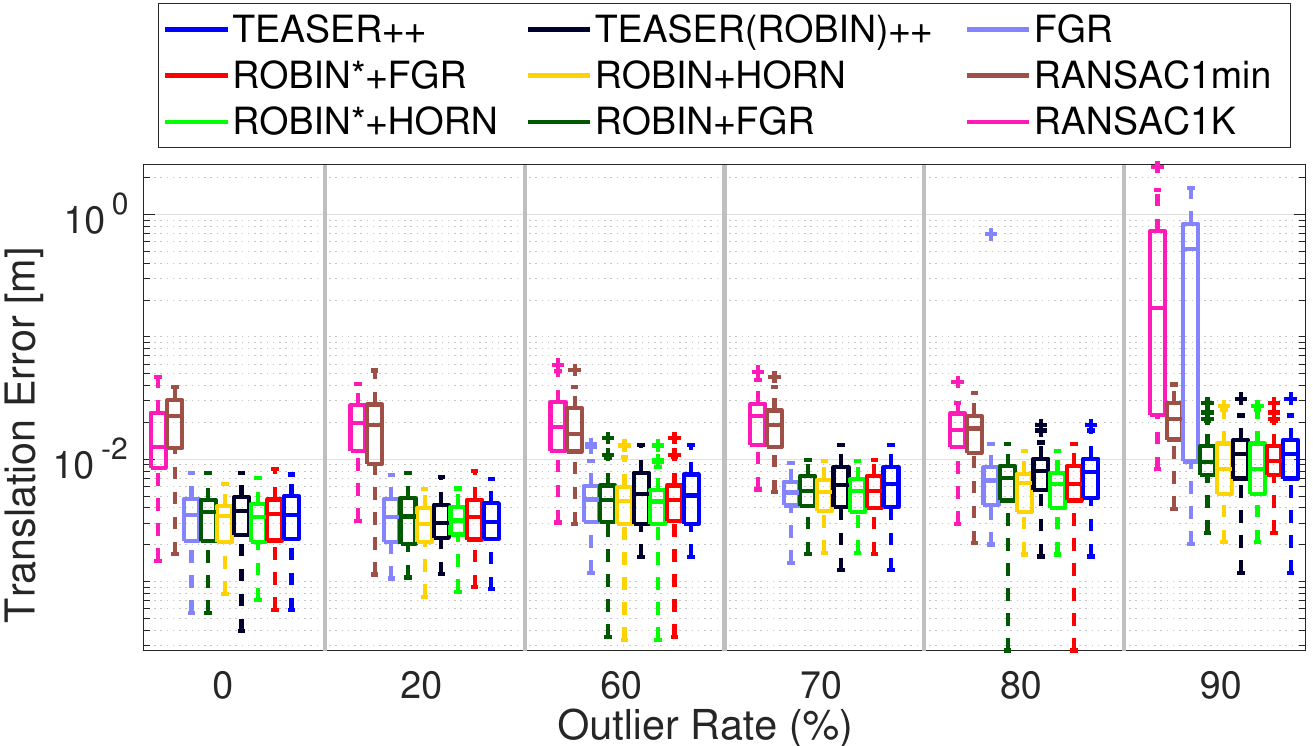} \\
 \vspace{1mm} (c) Translation estimation error (compared to the groundtruth) \wrt~increasing outlier rate from 0\% to 90\%s.
\end{minipage}
  &
\begin{minipage}{0.4\columnwidth}%
			\centering%
			\includegraphics[width=\columnwidth]{PCREG_bunny_extreme_translation_errs.pdf} \\
 \vspace{1mm} (f) Translation estimation error (compared to the groundtruth) \wrt~increasing outlier rate from 95\% to 99\%s.
\end{minipage}
\vspace{1mm} 
\\
\begin{minipage}{0.4\columnwidth}%
			\centering%
			\includegraphics[width=\columnwidth]{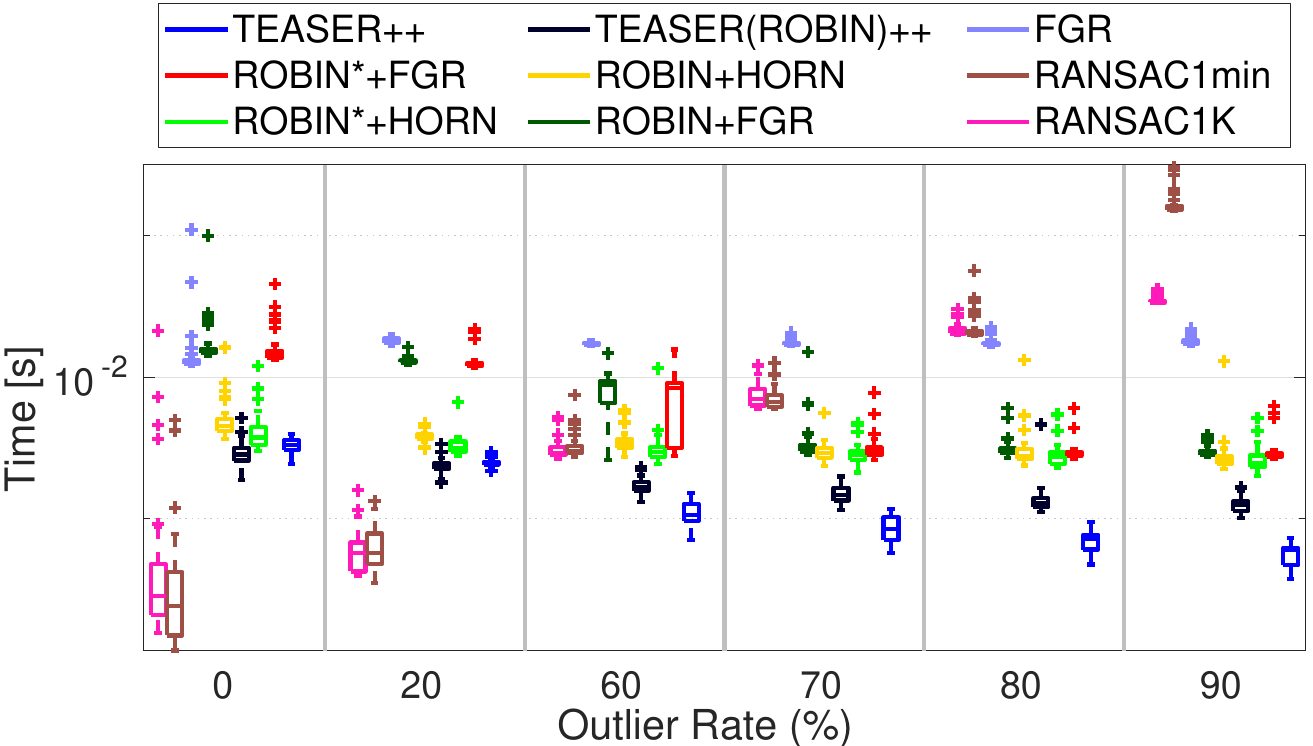} \\
 \vspace{1mm} (d) Runtime of the algorithms tested \wrt~increasing outlier rates from 0\% to 90\%.
\end{minipage}
  &
\begin{minipage}{0.4\columnwidth}%
			\centering%
			\includegraphics[width=\columnwidth]{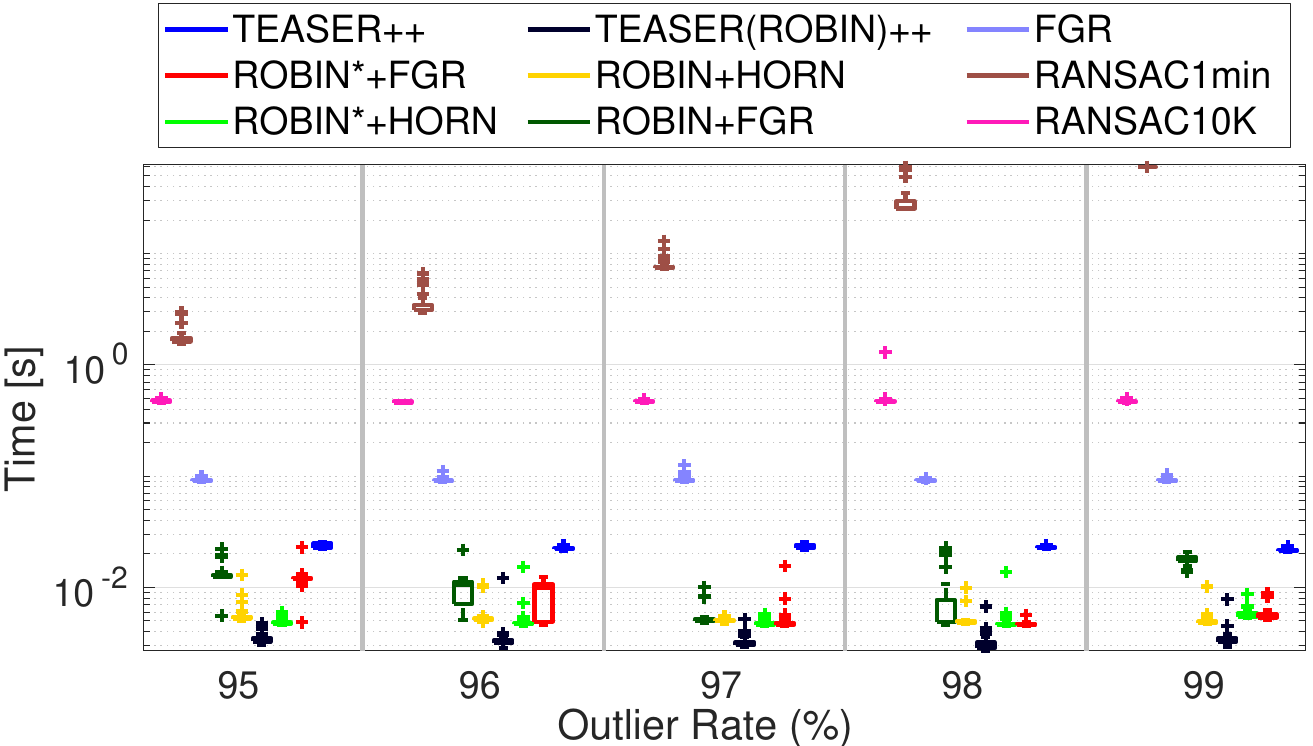} \\
 \vspace{1mm} (g) Runtime of the algorithms tested \wrt~increasing outlier rates ffrom 95\% to 99\%.
\end{minipage}
\vspace{1mm}
\end{tabular}
\end{minipage}
\caption{Performance of algorithms with \robinstar and \robin filtering compared to other algorithms on poind cloud registration with correspondences under increasing outlier rates.
  Note that \robin, \robinstar, \teaserpp and \robinteaserpp are implemented in C++. Horn's method, FGR and RANSAC are implemented in MATLAB.
}
\label{fig:app-PCREG-full}
\vspace{-6mm}
\end{figure*}

{\bf 3DMatch Dataset}
Fig.~\ref{fig:app-3dmatch-cases} show some of the successful as well as failed cases from our tests on the 3DMatch dataset using \robinstargnc.

\begin{figure*}
\centering%
\begin{minipage}{\textwidth}
\centering%
\begin{tabular}{cc}%
\begin{minipage}{0.5\columnwidth}%
			\centering%
			\includegraphics[width=\columnwidth]{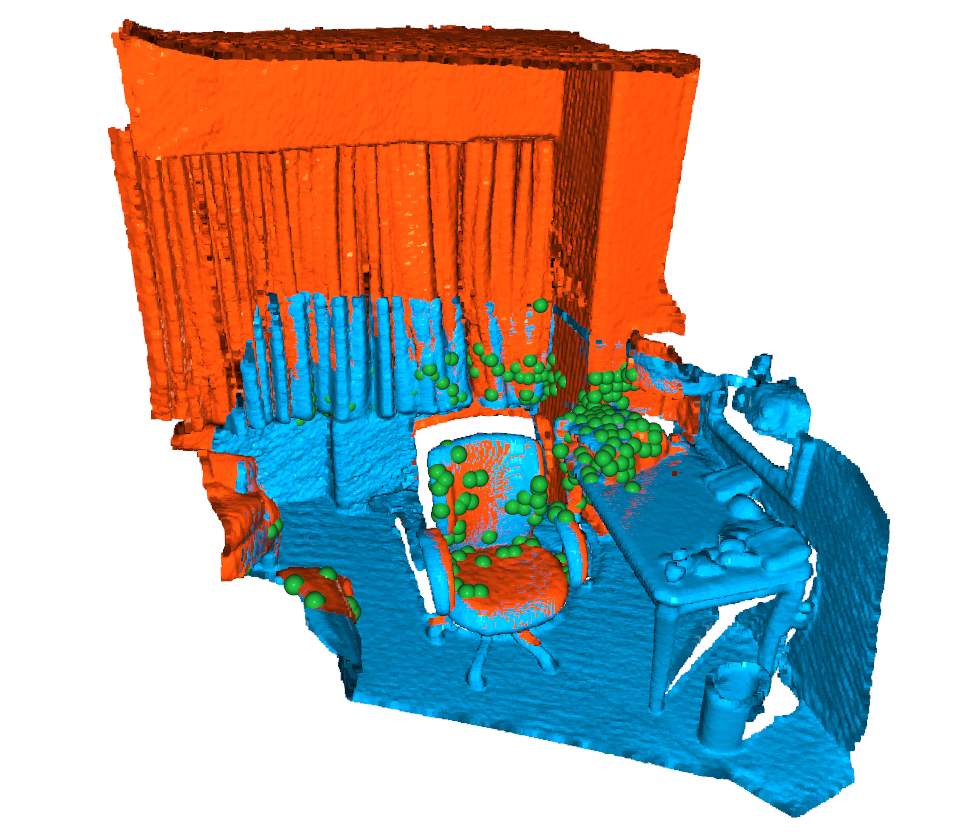} \\
 \vspace{1mm} 
\end{minipage}
  &
\begin{minipage}{0.5\columnwidth}%
			\centering%
			\includegraphics[width=\columnwidth]{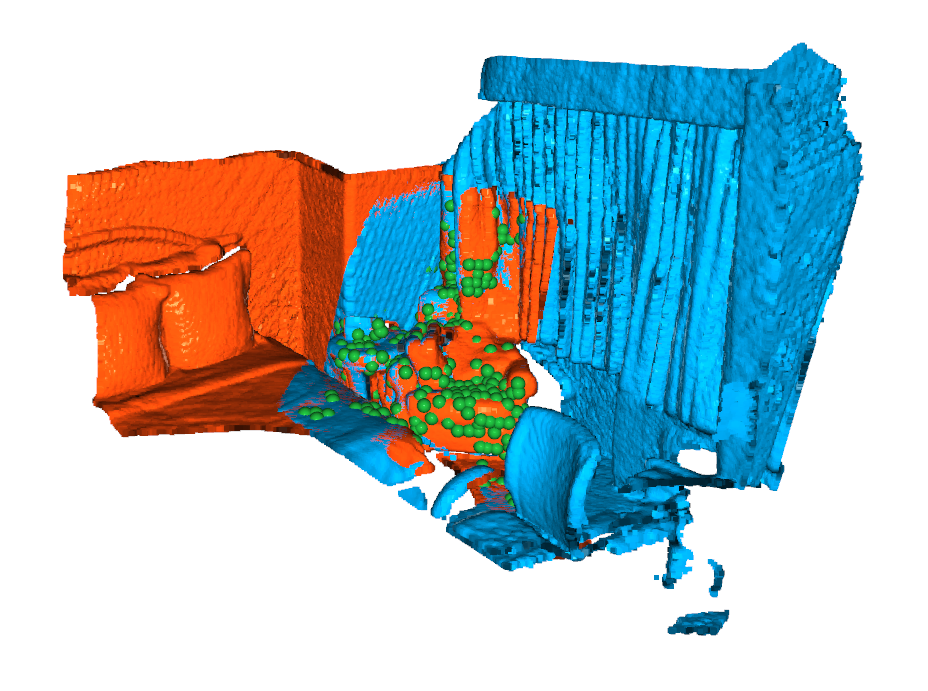} \\
 \vspace{1mm} 
\end{minipage}
\vspace{1mm} 
\\
\begin{minipage}{0.3\columnwidth}%
			\centering%
			\includegraphics[width=\columnwidth]{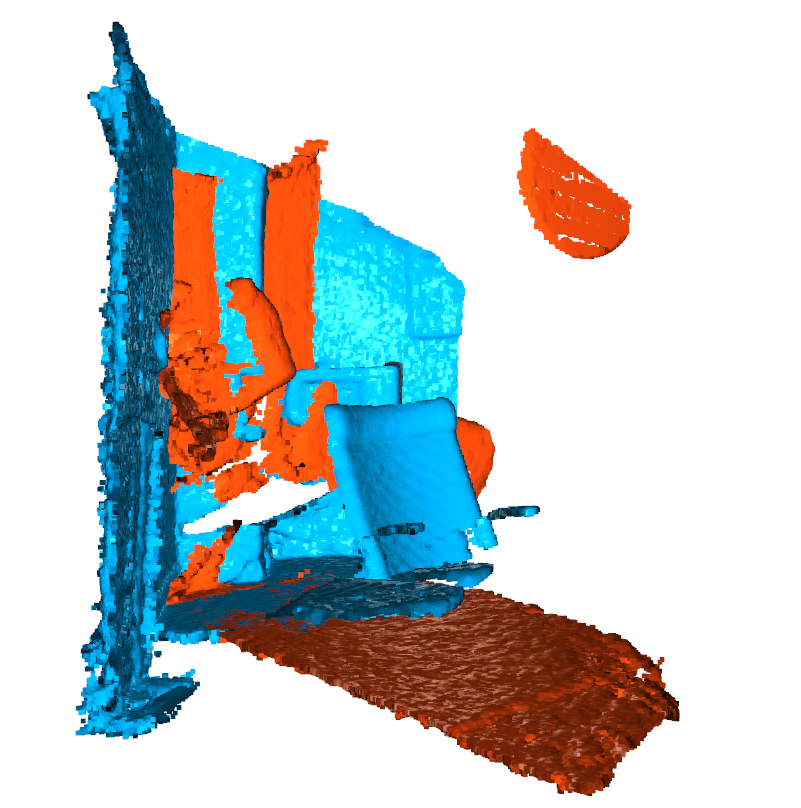} \\
 \vspace{1mm} 
\end{minipage}
  &
\begin{minipage}{0.3\columnwidth}%
			\centering%
			\includegraphics[width=\columnwidth]{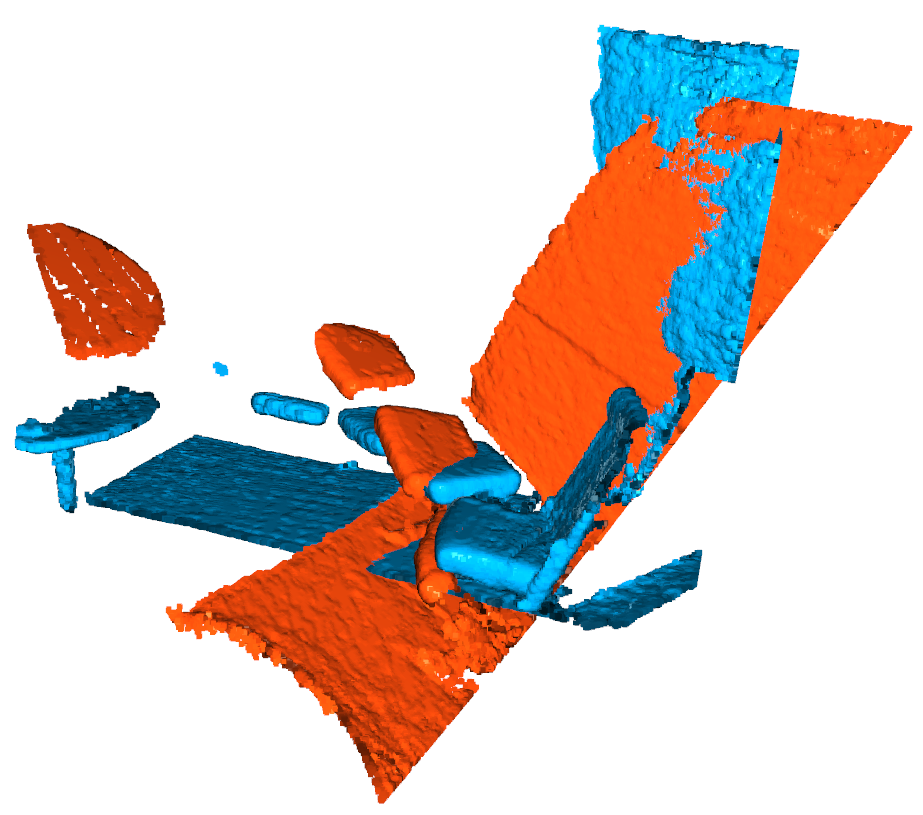} \\
 \vspace{1mm} 
\end{minipage}
\vspace{1mm} 
\vspace{1mm} 
\end{tabular}
\end{minipage}
\caption{Sample registrations from our tests on the 3DMatch dataset using \robinstargnc. The top row shows two successes, with the green spheres indicate inliers identified by \robinstar. The bottom row shows two failures.}
\label{fig:app-3dmatch-cases}
\vspace{-6mm}
\end{figure*}

\subsection{Registration Without Correspondences}
Fig.~\ref{fig:app-SPC-full}(a) shows the \teddyBear model used in our tests for registration problems without correspondences.
Fig.~\ref{fig:app-SPC-full}(b) and (c) shows the rotation and translation estimation errors of the methods tested under different overlap rates.
Fig.~\ref{fig:app-SPC-full}(d) shows the timing spent by \robin and \robinstar on finding maximum $k$-core and maximum cliques respectively under different overlap rates.

\begin{figure*}
	\centering
\begin{minipage}{\textwidth}
	\centering
\begin{tabular}{cc}%
  \multicolumn{2}{c}{
\begin{minipage}{0.2\columnwidth}%
			\centering%
			\includegraphics[width=\columnwidth]{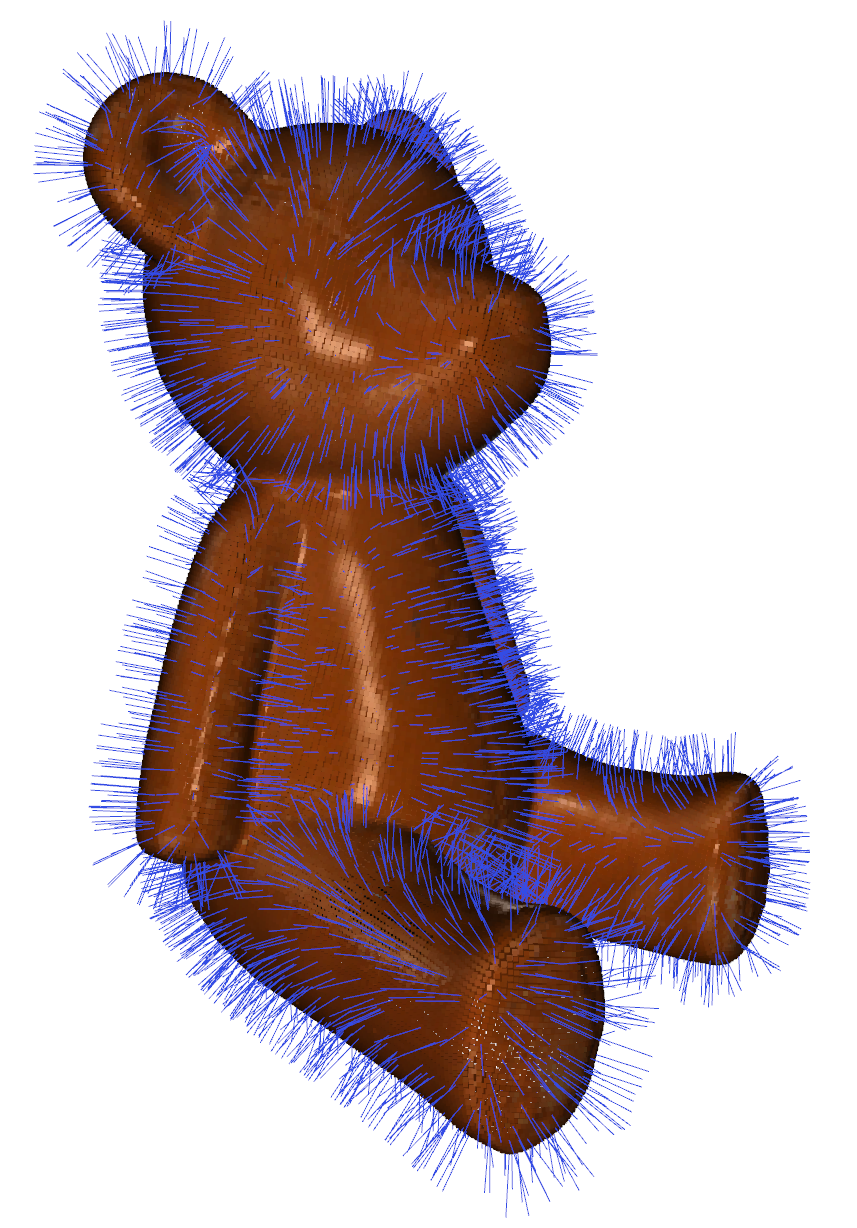} \\
 \vspace{-2mm} (a) The \teddyBear model used in our registration without correspondences tests. The blue lines indicate normals.
\end{minipage}}
\vspace{2mm}
\\
\begin{minipage}{0.5\columnwidth}%
			\centering%
			\includegraphics[width=\columnwidth]{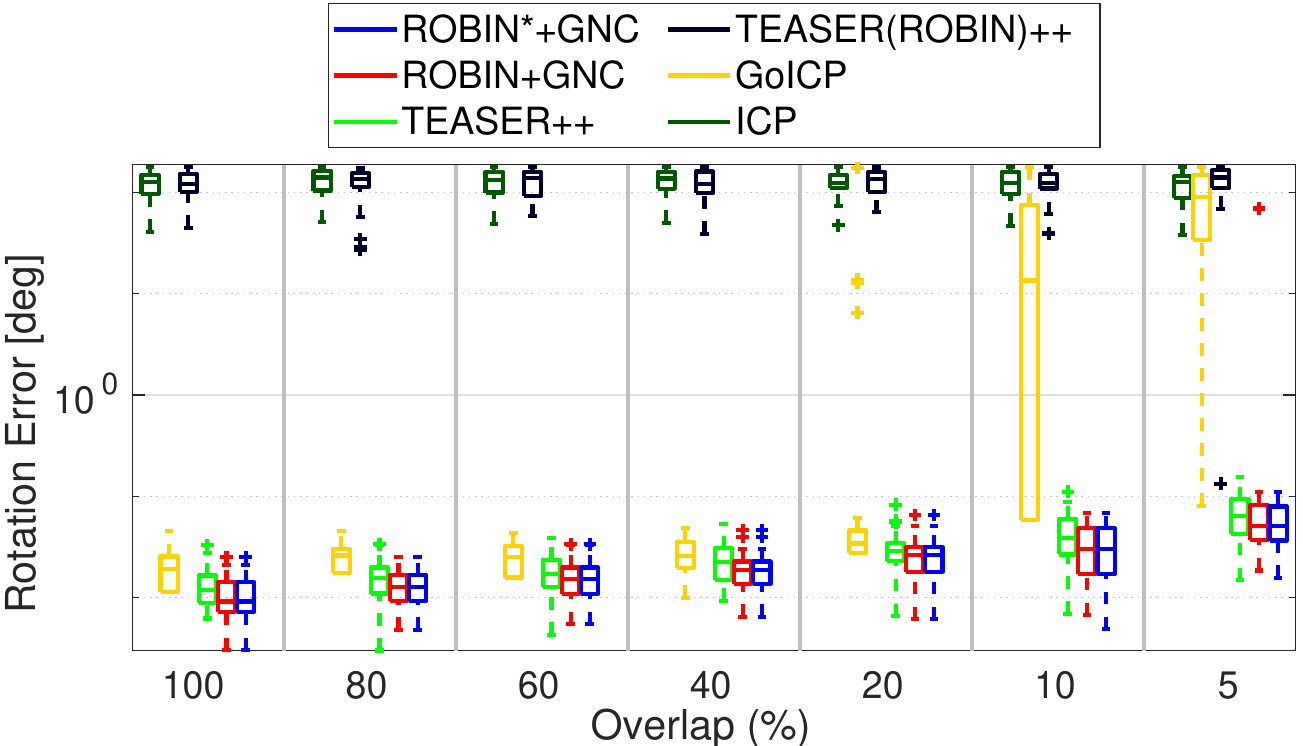} \\
 \vspace{2mm} (b) Rotation estimation error (compared to the groundtruth) \wrt~decreasing overlap ratios.
\end{minipage}
  &
\begin{minipage}{0.5\columnwidth}%
			\centering%
			\includegraphics[width=\columnwidth]{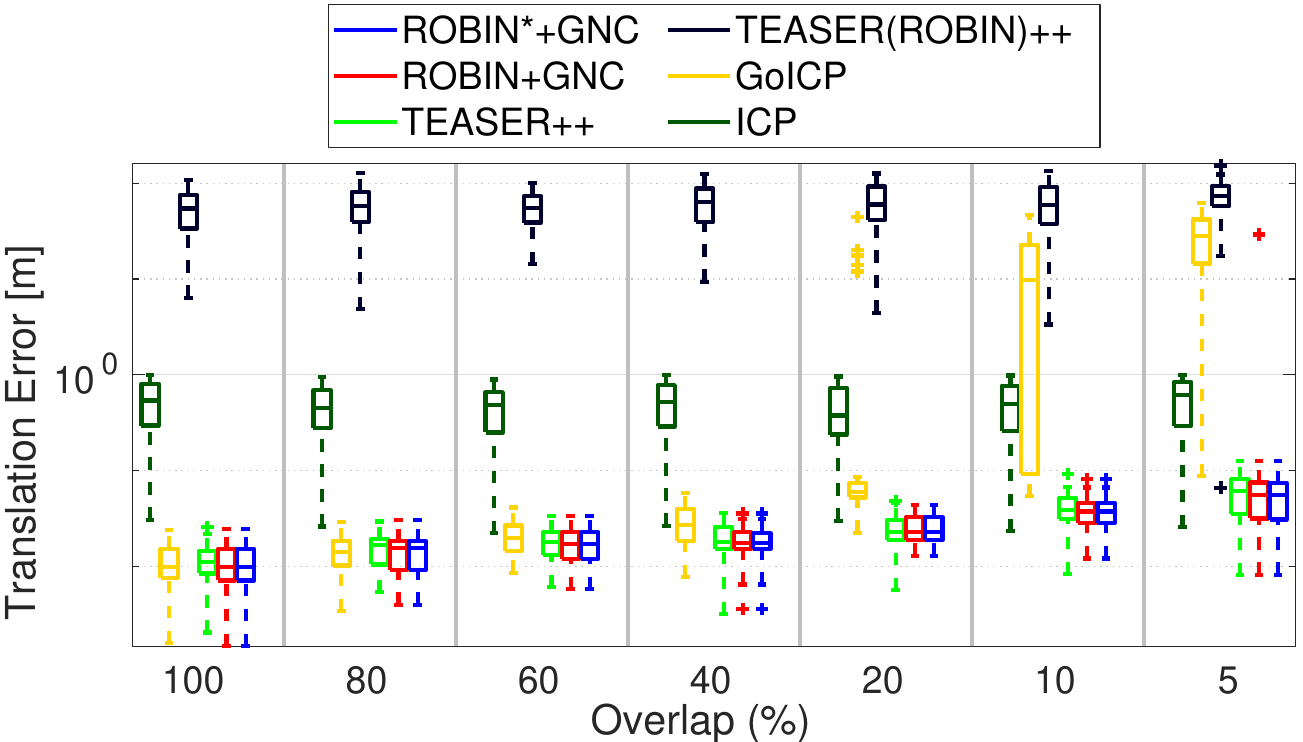} \\
 \vspace{2mm} (c) Translation estimation error (compared to the groundtruth) \wrt~decreasing overlap ratios.
\end{minipage}
\vspace{2mm}
  \\
  \multicolumn{2}{c}{
\begin{minipage}{0.5\columnwidth}%
			\centering%
			\includegraphics[width=\columnwidth]{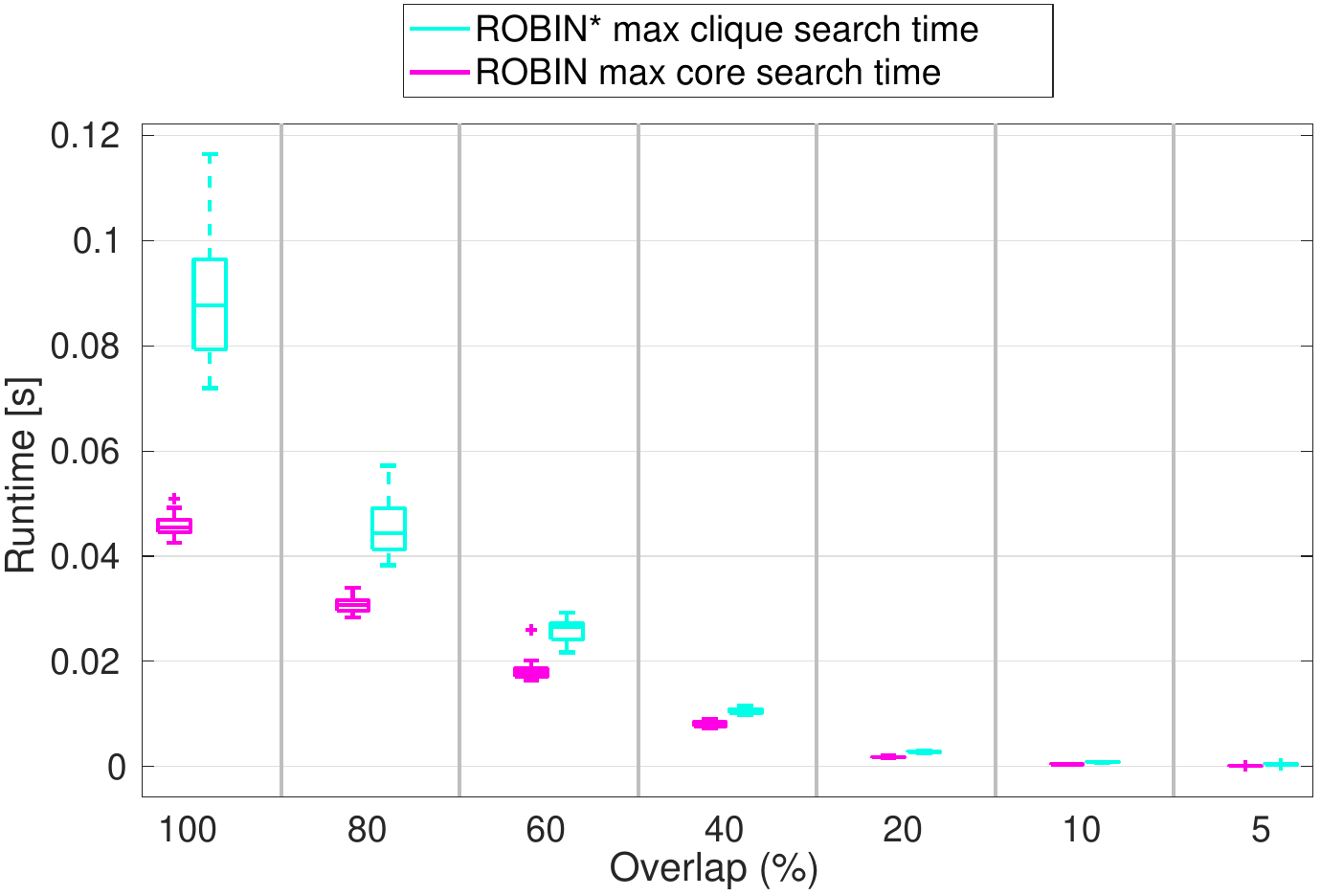} \\
 \vspace{2mm} (d) Runtime of the algorithms tested \wrt~decreasing overlap ratios.
\end{minipage}}
  \\
\end{tabular}
\end{minipage}
\caption{Performance comparison of \robinstargnc, \robingnc, \teaserpp, \robinteaserpp, ICP and \GoICP for registration without correspondences. }
\label{fig:app-SPC-full}
\vspace{-5mm}
\end{figure*}

\subsection{2D-3D Pose Estimation}

We evaluate \robin~on a simulated 2D-3D outlier rejection problem.
We use the cross ratio as a $4$-measurement invariant, and use \eqref{eq:crossratiotest} for the compatibility test. 
We assume a pinhole camera projection model, with a image size of $640 \times 480$.
We randomly sample two points on the image plane, and project them to 3D points in the camera coordinate frame with random depths.
We then generate a random extrinsic transformation to transform the 3D points to the world frame.
Using the two 3D points in the world frame, we randomly generate 100 points in the segment connecting the two points.
We then project the 3D points back to image plane to have 2D-3D correspondences.
Bounded random noises (as shown in~\eqref{eq:cross-invariants}) $\vepsilon_i \sim \calN(\bm{0},\sigma^2 \MI), \sigma=0.1$ are introduced to the 2D points.
$\vepsilon_i$ are sampled until the resulting vector satisfies $\|\vepsilon_i\| \leq\!\!\beta_i$ where $\beta_{i}=0.25$.
Outliers are introduced to the 2D points by randomly sampling points on the image plane.
We conduct 40 Monte Carlo runs with the randomly generated 2D-3D correspondences, and record two statistics: (i) the percentage of outliers rejected, (ii) the percentage of inliers among all inliers in the graph structures identified by \robin and \robinstar.

\begin{figure*}
	\centering
\begin{minipage}{\textwidth}
	\centering
\begin{tabular}{cc}%
  \multicolumn{2}{c}{
\begin{minipage}{0.4\columnwidth}%
			\centering%
			\includegraphics[width=\columnwidth]{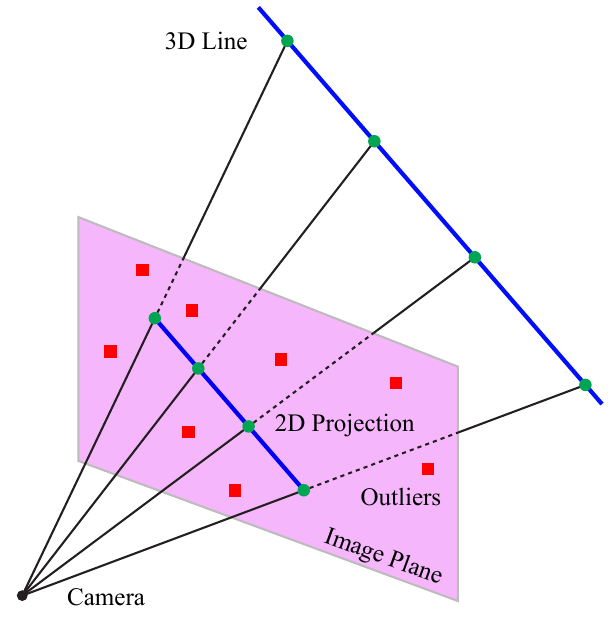} \\
 \vspace{1mm} (a) Illustration showing our testing setup for 2D-3D correspondence outlier rejection.
\end{minipage}}
\vspace{1mm}
\\
  \multicolumn{2}{c}{
\begin{minipage}{0.6\columnwidth}%
			\centering%
			\includegraphics[width=\columnwidth]{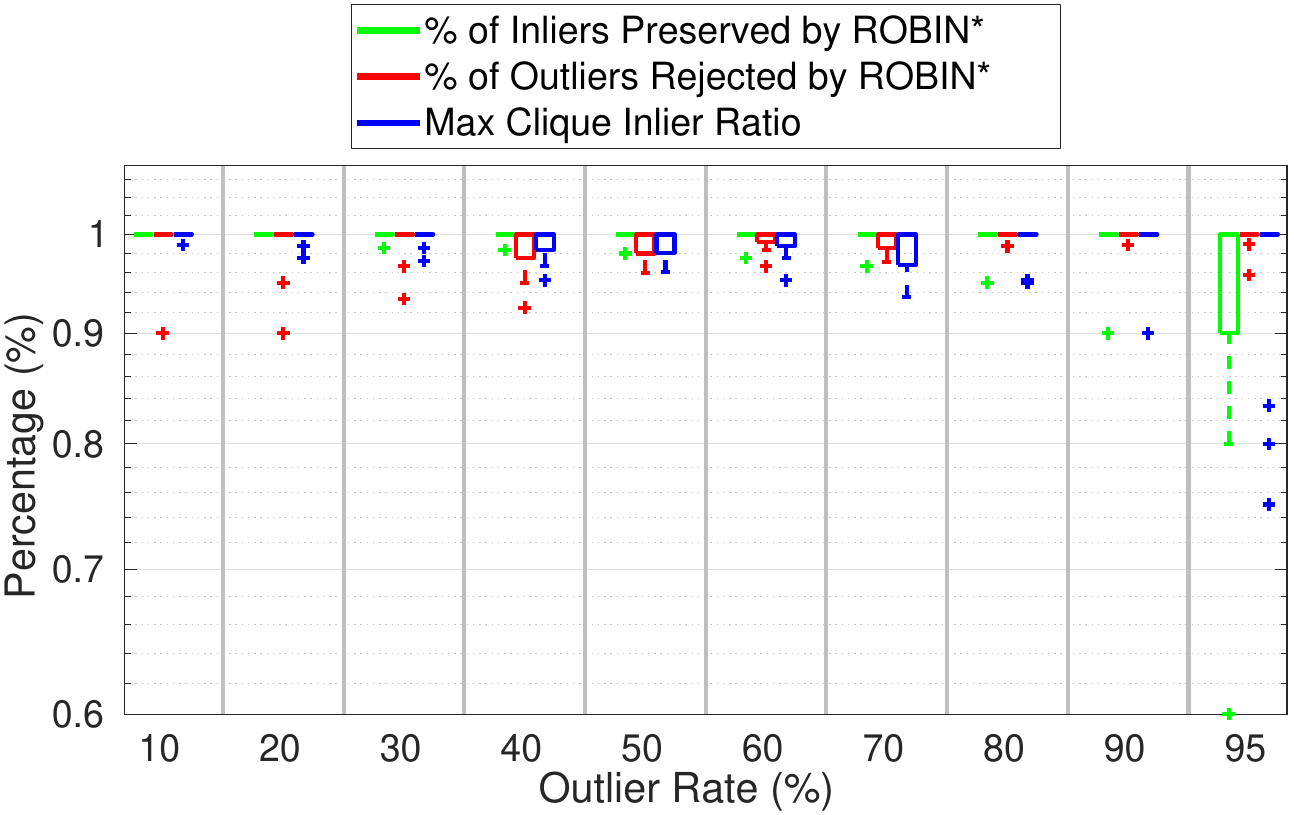} \\
 \vspace{1mm} (b) Percentage of inliers preserved, percentage of outliers rejected, and inlier rate within the maximum clique.
\end{minipage}}
\vspace{1mm}
  \\
  \multicolumn{2}{c}{
\begin{minipage}{0.6\columnwidth}%
			\centering%
			\includegraphics[width=\columnwidth]{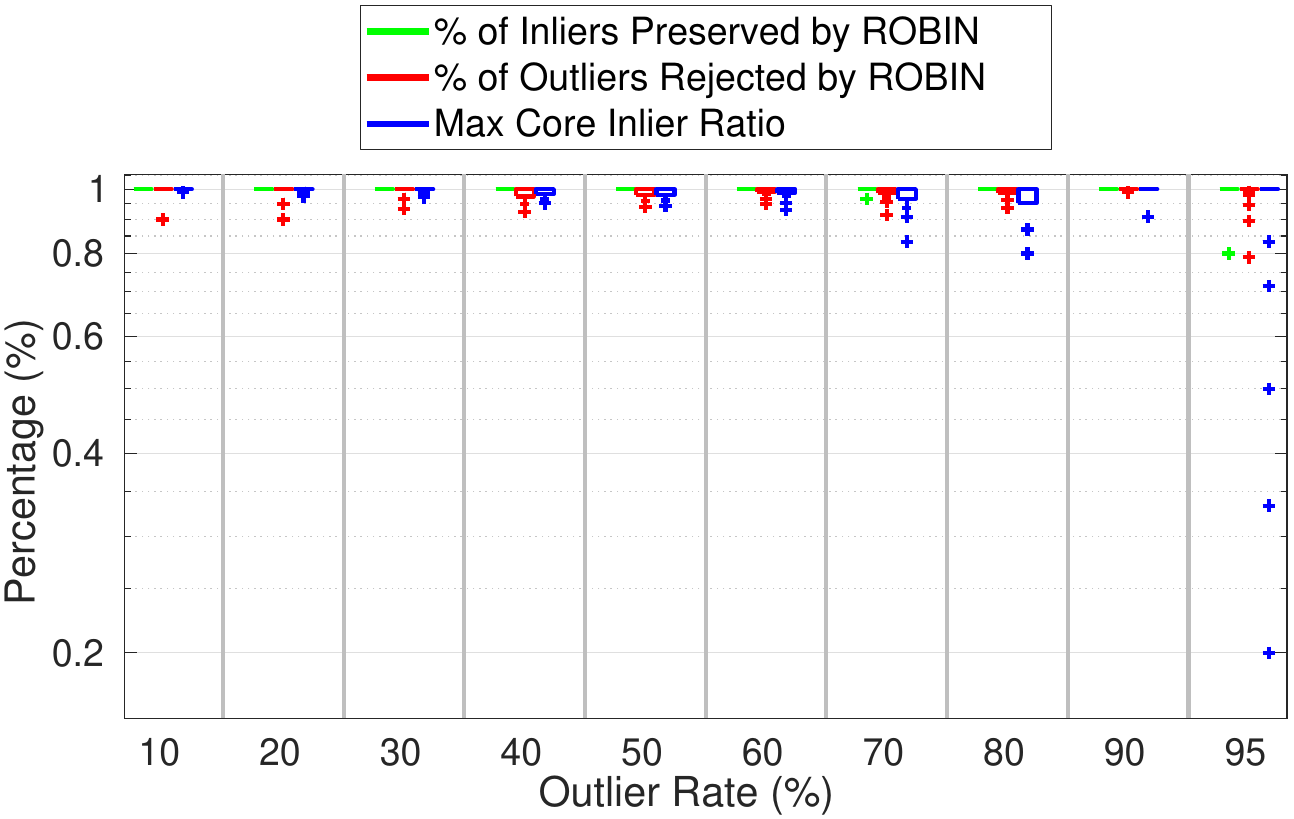} \\
 \vspace{1mm} (c) Percentage of inliers preserved, percentage of outliers rejected, and inlier rate within the maximum $k$-core.
\end{minipage}}
\vspace{1mm}
  \\
\end{tabular}
\end{minipage}
\caption{Performance of \robin and \robinstar on the 2D-3D correspondence rejection problem.}
\label{fig:app-CR-full}
\vspace{-6mm}
\end{figure*}

}{
}

\end{document}